\documentclass{article}
\usepackage[T1]{fontenc}
\usepackage{iclr2027_conference,times}
\usepackage{amsmath,amssymb,amsthm,mathtools}
\usepackage{booktabs,graphicx,microtype}
\usepackage[hidelinks]{hyperref}
\usepackage{url}

\newtheorem{theorem}{Theorem}
\newtheorem{lemma}{Lemma}
\newtheorem{proposition}{Proposition}

\newtheorem{definition}{Definition}
\theoremstyle{remark}

\usepackage{tcolorbox}
\tcbuselibrary{breakable}
\tcbset{
  thmbox/.style={breakable, colback=gray!6, colframe=black!45,
    boxrule=0.4pt, arc=1.5pt, boxsep=0pt, left=5pt, right=5pt, top=0.5pt, bottom=1pt,
    before skip=4pt plus 1pt minus 2pt, after skip=4pt plus 1pt minus 2pt, parbox=false},
  protocolbox/.style={thmbox, colback=teal!5, colframe=teal!60!black,
    title={Protocol: Certify-or-Measure}, fonttitle=\bfseries\small, coltitle=black,
    colbacktitle=teal!12, toptitle=0.5pt, bottomtitle=0.5pt, top=2pt, bottom=2pt},
  precommitbox/.style={colback=gray!4, colframe=black!55, boxrule=0.4pt, arc=1pt,
    boxsep=0pt, left=4pt, right=4pt, top=2pt, bottom=0pt, fonttitle=\footnotesize,
    coltitle=black, colbacktitle=gray!15, toptitle=1pt, bottomtitle=1pt,
    fontupper=\scriptsize, before skip=4pt, after skip=4pt}}
\tcolorboxenvironment{theorem}{thmbox}%

\newcommand{\KL}{D_{\mathrm{KL}}}
\newcommand{\dbin}{d_{\mathrm{bin}}}
\newcommand{\cenc}{c_{\mathrm{enc}}}
\newcommand{\cdec}{c_{\mathrm{dec}}}
\newcommand{\qbar}{\bar q}
\newcommand{\Aq}{\mathcal A_{q}}
\newcommand{\Ap}{\mathcal A_{\mathrm{prior}}}
\newcommand{\Amu}{\mathcal A_{\mathrm{mean}}}
\newcommand{\Agr}{\mathrm{Agr}}
\newcommand{\cZ}{\mathcal Z}
\newcommand{\cX}{\mathcal X}
\newcommand{\E}{\mathbb E}
\newcommand{\R}{\mathbb R}
\newcommand{\Rbar}{\bar R}
\newcommand{\gap}{\bar\Delta}

\title{Perfect Reconstruction Cannot Show\\
Whether Two Codebooks\\ on a VAE's Latent Space Agree}

\author{Yusuke Hayashi\thanks{Also affiliated with Artificial Life Institute and Humanity Brain.}\\
AI Alignment Network}

\iclrfinalcopy
\hypersetup{pdftitle={Perfect Reconstruction Cannot Show Whether Two Codebooks on a VAE's Latent Space Agree},pdfauthor={Yusuke Hayashi}}
\begin{document}
\maketitle
\lhead{Preprint}

\begin{abstract}
Many systems read the latent space of a variational autoencoder (VAE) as
discrete codes, and rely on each latent being read back as the code it was
written as.  \citet{shannon1948} took such a shared codebook as given; inside
a VAE nothing enforces it.  We prove that no item of a standard training
report can reveal whether the written and read codes agree on encoded data.
We construct two
VAEs that reconstruct every input perfectly, infer exactly, and match on every
per-example ELBO and every code histogram, yet agreement is one in the first
and zero in the second.  We then show that the ELBO's own terms bound how far
a measured agreement can move: the variational gap bounds the shift to exact
inference, and the rate bounds the shift to generation from the prior.
For Gaussian posteriors, a mean-field encoder pays an exact price in the gap
that depends on its choice of axes.  On probabilistic PCA the gap certifies a
trained encoder in all 75 settings we test, at agreement between 0.76 and 0.97.
On 240 trained VAEs, a floor set by the rate and the dataset size rules out rate-side
certification in advance for 66 models, and none of them certifies.  The
results yield a simple protocol, Certify-or-Measure, for any pipeline that
reads a VAE as codes.
\end{abstract}

\section{Introduction}
\label{sec:intro}

\begin{figure}[!h]
\centering
\includegraphics[width=0.74\linewidth]{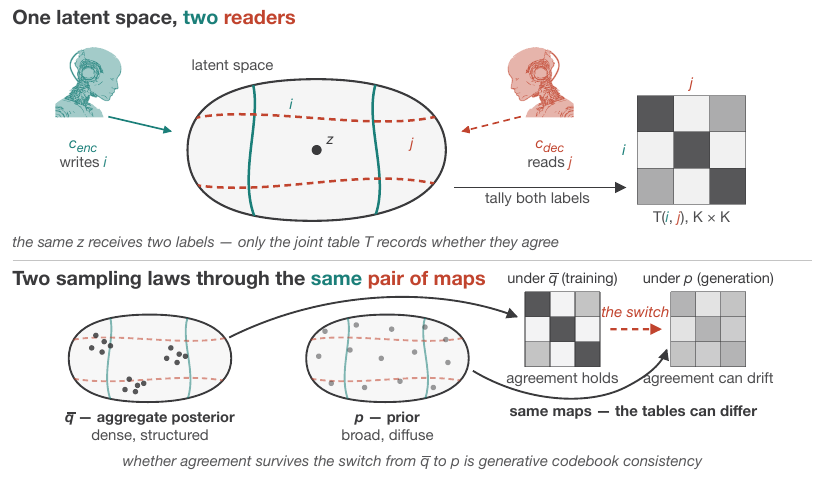}
\caption{\textbf{A pipeline that reads a VAE's latent space as codes holds two
codebooks on one space, and their agreement can change when the model
generates.}  Top: $\cenc$ writes a latent $z$ as code $i$, $\cdec$ reads the
same $z$ as code $j$, and the joint table $T(i,j)$ counts the pairs; agreement
is its diagonal mass.  Bottom: the same maps under the aggregate posterior
$\qbar$ of encoded data (agreement $\Aq$) and under the prior $p$ used for
generation ($\Ap$).  No item of a standard training report reveals $\Aq$
(Theorem~\ref{thm:invisibility}); the rate bounds the drift from $\Aq$ to $\Ap$,
the failure of generative codebook consistency
(Theorem~\ref{thm:certification}b).}
\label{fig:concept}
\end{figure}

\begin{figure}[!b]
\centering
\includegraphics[width=0.86\linewidth]{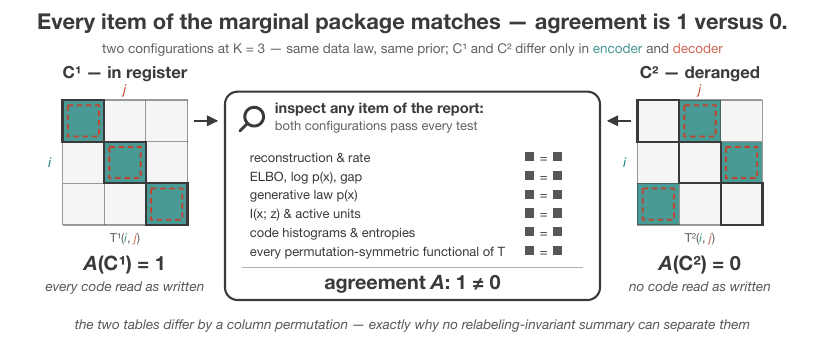}\\[2pt]
\includegraphics[width=\linewidth]{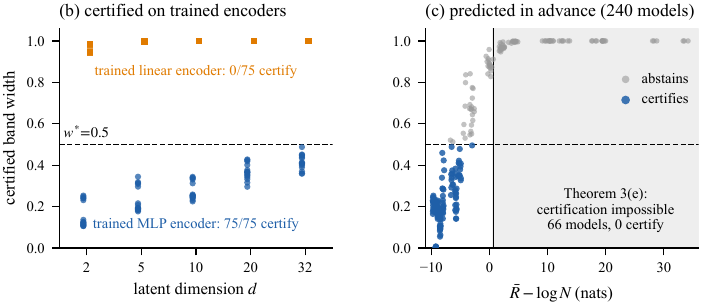}
\caption{\textbf{No item of a standard training report tells the two
configurations in (a) apart, yet the ELBO's own terms certify agreement on
trained encoders (b) and predict in advance where no certificate can exist
(c).}  (a)~Top: the witness pair of Theorem~\ref{thm:invisibility} at $K=3$.
Rows give the code a latent is written as ($\cenc$), columns the code it is
read as ($\cdec$), and agreement is the mass on the outlined diagonal.  Both
configurations share the data, the prior and both code maps, reconstruct every
input perfectly with zero variational gap, and match on every item of the marginal package (Definition~\ref{def:package}).  (b)~Probabilistic PCA, where $\log p(x)$ is
exact: width of the band that the variational gap places around each measured
agreement (Theorem~\ref{thm:certification}a), $15$ cells per latent dimension
($5$ folds, $3$ code maps).  A band certifies when its width is at most
$w^*=1/2$.  The trained mean-field MLP encoder certifies in every cell, and
every band contains the agreement under the exact posterior
(Appendix~\ref{app:nativerank}).  (c)~Rate-side band width for $240$ trained VAEs against $\Rbar-\log N$, the average
rate minus the log size of the audited split.  Theorem~\ref{thm:price}(e)
forbids certification in the shaded region, $\Rbar-\log N>\log2$ (Section~\ref{sec:exp-scale}).}
\label{fig:hero}
\end{figure}

Many systems read the latent space of a learned model as a small set of
discrete codes.  They cluster latents \citep{banerjee2005clustering}, quantize
them into tokens \citep{vandenoord2017neural} or for compression
\citep{farvardin1990vq}, and read latent units as interpretable factors
\citep{higgins2017beta,chen2018isolating}.  Every such pipeline relies on a
contract it rarely states: a latent written as code~$i$ is read back as
code~$i$.  \citet{shannon1948} built communication theory on a codebook that
transmitter and receiver share by assumption, and \citet{weaver1949} set the
meaning of the symbols aside as a separate level.  Inside one learned model,
nothing guarantees that the codebook is shared.  This paper asks whether it
is, and whether anyone would notice if it were not.

A variational autoencoder (VAE) \citep{kingma2014auto,rezende2014stochastic}
is the natural place to ask.  A pipeline that reads its latent space through
$K$ codes declares two \emph{code maps} on that space before it writes
anything.  The encoder-side map $\cenc$ gives the code a latent is written as;
in practice it is fitted on encoder statistics.  The decoder-side map $\cdec$
gives the code the latent is read as; in practice it is a nearest-prototype
rule on decoded outputs.  Both are fixed partitions of the latent space, so
the pipeline holds two codebooks on one space (Figure~\ref{fig:concept}, top).
Their \emph{agreement} is the probability that one latent receives the same
code from both.  We write $\Aq$ for agreement on encoded data and $\Ap$ for
agreement when the model generates from its prior (Figure~\ref{fig:concept},
bottom).  Training never asks for
agreement.  Practitioners watch a standard report instead: the evidence lower
bound (ELBO) of each example with its reconstruction term and its \emph{rate} (its KL divergence to the prior;
\citealp{alemi2018fixing}), the log-likelihood, the information the latent
carries about the input, active units, and how often each code is used.
Reconstruction ties the decoder to the encoder, so a clean report looks like
strong evidence that the two ends read one codebook.

In this work, we show that the report cannot settle the question, and that
two quantities the ELBO already provides bound how far a measured agreement
can move.  Theorem~\ref{thm:invisibility}
gives the negative result (Figure~\ref{fig:hero}a).  For every codebook size
we construct two configurations that share the data, the prior and both code
maps.  Both reconstruct every input perfectly, and in both the
\emph{variational gap}, the amount by which the ELBO falls below
$\log p(x)$, is zero.  They match on every item of the standard report
(Definition~\ref{def:package}), example by example and code by code, and they
generate the same data distribution, so no sample-quality score such as FID
\citep{heusel2017gans} can separate them.  Yet $\Aq$ is one in the first
configuration and zero in the second.  The joint table of written against
read codes, kept in the pipeline's own indices, shows the difference at once.

The positive result is that the variational gap and the rate, read as
\emph{budgets} in nats, bound how far a measured agreement can move
(Theorem~\ref{thm:certification}).  The gap bounds the distance between $\Aq$
and its value under exact inference; the rate bounds the distance between
$\Aq$ and $\Ap$.  Both bounds are stated in the divergence between two coin
flips, so they turn a budget into an interval, and a budget $B$ allows no
interval narrower than $1-e^{-B}$ (Lemma~\ref{lem:width}).  A budget must be a
proved upper bound to be \emph{licensed}; common plug-in estimates are lower
bounds and are not (Lemma~\ref{lem:budgets}).  For Gaussian posteriors,
Theorem~\ref{thm:price} gives the exact price that a mean-field encoder, which
works coordinate by coordinate, pays in the gap: it depends on the choice of axes and vanishes in
the posterior's own axes, while the certified quantities do not depend on the
axes at all.  The same theorem gives a floor: a latent cannot carry more than
$\log N$ nats about the $N$ examples over which the rate is averaged, so a
rate above $\log N+\log 2$ rules out a rate-side band narrower than one half
before any audit is run.

Trained models behave as the theory predicts (Section~\ref{sec:experiments}).
On probabilistic PCA, where the likelihood is exact, the gap certifies a
trained mean-field encoder in all $75$ cells of a sweep over latent
dimension, at agreement between $0.76$ and $0.97$, and every certified band
contains the exact value (Figure~\ref{fig:hero}b).  A VAE trained on MNIST
loses agreement when it generates, $\Aq=0.770$ on encoded data against
$\Ap=0.525$ under the prior over five seeds, and the same drift appears
in probabilistic PCA where every quantity is exact.  On the same probabilistic PCA models, a rotation of the latent axes moves the mean-field budget from $0.000$ to $0.474$ nats while the
certified quantities stay fixed, as Theorem~\ref{thm:price} predicts.  On six SVHN models, a new licensed budget built from the rate is $2.5$ to
$3$ times tighter than the
best previously available one \citep{kolchinsky2017estimating}.  Across $240$
trained VAEs on four datasets, the dataset floor rules out rate-side certification in advance for $66$
models, and none of them certifies (Figure~\ref{fig:hero}c).

\paragraph{Contributions.}
\begin{itemize}
\item We prove that no item of a standard training report, including the
per-example ELBO at perfect reconstruction and exact inference, can reveal
whether a VAE's two code maps agree on encoded data
(Theorem~\ref{thm:invisibility}, Definition~\ref{def:package},
Figure~\ref{fig:hero}a).
\item We show that the variational gap and the rate bound how far measured
agreement can move under exact inference and under generation, prove that a budget $B$ allows no band narrower than $1-e^{-B}$, and state which budget estimates are
licensed (Theorem~\ref{thm:certification}, Lemmas~\ref{lem:width}
and~\ref{lem:budgets}).
\item We price the certificate exactly for Gaussian posteriors under
mean-field computation, and show that the rate and the dataset size decide
in advance when rate-side certification is impossible
(Theorem~\ref{thm:price}).
\item We assemble the results into a protocol, Certify-or-Measure, and test both certificates on trained models: each fires only where the
theory says it can, and none fires where the dataset floor rules it out (Sections~\ref{sec:protocol} and~\ref{sec:experiments},
Figure~\ref{fig:hero}b,c).
\end{itemize}

A clean training report does not show that the code a pipeline writes is the
code it reads.  Log the joint code table, and certify from the ELBO's own
terms when their budgets are small enough.

\section{Setup: two code maps, one channel}
\label{sec:setup}

Data $x\sim\mathcal D$; encoder $q(z\mid x)$ on $\cZ$ with prior $p$;
decoder $p(x\mid z)$; aggregate posterior $\qbar=\E_xq(\cdot\mid x)$.  A
\emph{configuration} is
$C=(\mathcal D,\,q,\,p(x\mid z),\,\cenc,\cdec)$ where
$\cenc,\cdec:\cZ\to[K]$ are fixed measurable code maps.  The maps are
taken as given.  In applications $\cenc$ is fit on encoder statistics and
$\cdec$ is a nearest-prototype rule in the decoder's Bregman output
geometry \citep{banerjee2005clustering}.  Theorem~\ref{thm:certification}
holds for any fixed pair, Theorem~\ref{thm:invisibility} constructs its own,
and the experiments declare theirs.  Each code map partitions $\cZ$ into
$K$ labeled cells, a codebook on the latent space, so the pair gives two
codebooks on one space.

For a sampling law $\mu$ on $\cZ$ the pair $(\cenc,\cdec)$ has joint table
$T^{\mu}(i,j)=\mu(\{\cenc=i\}\cap\{\cdec=j\})$ (Figure~\ref{fig:concept}).  Agreement under $\mu$ is
$\mathcal A[\mu]=\mu(\Agr)$ with $\Agr=\{z:\cenc(z)=\cdec(z)\}$; we write
$\Aq=\mathcal A[\qbar]$ (posterior sampling) and $\Ap=\mathcal A[p]$ (prior
sampling), and $\dbin(a\Vert b)=a\log\frac ab+(1-a)\log\frac{1-a}{1-b}$.
A third reading appears in the experiments: $\Amu$, agreement of the
\emph{mean} reading (both maps evaluated at the posterior means
$\E[z\mid x]$ rather than at draws).

\emph{Notation.}  $[K]=\{1,\dots,K\}$; $S$ is the number of posterior
draws per datum when sampling is Monte Carlo.  Two ELBO-derived scalars
recur: the average rate $\Rbar=\E_x\KL(q(\cdot\mid x)\Vert p)$ and the
average variational gap $\gap=\E_x[\log p(x)-\mathrm{ELBO}(x)]$; $\Rbar$ is logged in every VAE run and $\gap$ wherever $\log p(x)$ is
tractable, and
Section~\ref{sec:certification} reads them as \emph{budgets}; an estimator
is \emph{licensed} when it is a certified upper bound on the quantity it
replaces (Section~\ref{sec:budgets}), and the certified quantity is the
\emph{estimand}.  For a configuration we write
$\mathcal A(C)=\mathcal A[\qbar_C]$, and we apply $\mathcal A$ to a joint
table where convenient, $\mathcal A(T)=\sum_iT(i,i)$, which is the same
functional in table coordinates.

\begin{definition}[Marginal package]\label{def:package}
The \emph{marginal package} $M(C)$ of a configuration comprises:
(m1) per-example reconstruction $\E_{q(z\mid x)}\log p(x\mid z)$;
(m2) per-example rate $\KL(q(\cdot\mid x)\Vert p)$;
(m3) per-example ELBO, marginal likelihood $\log p(x)$, variational gap
$\Delta(x)=\log p(x)-\mathrm{ELBO}(x)$, and the generative law $p(x)$ as a
distribution;
(m4) $I(x;z)$ under the encoder joint and the per-dimension active-units
statistics $\mathrm{AU}_k=\mathrm{Var}_x\,\E[z_k\mid x]$;
(m5) encoder and decoder code histograms under $\qbar$, their entropies and
occupied-code counts;
(m6) every permutation-symmetric functional of the joint table $T^{\qbar}$
(every $F$ with $F(P_1TP_2^{\top})=F(T)$ for permutation matrices $P_1,P_2$),
including $I(\cenc;\cdec)$, all R\'enyi mutual informations, joint entropy,
both histograms as multisets, and the permutation-matched agreement
$\max_{\pi}\sum_iT(i,\pi(i))$;
(m7) prior-law agreement $\Ap$.
\end{definition}

Items (m1)--(m5) are what standard practice logs, and (m6) says that reporting
the joint table up to relabeling does not help either.  The name
\emph{marginal} is short and understates the package: reconstruction, the ELBO
and $\log p(x)$ couple encoder to decoder, (m6) is a functional of the coupled
table, and (m7) is agreement under the prior.

\paragraph{Scope.}
Invisibility concerns agreement in the pipeline's own indices under fixed
maps, the quantity that matters when a pipeline writes code $i$ and expects
code $i$ back.  In the pair of Theorem~\ref{thm:invisibility}(I) the code
histograms are identical index by index, and the two joint tables are column
permutations of each other, so reporting $T$ both detects and repairs this
pair.  Proposition~\ref{prop:a4} gives a closed-form pair with equal
histograms and equal $I(\cenc;\cdec)$ whose tables are not relabelings of each
other; Proposition~\ref{prop:structure} shows that matched summaries also
leave the error structure free.

\section{Invisibility}
\label{sec:invisibility}

\begin{theorem}[Invisibility]\label{thm:invisibility}
\textbf{(I) One pair, whole package, shared data.}
For every $K\ge2$ there exist configurations $C^1,C^2$ with the same data
law and prior, differing only in encoder and decoder, with
$M(C^1)=M(C^2)$---every item of Definition~\ref{def:package}, per example
and in aggregate, to exact equality, including $p(x)$ as a
distribution---while $\mathcal A(C^1)=1$ and $\mathcal A(C^2)=0$.

\textbf{(I$'$) Moment blindness with a sharp threshold.}
For every $K\ge2$ and $m\ge2$ the pair in (I) can be built so that, in
addition: (N1) per-example posterior means are identical example by example,
so every functional of the means---per-dimension AU, their full covariance,
any statistic---coincides; (N2) every aggregate-posterior mixed moment of
degree below $m$ in the coordinates that $\cdec$ reads coincides; (N3) the
threshold is sharp, with explicit degree-$m$ cross moments separating the
pair.  The blindness order $m$ is a design parameter independent of $K$.

\textbf{(II) Orbit form, for trained models.}
Every prior-preserving bimeasurable bijection $\tau$ of $\cZ$ leaves
(m1)--(m3) and $I(x;z)$ exactly invariant for every $x$ (and per-dimension
AU when $\tau^{-1}$ acts as a translation on $\mathrm{supp}(\qbar)$), while
agreement transforms as $\mathcal A(C_\tau)=\qbar(\tau(\Agr))$; the full
statement and proof are in Appendix~\ref{app:orbit}.

\end{theorem}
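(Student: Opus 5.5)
The construction rests on a single \emph{relabeling} symmetry.  Both configurations share the data, the prior and both code maps.  The encoders and decoders differ by a prior-preserving bijection $\tau$ of $\cZ$ that permutes the cells of $\cdec$ cyclically.  Every item of Definition~\ref{def:package} is then invariant, while $\Agr$ is mapped onto a set of $\qbar$-measure zero.

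For (I), take $K$ data points $x_1,\dots,x_K$ with equal mass and deterministic reconstruction.  Let $\cZ=[0,1)$ with the uniform prior.  The code maps are $\cenc(z)=\cdec(z)=\lfloor Kz\rfloor+1$, so both read the same interval cells $I_k=[(k-1)/K,k/K)$.  In $C^1$ the encoder sends $x_k$ uniformly onto $I_k$, and the decoder maps $I_k$ back to $x_k$.  Reconstruction is then perfect, inference is exact (the posterior is the Bayes posterior of this decoder, so $\Delta\equiv0$), $\qbar=p$, and $\mathcal A=1$.

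This choice of code maps does not yet give the second configuration, because agreement under $\qbar=p$ is then forced to be one.  I would therefore use $\cenc\neq\cdec$ on a split space.  Let $\cZ=[0,1)^2$ with the uniform prior, $\cenc(z)=\lfloor Kz_1\rfloor+1$ and $\cdec(z)=\lfloor Kz_2\rfloor+1$.  In $C^1$ the encoder puts $x_k$ uniformly on the diagonal block $I_k\times I_k$ scaled to fill mass, and the support is the $K$ blocks.  Generation from the uniform prior requires the decoder to be defined off the support, and (m7) requires $\Ap$ equal in both configurations.  Since both use the same $p$ and the same maps, $\Ap$ is identical automatically.

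In $C^2$ the encoder puts $x_k$ on $I_k\times I_{k+1}$ (indices mod $K$), and the decoder is transported accordingly.  Concretely $\tau(z_1,z_2)=(z_1,z_2+1/K \bmod 1)$, which preserves the prior, and $C^2$ is $C^1$ pushed forward by $\tau$.  Each per-example quantity in (m1)--(m4) is invariant under such a pushforward.  Reconstruction, the KL to a $\tau$-invariant prior, $\log p(x)$ and the generative law are preserved by change of variables.  $I(x;z)$ is preserved by bijectivity.  The encoder histogram is unchanged and the decoder histogram is cyclically relabeled, which is still uniform, so (m5) matches index by index.  $T^{\qbar}$ changes from $\frac1K I$ to $\frac1K P$ for a cyclic permutation $P$, so every (m6) functional agrees, while $\mathcal A$ falls from $1$ to $0$.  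Exact inference holds because the decoder is deterministic on disjoint blocks, so the true posterior equals $q$.  The decoder on off-support blocks is copied via $\tau$, so $p(x)$ as a distribution matches.

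For (I$'$), (N1) needs identical posterior means per example.  I would replace the block shift by a within-$x_k$ rearrangement: $q(\cdot\mid x_k)$ is supported on several $z_2$-pieces whose masses are chosen so that the means match while the cells differ.  Matching moments of order below $m$ in $z_2$, for (N2), is a classical Tchakaloff/Gauss quadrature design.  Two discrete measures on $K$ (or more) cells share their first $m-1$ moments but differ in cell assignment.  This is possible by choosing the cell widths and atom positions as roots of orthogonal polynomials, which fixes the degree-$m$ moment difference explicitly and gives the sharpness in (N3).  I expect this to be the main obstacle.  Simultaneously keeping the per-example rate, the histograms and exact inference equal while imposing moment matching across all $K$ cells requires a careful symmetric design, probably with $m$ atoms per cell arranged symmetrically.

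Part (II) is the general change-of-variables argument above, with $C_\tau$ defined by $q_\tau=\tau_\#q$ and $p_\tau(x\mid z)=p(x\mid\tau^{-1}z)$.  Agreement becomes $\qbar(\tau^{-1}$-preimage$)$, which equals $\qbar(\tau(\Agr))$ under the paper's convention.  When $\tau^{-1}$ is a translation, the posterior means shift by a constant, so the variance, and hence AU, is unchanged.
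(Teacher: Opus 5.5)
Your Parts (I) and (II) are essentially the paper's argument: the grid witness with a cyclic derangement, and the change of variables behind the orbit form. Two repairs are needed in (I).

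\begin{itemize}
\item Exact inference in $C^1$ requires the off-diagonal blocks to emit atoms outside the data support (the paper's junk pool). Otherwise the exact posterior of $x_k$ is strictly larger than $I_k\times I_k$.
\item Per-dimension AU is not delivered by the transport. $\tau$ is a translation only modulo $1$: it sends $I_K\times I_K$ to $I_K\times I_1$, so the posterior means do not shift by a constant. AU coincides only because the second-coordinate means form the same multiset $\{(k-\tfrac12)/K\}$ in both configurations, and you have to check this directly.
\end{itemize}

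With your convention $q_\tau=\tau_\#q$, agreement is $\qbar(\tau^{-1}(\Agr))$ and the AU hypothesis falls on $\tau$, not $\tau^{-1}$; fix one orientation.

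The genuine gap is Part (I$'$): you give no construction, and the tools you name would not supply one.

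\emph{Posterior form.} With a uniform prior and a deterministic decoder, the exact posterior of $x_k$ is the prior restricted to its emitting set. So $q(\cdot\mid x_k)$ cannot be a discrete quadrature measure (its rate would be infinite and inference no longer exact). Nor is it a free reweighting of pieces; only the shape of the emitting sets is at your disposal.

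\emph{Per-class requirement.} The requirement is per class, not per pair of aggregates. (N2) covers mixed moments with the encoder-side coordinate, so
$\E_{\qbar}[z_1^a z_2^b]=\frac1K\sum_i\E[z_1^a\mid i]\,\E[z_2^b\mid \text{class } c(i)]$
must agree for $c=\mathrm{id}$ and $c=\sigma$, for every $a$ and every $b<m$. The vectors $(\E[z_1^a\mid i])_{i\in[K]}$, $a<K$, span $\R^K$. Hence the class-conditional moments below degree $m$ must be constant on the cycles of $\sigma$, i.e.\ all equal for your cyclic $\sigma$.

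\emph{What is missing.} You need a fixed $\cdec$ whose $K$ classes have prior mass $1/K$ and whose uniform laws agree in every moment below degree $m$ but not at degree $m$. Gauss or Tchakaloff quadrature replaces one measure by one discrete measure and produces no such partition.

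\emph{The paper's construction.} The paper obtains this partition from symmetry. On a uniform annulus in a second plane, cut $Km$ fine sectors and let class $j$ be the sectors congruent to $j$ mod $K$. Each class is then invariant under rotation by $2\pi/m$ and is the rotation of class $0$ by $2\pi j/(Km)$. Harmonic blindness does the rest: rotation by $\omega$ multiplies the $k$-th circular harmonic by $e^{ik\omega}$, and $2\pi/m$-invariance kills every harmonic with $m\nmid k$.
\begin{itemize}
\item Every class moment below degree $m$ is class-independent and every class mean is zero, which gives (N1).
\item Through the product structure of the cells, the same fact gives (N2).
\item At degree $m$ the surviving harmonic carries the phase $e^{2\pi ij/K}$; paired with an encoder-side factor, this gives (N3).
\end{itemize}

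\emph{A one-dimensional alternative.} Staying in your one-dimensional $z_2$ would also work with a Prouhet--Tarry--Escott partition: class the intervals $[n/K^m,(n+1)/K^m)$ by the base-$K$ digit sum of $n$ modulo $K$. But that equal-moment partition, not quadrature, is the ingredient you would have to supply and verify.
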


\paragraph{The grid witness.}
Partition $[0,1)^2$ with a uniform prior into $K^2$ cells, with
$\cenc=\text{row}$ and $\cdec=\text{column}$.  $C^1$ couples the data atoms
to the diagonal cells and $C^2$ to a derangement $\sigma$ of them
(Figure~\ref{fig:hero}a).  Inference is exact in both configurations and
every package item coincides.  This includes $p(x)$, which is uniform over
the same $K^2$ atoms, so no statistic of generated data, from two-sample tests
to sample-quality scores such as FID and KID
\citep{heusel2017gans,binkowski2018demystifying}, can separate the pair.  Yet
agreement is $1$ versus $0$.  Appendix~\ref{app:grid} gives the computation
and shows that Gaussian emissions preserve the equality exactly.

\paragraph{Companions.}
Proposition~\ref{prop:a4} (Appendix~\ref{app:a4}) carries the impossibility
to non-uniform marginals: encoder and decoder histograms are equal to the same
$\nu$ index by index, $I(\cenc;\cdec)$ is matched exactly, and agreement differs ($0$ versus positive when $\max_i\nu_i\le\frac12$),
with exact feasible set $[\max(0,2\max_i\nu_i-1),1]$.
The grid pair still differs in one statistic outside the package, the
per-example posterior means; the interleaved witness of Part~(I$'$) removes
that difference (Lemma~\ref{lem:harmonic}, Appendix~\ref{app:interleaved}).
Appendix~\ref{app:proofs} collects all proofs, and Appendix~\ref{app:scope}
states what the theorem does not claim and records the numerical verification
of both witness constructions to machine precision.

\section{Two-sided certification}
\label{sec:certification}

The standard report cannot detect the failure, even at perfect
reconstruction.  Two scalars derived from the ELBO, the gap and the rate, can
bound it, one each.
Recall $\gap\ge0$ and $\Rbar$ from Section~\ref{sec:setup}; write
$p_\theta(\cdot\mid x)$ for the \emph{exact} posterior of the model,
$\bar p_\theta=\E_x\,p_\theta(\cdot\mid x)$ for its aggregate, and
$\bar\eta_p=\E_x\,p_\theta(\{\cenc\ne\cdec\}\mid x)$ for the
disagreement rate under exact-posterior sampling.  Throughout this section
assume $q(\cdot\mid x)\ll p_\theta(\cdot\mid x)$ for $\mathcal D$-a.e.~$x$.

\begin{theorem}[Two-sided certification]
\textbf{(a) Reconstruction side.}\label{thm:certification}
For any fixed measurable $E\subseteq\cZ$,
$\dbin\bigl(\qbar(E)\,\Vert\,\bar p_\theta(E)\bigr)\le\gap$;
with $E=\{\cenc\ne\cdec\}$,
\[
\dbin\bigl(1-\Aq\,\Vert\,\bar\eta_p\bigr)\;\le\;\gap ,
\]
and in fact $\gap$ decomposes as the sum of this binary-KL term, a Jensen
term, and a within-cell term, all non-negative.

\textbf{(b) Generative side.}
For the same event,
\[
\dbin\bigl(\Aq\,\Vert\,\Ap\bigr)\;\le\;\KL(\qbar\Vert p)\;\le\;\Rbar .
\]
\end{theorem}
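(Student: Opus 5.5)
Both parts rest on the same three facts: the data-processing inequality for KL under the coarsening $z\mapsto\mathbf 1_E(z)$, the convexity of KL for passing from per-example to aggregate laws, and the chain rule for the decomposition. I treat (a) first.

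\textbf{Part (a).} The starting point is the standard identity $\Delta(x)=\log p(x)-\mathrm{ELBO}(x)=\KL\bigl(q(\cdot\mid x)\Vert p_\theta(\cdot\mid x)\bigr)$, which is where the assumption $q(\cdot\mid x)\ll p_\theta(\cdot\mid x)$ enters. Averaging gives $\gap=\E_x\KL(q(\cdot\mid x)\Vert p_\theta(\cdot\mid x))$.

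Joint convexity of KL, or equivalently the chain rule on the joint laws $\mathcal D(x)q(z\mid x)$ and $\mathcal D(x)p_\theta(z\mid x)$ followed by marginalizing out $x$, gives $\KL(\qbar\Vert\bar p_\theta)\le\gap$. Applying data processing to the indicator of a fixed measurable $E$ then yields $\dbin(\qbar(E)\Vert\bar p_\theta(E))\le\KL(\qbar\Vert\bar p_\theta)$. With $E=\{\cenc\ne\cdec\}$ we have $\qbar(E)=1-\Aq$ and $\bar p_\theta(E)=\bar\eta_p$, which is the displayed inequality.

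For the decomposition I write $\gap$ as the KL between the two joints on $(x,z)$. I then apply the chain rule along the coarsening $(x,z)\mapsto(x,\mathbf 1_E(z))$, which splits
\[
\gap=\E_x\,\dbin\bigl(q(E\mid x)\Vert p_\theta(E\mid x)\bigr)+\text{(within-cell term)}.
\]
The within-cell term is the expected KL between the laws of $z$ conditioned on $x$ and on the cell, and it is non-negative.

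The per-example binary term further splits into $\dbin(\qbar(E)\Vert\bar p_\theta(E))$ plus a Jensen term. The Jensen term is the conditional KL of $x$ given the cell bit under the two joint laws on $(x,\mathbf 1_E)$, again by the chain rule, now on the pair (bit, $x$). Each piece is a KL or a conditional KL, so all three are non-negative.

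\textbf{Part (b).} Data processing through $\mathbf 1_{\Agr}$ gives $\dbin(\qbar(\Agr)\Vert p(\Agr))\le\KL(\qbar\Vert p)$, where $\qbar(\Agr)=\Aq$ and $p(\Agr)=\Ap$. For the second inequality I use the identity $\Rbar=\E_x\KL(q(\cdot\mid x)\Vert p)=\KL(\qbar\Vert p)+I(x;z)$, which follows by adding and subtracting $\log\qbar$ inside the expectation. Since $I(x;z)\ge0$, this gives $\KL(\qbar\Vert p)\le\Rbar$; convexity of KL in its first argument would also suffice.

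\textbf{Main obstacle.} The expected difficulty is measure-theoretic care rather than any inequality. Three things need checking:
\begin{itemize}
\item the conditional laws given the cell bit must be well defined on null events, which I handle by using the convention $0\log0=0$ and dropping $x$ with $q(E\mid x)\in\{0,1\}$ from the relevant conditionals;
\item absolute continuity must pass to the aggregates, which holds because $\qbar\ll\bar p_\theta$ follows from the pointwise assumption;
\item the chain rule must hold when terms may be infinite, which is fine because all terms are non-negative, so the identities hold in $[0,\infty]$.
\end{itemize}
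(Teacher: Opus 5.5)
Your proposal is correct and follows the paper's proof. As in the paper, the three-term decomposition comes from the KL chain rule along the coarsening $z\mapsto\mathbf 1_E(z)$, which gives the within-cell term, followed by joint convexity over $x$, which gives the Jensen term; your explicit identification of that Jensen term as the conditional KL of $x$ given the cell bit is a nice touch, and part (b) is the same data-processing step plus the rate decomposition $\Rbar=\KL(\qbar\Vert p)+I(x;z)$. The only cosmetic difference is that your standalone proof of the displayed inequality aggregates first and coarsens second, passing through $\KL(\qbar\Vert\bar p_\theta)$, whereas the paper reads it off the decomposition, passing through $\E_x\dbin(q(E\mid x)\Vert p_\theta(E\mid x))$; both orders are valid.
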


\noindent\emph{Proof.}  Appendix~\ref{app:thm2}; (a) is the binary
specialization of the post-processing decomposition of the gap, (b) is
data processing plus the rate decomposition \citep{hoffman2016elbo}.

Part (a) certifies the posterior-sampling audit and part (b) the generative
one.  Inverting (b) at an observed $\Aq$ and a budget
$B\in\{\KL(\qbar\Vert p),\Rbar\}$ yields a certified band
$\{a:\dbin(\Aq\Vert a)\le B\}\ni\Ap$.  Its one-dimensional form loses nothing,
because by Theorem~\ref{thm:invisibility}(I) richer permutation-symmetric
summaries cannot flag the failure anyway.  All of its leverage comes from the
budget, whose price and floor are the subject of Section~\ref{sec:price}.

\section{The price of certification, and Certify-or-Measure}
\label{sec:price}

Theorem~\ref{thm:certification} is stated in exact quantities, but mean-field
training computes rates coordinate-wise and certificates inherit the coordinate
system.  This section prices that inheritance exactly for Gaussian
posteriors: exactly in linear-Gaussian models, and per example for the
Gaussian approximation otherwise.

Let $\cZ=\R^d$ and the posterior at $x$ be $N(m_x,\Sigma)$ with precision
$\Lambda=\Sigma^{-1}\succ0$ and eigenvalues $\lambda$, condition number
$\kappa$.  For a basis $Q\in O(d)$ let
$\mathcal F_Q=\{N(\mu,Q^{\top}DQ):D\ \text{diagonal}\succ0\}$ be the
mean-field family in basis $Q$, and
$B_Q=\min_{q\in\mathcal F_Q}\KL(q\Vert N(m_x,\Sigma))$ the smallest
certificate budget achievable in that basis.

\begin{theorem}[The coordinate price of certification]\label{thm:price}
Write $\Lambda_Q=Q\Lambda Q^{\top}$.

\textbf{(a) Exact price and worst basis.}
$B_Q=\tfrac12\log\bigl[\det\mathrm{diag}(\Lambda_Q)/\det\Lambda_Q\bigr]$,
attained at $\mu^*=m_x$, $D^*_{ii}=1/(\Lambda_Q)_{ii}$.  Moreover
(i) $B_Q=0$ iff $\Lambda_Q$ is diagonal---exactly on the posterior
eigenbasis, up to axis permutations, signs, and within-eigenspace
rotations;
(ii) $\sup_Q B_Q=\tfrac d2\log\bigl(\mathrm{AM}(\lambda)/
\mathrm{GM}(\lambda)\bigr)$, finite and attained (Schur--Horn,
\citealp{marshall2011inequalities});
(iii) restricting the product family to Gaussians is without loss.

\textbf{(b) Estimand invariance under transport.}
For $R\in O(d)$, transport the configuration: $z\mapsto Rz$, decoder
$g'(w)=g(R^{\top}w)$, posterior $N(Rm_x,R\Sigma R^{\top})$, prior
$R_{\#}p$, and \emph{transported} maps $c'(w)=c(R^{\top}w)$.  Then the
joint law of $(c'_{\mathrm{enc}},c'_{\mathrm{dec}})$ under every
transported sampling law equals the original joint law; in particular
$\Aq$ under full-covariance posteriors, $\Ap$, and every functional of
code-pair laws are exactly invariant.

\textbf{(c) The asymmetry.}
For a fixed model, $Q\mapsto B_Q$ ranges over the closed interval
$\bigl[0,\tfrac d2\log(\mathrm{AM}/\mathrm{GM})\bigr]$ while every
transported estimand is constant; across models the price is unbounded
($\kappa\to\infty$).  A mean-field-budget certificate is a property of the
(model, basis) pair; the certified quantity is a property of the model.
Reporting a mean-field KL budget without its basis is ill-posed; the
price-minimizing basis is the posterior eigenbasis.  The declared code map is the
exact dual.  Theorem~\ref{thm:certification} holds for \emph{every} fixed
measurable pair $(\cenc,\cdec)$ at once, and a rate budget is a functional of
$\qbar$ alone, so changing the map moves the certified \emph{quantity} and leaves
the \emph{budget} fixed---the mirror of the basis, which moves the budget and
leaves the quantity fixed.  Agreement is a property of the (model, declared map)
pair and the budget a property of the model; reporting either without the other
is ill-posed, and Section~\ref{sec:exp-price} measures both halves on one
checkpoint.

\textbf{(d) Informative floor.}
Let $y\in[L]$ be any finite-alphabet label with $y\perp z\mid x$, and let
$\alpha$ be the held-out accuracy of any predictor of $y$ from the
\emph{sampled} code $\cenc(z)$, $z\sim q(\cdot\mid x)$.  Then with
$h_2(\cdot)$ the binary entropy in nats and
$\gamma(\alpha)=H(y)-h_2(1-\alpha)-(1-\alpha)\log(L-1)$ (Fano's
inequality, \citealp{cover2006elements}),
\[
\Rbar\;=\;\KL(\qbar\Vert p)+I(x;z)\;\ge\;I(x;z)\;\ge\;I(x;\cenc(z))
\;\ge\;I(y;\cenc(z))\;\ge\;\gamma(\alpha),
\]
and by Lemma~\ref{lem:width} every band
$\{a:\dbin(\mathcal A\Vert a)\le B\}$ with $B\ge\gamma$ has width at least
$1-e^{-\gamma}$.  At $\gamma\ge1$ nat the width is $\ge1-e^{-1}\approx0.632$
regardless of the observed agreement.

\textbf{(e) Dataset floor.}  If the data law is empirical over $N$ points---the
same $N$ points that define $\qbar=\tfrac1N\sum_iq(\cdot\mid x_i)$ and
$\Rbar$---then $I(x;z)\le H(x)=\log N$, so
$\KL(\qbar\Vert p)\ge\Rbar-\log N$: \emph{every} licensed budget obeys
$B\ge\Rbar-\log N$, the band has width at least $1-e^{-(\Rbar-\log N)}$,
and certification at width $w^*$ is \textbf{impossible} whenever
$\Rbar>\log N-\log(1-w^*)$---no estimator, however tight, can change this.
\end{theorem}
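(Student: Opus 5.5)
The plan is to prove the five parts in order. Parts (a)--(c) are finite-dimensional Gaussian computations, and (d)--(e) are chains of information inequalities closed off by Lemma~\ref{lem:width}.

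For (a), I write $q=N(\mu,S)$ with $S=Q^{\top}DQ$. The Gaussian KL to $N(m_x,\Sigma)$ is
\[
\tfrac12\bigl[\mathrm{tr}(\Lambda S)+(\mu-m_x)^{\top}\Lambda(\mu-m_x)-d-\log\det(\Lambda S)\bigr].
\]
The mean term is minimized at $\mu^*=m_x$. In basis $Q$ we have $\mathrm{tr}(\Lambda S)=\sum_i(\Lambda_Q)_{ii}D_{ii}$, so the objective separates coordinatewise in $D_{ii}$. Stationarity gives $D^*_{ii}=1/(\Lambda_Q)_{ii}$, and substituting back yields $B_Q=\tfrac12\log[\det\mathrm{diag}(\Lambda_Q)/\det\Lambda_Q]$.

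For (i) I use Hadamard's inequality $\det\Lambda_Q\le\prod_i(\Lambda_Q)_{ii}$, with equality iff $\Lambda_Q$ is diagonal. Diagonality means the rows of $Q$ are eigenvectors of $\Lambda$, which gives exactly the stated freedoms: permutations, signs, and rotations within an eigenspace.

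For (ii), Schur--Horn says the achievable diagonals are exactly the vectors majorized by $\lambda$. Since $\sum\log$ is Schur-concave, I will maximize $\sum_i\log(\Lambda_Q)_{ii}$ at the constant diagonal $\bar\lambda\mathbf 1$ with $\bar\lambda=\mathrm{AM}(\lambda)$. That diagonal lies in the permutohedron, so the supremum is attained, and it equals $\tfrac d2\log\mathrm{AM}-\tfrac12\sum\log\lambda_i=\tfrac d2\log(\mathrm{AM}/\mathrm{GM})$.

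For (iii), I need that product measures in basis $Q$ lose nothing by being Gaussian. For product $q=\prod q_i$ in rotated coordinates, $\KL(q\Vert N)=-H(q)+\E_q[\tfrac12(z-m)^{\top}\Lambda(z-m)]+\mathrm{const}$. The quadratic term depends only on the means and variances, because the off-diagonal terms factor under independence. Replacing each $q_i$ by the Gaussian with the same moments leaves that term unchanged and only raises the entropy.

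For (b), I do a pushforward bookkeeping argument. Every transported sampling law is $R_\#\mu$ for the original $\mu$: the posterior, prior, aggregate and mixtures all transport this way. Then $c'(w)=c(R^{\top}w)$ gives $(c'_{\mathrm{enc}},c'_{\mathrm{dec}})(Rz)=(\cenc,\cdec)(z)$, so the joint code laws coincide, and every functional of them is invariant.

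Part (c) is then a corollary. The map $Q\mapsto B_Q$ is continuous on the connected group $SO(d)$, attaining $0$ at the eigenbasis and the supremum from (ii). Sign flips in $O(d)$ do not change $B_Q$, so the image is the whole closed interval. For unboundedness I take $\lambda=(1,\dots,1,\kappa)$, where $\mathrm{AM}/\mathrm{GM}\to\infty$. The duality remark needs only that Theorem~\ref{thm:certification} holds for every pair of maps while $\Rbar$ and $\KL(\qbar\Vert p)$ do not involve the maps.

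For (d), the identity $\Rbar=\KL(\qbar\Vert p)+I(x;z)$ is Hoffman's decomposition, already used in Theorem~\ref{thm:certification}(b). The next two inequalities are data processing along $x\to z\to\cenc(z)$. For $I(x;\cenc(z))\ge I(y;\cenc(z))$ I use the Markov chain $y\to x\to z\to\cenc(z)$, which follows from $y\perp z\mid x$. Fano bounds $H(y\mid\cenc(z))\le h_2(1-\alpha)+(1-\alpha)\log(L-1)$, which gives $\gamma$.

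The step I expect to need care is the width claim: I must make sure that a licensed budget $B$ is at least $\gamma$ in the intended sense before invoking Lemma~\ref{lem:width} at $B\ge\gamma$. The lower bound via $\Rbar$ covers the rate budget. A budget bounding only $\KL(\qbar\Vert p)$ would not obviously exceed $\gamma$, so I would state the claim for budgets $B\ge\gamma$ exactly as written and let Lemma~\ref{lem:width} do the rest. Its width $1-e^{-B}$ is monotone in $B$.

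For (e), the data law is empirical over $N$ points, so $I(x;z)\le H(x)\le\log N$. Plugging this into the same decomposition gives $\KL(\qbar\Vert p)\ge\Rbar-\log N$. Every licensed budget upper-bounds $\KL(\qbar\Vert p)$, so it also exceeds $\Rbar-\log N$. Lemma~\ref{lem:width} then forces the band width to be at least $1-e^{-(\Rbar-\log N)}$, and this exceeds $w^*$ iff $\Rbar>\log N-\log(1-w^*)$.
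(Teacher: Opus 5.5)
Your proposal is correct and follows the paper's proof essentially step for step. It uses the separable Gaussian KL minimization with Hadamard's inequality for (a)(i), Schur--Horn with Schur-concavity for (a)(ii), moment matching with maximum entropy for (a)(iii), the pushforward identity $(c'_{\mathrm{enc}},c'_{\mathrm{dec}})(Rz)=(\cenc,\cdec)(z)$ for (b), and the rate decomposition, data processing, Fano and Lemma~\ref{lem:width} for (d)--(e); your intermediate-value argument on the connected group $SO(d)$ (row sign flips leave $B_Q$ unchanged) also makes explicit the closed-interval claim in (c), which the paper leaves implicit.
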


The optimal mean-field Gaussian fit in (a) is classical
\citep{bishop2006pattern}; what is new is reading its value as a
\emph{price paid in $Q$} and pairing it with the invariance statement (b).

\begin{lemma}[Width]\label{lem:width}
For $a\in[0,1)$ and $B>0$ let $W(a,B)$ be the width of
$\{x:\dbin(a\Vert x)\le B\}$.  Then $W(a,B)\ge1-e^{-B}$, with equality iff
$a=0$.
\end{lemma}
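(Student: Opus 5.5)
The plan is to describe the set $\{x:\dbin(a\Vert x)\le B\}$ explicitly as an interval and then lower-bound its length by its smaller endpoint alone. Fix $a\in[0,1)$ and $B>0$, and set $f(x)=\dbin(a\Vert x)$ for $x\in(0,1)$. Then $f$ is convex in $x$, vanishes only at $x=a$, and satisfies $f'(x)=(x-a)/(x(1-x))$. It tends to $+\infty$ as $x\to1$ (since $a<1$), and also as $x\to0$ when $a>0$. So the sublevel set is a closed interval $[\ell,u]\ni a$ (for $a=0$ it is $[0,u]$), with $W(a,B)=u-\ell$. I will show $u-\ell\ge 1-e^{-B}$ by treating the case $a=0$ first and then reducing the case $a>0$ to it.

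\emph{Case $a=0$.} Here $f(x)=-\log(1-x)$, so the band is $[0,1-e^{-B}]$ and its width is exactly $1-e^{-B}$. This gives the equality case.

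\emph{Case $a\in(0,1)$.} Here I need the strict inequality $u-\ell>1-e^{-B}$. Parametrizing the band as $a\mapsto W(a,B)$ and differentiating looks messy. Instead I plan a direct comparison: I will show that the upper endpoint $u(a)$ reaches past $\ell(a)+(1-e^{-B})$. It suffices to prove
\[
f\bigl(\ell+1-e^{-B}\bigr)<B,
\]
since then $\ell+1-e^{-B}$ lies strictly inside the band. If this point exceeds $1$ the claim is trivial, since $u\le1$ gives nothing directly, but then $u-\ell$ must be compared differently; I would handle this by noting that $f$ finite on $(a,1)$ means $u<1$, so the point in question is at most $1$ whenever it matters. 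The convexity and monotonicity of $x\mapsto f(x)$ on $[a,1)$ reduce the claim to this single evaluation.

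A cleaner route, which I would try first, uses the chain rule for KL. Let $t=1-e^{-B}$. The inequality $f(x)\le B$ compares Bernoulli$(a)$ with Bernoulli$(x)$. For $x\ge a$, monotonicity of $\dbin$ in its first argument on the side below $x$ gives
\[
\dbin(a\Vert x)\le\dbin(0\Vert x)=-\log(1-x),
\]
because $\dbin(\cdot\Vert x)$ is decreasing on $[0,x]$. Hence every $x\in[a,1-e^{-B}]$ lies in the band, so $u\ge1-e^{-B}$. Symmetrically, for $x\le a$, $\dbin(a\Vert x)\le\dbin(1\Vert x)=-\log x$, so every $x\in[e^{-B},a]$ is in the band.

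These two facts alone do not give width $\ge 1-e^{-B}$, since they show containment of $[\max(a,\cdot),\ldots]$. The real step is to show $u-\ell\ge 1-e^{-B}$ by a scaling argument. The affine map $x\mapsto (x-\ell)/(1-\ell)$ sends $[\ell,1)$ onto $[0,1)$, and I would use it to compare with the $a=0$ case. Concretely, I claim $u\ge \ell+(1-\ell)(1-e^{-B})$. This is equivalent to
\[
\dbin\bigl(a\,\Vert\,\ell+(1-\ell)t\bigr)\le B,
\]
and I would prove it from the data-processing inequality. The point is that Bernoulli$(\ell+(1-\ell)t)$ arises from Bernoulli$(t)$ by the channel that flips $0\to1$ with probability $\ell$; I would look for a matching decomposition of $a$ that gives a KL at most $\max(B,\cdot)$. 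Then
\[
u-\ell\ge(1-\ell)t=t-\ell t,
\]
which loses $\ell t$. So this needs refinement, either using both endpoints or differentiating the width.

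Given the difficulty, the fallback I commit to is the derivative method. The endpoints satisfy $f(\ell)=f(u)=B$, and implicit differentiation in $a$ gives
\[
\frac{du}{da}=\frac{\partial_a\dbin}{-\partial_x\dbin}\Big|_u,
\]
with $\partial_a\dbin(a\Vert x)=\log\frac{a(1-x)}{(1-a)x}$, and similarly for $\ell$. I would show $\frac{d}{da}(u-\ell)>0$ at $a=0^+$, and then that $W(a,B)$ is minimized on $[0,1)$ at $a=0$. For the latter I would use the symmetry $W(a,B)=W(1-a,B)$, which follows from the identity $\dbin(a\Vert x)=\dbin(1-a\Vert1-x)$, together with quasi-concavity in $a$. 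The main obstacle is establishing this monotonicity or quasi-concavity of the width in $a$. A lower bound at a single endpoint loses too much, so the two endpoints must be handled jointly.
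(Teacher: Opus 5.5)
\textbf{There is a genuine gap: for $a\in(0,1)$ the proposal never proves the inequality.} Your case $a=0$ is correct. The bounds you actually establish for $a>0$ are $u\ge1-e^{-B}$ and $\ell\le e^{-B}$, and together they give only $W\ge1-2e^{-B}$. The scaling claim $u\ge\ell+(1-\ell)t$ is left unproved, and even if true it would give only $(1-\ell)t<t$. The fallback you commit to needs monotonicity or quasi-concavity of $a\mapsto W(a,B)$. That is the whole content of the lemma, and you leave it open as ``the main obstacle.'' The sign of $\frac{d}{da}(u-\ell)$ at $a=0^+$ is only local information. Both endpoints also increase with $a$, so the sign of $u'-\ell'$ elsewhere is not evident, and the implicit-differentiation route stalls there.

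\textbf{The missing idea.} Your closing remark, that single-endpoint bounds lose too much and the endpoints must be handled jointly, is mistaken. Each endpoint can be bounded separately; your comparisons $\dbin(a\Vert x)\le\dbin(0\Vert x)$ and $\dbin(a\Vert x)\le\dbin(1\Vert x)$ are simply too lossy, because they discard the weights $a$ and $1-a$. Keep the weights, and choose each test point so that one term of
\[
\dbin(a\Vert x)=a\log\tfrac ax+(1-a)\log\tfrac{1-a}{1-x}
\]
carries the budget while the other term is nonpositive.
\begin{itemize}
\item Take $x_0=1-(1-a)e^{-B}\ge a$. The second term equals $(1-a)B$ and the first is $\le0$, so $x_0$ is in the band and $u\ge1-(1-a)e^{-B}$.
\item Take $x_1=ae^{-B/a}\le a$. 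The first term equals exactly $B$ and the second is $\le0$, so $x_1$ is in the band and $\ell\le ae^{-B/a}$.
\end{itemize}
Hence
\[
W(a,B)\;\ge\;x_0-x_1\;=\;(1-a)\bigl(1-e^{-B}\bigr)+a\bigl(1-e^{-B/a}\bigr)\;\ge\;1-e^{-B}.
\]
The last inequality is strict for $a\in(0,1)$, because $B/a>B$ gives $e^{-B/a}<e^{-B}$. This is the paper's proof. It settles the equality case together with your $a=0$ computation, and it needs no information about how $W$ varies with $a$.
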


Inverting it (proof in Appendix~\ref{app:thm3}) gives a \emph{necessary} admission condition: width
$\le w^*$ requires $B\le-\log(1-w^*)$ nats ($0.693$ at $w^*=0.5$).  It is
not sufficient, because the width also depends on the observed agreement.
Writing $B^*(a)$, the \emph{admission budget}, for the largest budget
whose band at agreement $a$ has width $\le w^*$: at $w^*=0.5$,
\[
B^*(0)=0.693,\quad B^*(0.1)=0.370,\quad B^*(0.25)=0.195,\quad B^*(0.5)=0.144,
\]
symmetrically above $\tfrac12$.  The requirement is thus $1.9$--$4.8$ times
\emph{stricter} than the $0.693$ headline across this range, and it is
$B\le B^*(\Aq)$, not the headline, that the protocol of
Section~\ref{sec:protocol} admits on.

Appendix~\ref{app:thm3} proves (a)--(c), with the exact zero-set coset, a
Kantorovich corollary, and the fact that $B_Q$ depends only on
$\mathrm{Corr}(\Lambda_Q)$; the supremum comes from the
Schur--Horn characterization
$\{\mathrm{diag}(Q\Lambda Q^{\top}):Q\in O(d)\}=\{u:u\prec\lambda\}$.
Three scope notes on (b), on refitting versus transporting maps, the
exclusion of mean-field $\Aq$, and padding, are collected in
Appendix~\ref{app:price-scope}.

\subsection{Which budgets are licensed}
\label{sec:budgets}

Theorem~\ref{thm:certification}(b) admits any $B\ge\KL(\qbar\Vert p)$, so the
protocol's soundness rests entirely on the estimator being an \emph{upper}
bound.  The two most natural tightenings are both wrong in the direction that
matters.

\begin{lemma}[Budget validity]\label{lem:budgets}
Let $f$ be any measurable coarsening of $\cZ$ and $\tilde q$ the
moment-matched Gaussian of $\qbar$.  Then
\[
\KL(f_{\#}\qbar\Vert f_{\#}p)\;\le\;\KL(\qbar\Vert p),\qquad
\KL(\tilde q\Vert p)\;\le\;\KL(\qbar\Vert p)\ \text{when }p\text{ is Gaussian},
\]
so binned/grid plug-ins (data processing) and moment-matched Gaussian plug-ins
(maximum entropy at fixed moments) are \emph{lower} bounds and are not licensed
budgets.  By contrast $\Rbar$ is licensed but loose by exactly $I(x;z)$; and
for any certified lower bound $\hat I\le I(x;z)$, $\;\Rbar-\hat I$ is licensed.
\end{lemma}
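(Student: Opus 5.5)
The lemma has four claims, and I would prove them one at a time.

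\emph{Claim 1: coarsening.} This is the data-processing inequality for KL divergence under the deterministic map $f$. Concretely, I would apply the chain rule to the joint laws of $(z,f(z))$ under $\qbar$ and under $p$. That gives $\KL(\qbar\Vert p)=\KL(f_{\#}\qbar\Vert f_{\#}p)+\E_{f_{\#}\qbar}\KL(\qbar(\cdot\mid f)\Vert p(\cdot\mid f))$. The second term is non-negative, so the inequality follows. The same decomposition already underlies Theorem~\ref{thm:certification}(b), so I would cite it rather than redo it. If $\qbar\not\ll p$, the right-hand side is $+\infty$ and the claim is trivial.

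\emph{Claim 2: moment matching.} Let $p=N(m_0,\Sigma_0)$ and let $\tilde q=N(\bar m,\bar S)$, where $\bar m$ and $\bar S$ are the mean and covariance of $\qbar$. Assume $\qbar$ has a density and finite second moments. I would write
\[
\KL(\qbar\Vert p)=-h(\qbar)-\E_{\qbar}\log p .
\]
Since $\log p$ is a quadratic polynomial in $z$, the cross term $\E_{\qbar}\log p$ depends only on the first two moments of $\qbar$, so it equals $\E_{\tilde q}\log p$. The Gaussian maximises differential entropy at fixed covariance, so $h(\qbar)\le h(\tilde q)$. Combining the two facts gives
\[
\KL(\qbar\Vert p)-\KL(\tilde q\Vert p)=h(\tilde q)-h(\qbar)=\KL(\qbar\Vert\tilde q)\ge 0 .
\]
This is the main technical point. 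It needs $\bar S\succ0$; if $\bar S$ is singular, $\qbar$ has no density, $\KL(\qbar\Vert p)=\infty$, and the inequality holds trivially. Both claims show that these plug-ins are lower bounds. A lower bound cannot serve as a licensed budget, because Theorem~\ref{thm:certification}(b) needs $B\ge\KL(\qbar\Vert p)$.

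\emph{Claim 3: the rate.} This is the rate decomposition already quoted in Theorem~\ref{thm:price}(d):
\[
\Rbar=\E_x\KL(q(\cdot\mid x)\Vert\qbar)+\KL(\qbar\Vert p)=I(x;z)+\KL(\qbar\Vert p).
\]
It follows from inserting $\log\qbar(z)$ into the log-ratio and taking expectations. So $\Rbar$ is an upper bound, and its slack is exactly $I(x;z)$.

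\emph{Claim 4: subtracting a certified lower bound.} If $\hat I\le I(x;z)$, then $\Rbar-\hat I\ge\Rbar-I(x;z)=\KL(\qbar\Vert p)$, so $\Rbar-\hat I$ is licensed. The only care needed is the case where $I(x;z)$ or $\Rbar$ is infinite; I would assume $\Rbar<\infty$, which forces both terms in the decomposition to be finite.
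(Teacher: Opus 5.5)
Your proposal is correct and follows the paper's proof step for step: data processing for the coarsening, the quadratic cross term together with maximum entropy at fixed covariance for the moment-matched Gaussian, and the rate decomposition $\Rbar=\KL(\qbar\Vert p)+I(x;z)$ for the last two clauses. Your additions are correct refinements that the paper leaves implicit: the chain-rule form of the slack in the first claim, the identity $\KL(\qbar\Vert p)-\KL(\tilde q\Vert p)=\KL(\qbar\Vert\tilde q)$ that gives the exact slack in the second, and the handling of singular covariance and infinite rates.
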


The last clause is the constructive one
(proof in Appendix~\ref{app:budgets-proof}).  For a VAE, $\qbar$ is a finite
Gaussian mixture with an exactly computable density, so
$I(x;z)=\E_{i,\,z\sim q_i}[\log q_i(z)-\log\qbar(z)]$ is estimated
\emph{without bias}, unlike contrastive bounds (capped at $\log M$;
\citealp{oord2018cpc,poole2019variational}) or
small-mixture plug-ins (floored at $\Rbar-\log N'$).  The statistic is
clipped below at $-c$, which can only \emph{raise} its mean, and by at most
$e^{-c}$ (Lemma~\ref{lem:clip}); subtracting that amount from an
empirical-Bernstein \citep{maurer2009empirical,audibert2009exploration}
lower confidence bound on the clipped mean gives a certified $\hat I$, and
the resulting mutual-information budget $B_{\mathrm{MI}}=\Rbar-\hat I$ is
licensed at level $1-\delta_B$ (Appendix~\ref{app:mi-lemmas}).  This is
what moves the protocol off its abstention.

\subsection{The Certify-or-Measure protocol}
\label{sec:protocol}

\begin{tcolorbox}[protocolbox]
\noindent Inputs: an estimate
$\hat{\mathcal A}_q$ of $\Aq$ with a $(1-\delta_q)$ confidence interval
$C_q$; a budget $B$ licensed at level $1-\delta_B$ ($\delta_B=0$ for a
logged $\Rbar$); a pre-declared width threshold $w^*\in(0,1)$, failure
probability $\delta$, and draw count $n$.
\textbf{(1)} Form the band $U=\bigcup_{a\in C_q}\{b:\dbin(a\Vert b)\le B\}$,
an interval (Theorem~\ref{thm:certification}b, Lemma~\ref{lem:aq}).
\textbf{(2)} If its width is $\le w^*$
(for an exact $\Aq$, equivalently $B\le B^*(\Aq)$, the admission budget
of Section~\ref{sec:price}), \textbf{certify}: report $U$, which contains
$\Ap$ with probability $\ge1-\delta_B-\delta_q$.
\textbf{(3)} Else \textbf{abstain and measure}: draw $z_1,\dots,z_n\sim p$,
read both codes, report $\hat{\mathcal A}\pm\sqrt{\ln(2/\delta)/(2n)}$
\citep{dvoretzky1956asymptotic}, at one decoder pass per draw.
\end{tcolorbox}

The protocol is sound in both branches: the certify branch by
Theorem~\ref{thm:certification}(b) and a union bound over the two
Monte-Carlo failure events (budget and agreement estimate, both on the
audited split; Appendix~\ref{app:mi-lemmas}), the measure
branch by DKW.  Its abstention region is a theorem, not an inconvenience.  Under the rate
budget, Theorem~\ref{thm:price}(d) \emph{forces} abstention for
informative codebooks.  The tighter budget $\KL(\qbar\Vert p)$ carries no
such floor, since a model whose aggregate posterior matches its prior
certifies legitimately at any informativeness, so the protocol prefers it
when it can be estimated.  Coordinate inflation of a
mean-field budget (Theorem~\ref{thm:price}a,c) and finite-sample
resolution floors only widen bands, so certify decisions remain sound.

\section{Experiments}
\label{sec:experiments}

Section~\ref{sec:exp-e1} shows the drift that Theorem~\ref{thm:certification}(b)
bounds; Sections~\ref{sec:exp-protocol} and~\ref{sec:exp-price} test the
theory's predictions on trained models.
Theorem~\ref{thm:invisibility} is an exact construction and is verified by
script rather than by training.  Uncertainty is mean $\pm$ standard error over
seeds or folds with $n$ stated, and every number is transcribed from the
per-run files.  Appendix~\ref{app:experiments} gives the full audit record, including the
precommitted readings and every disclosed departure from them.

\subsection{Agreement drifts when a trained VAE generates}
\label{sec:exp-e1}

Figure~\ref{fig:concept} (bottom) sketches agreement that holds on encoded
data and drifts when the model generates; a trained VAE shows this drift.  We audit five convolutional VAEs trained on
MNIST ($d=10$, $K=10$, one per seed).  $\cenc$ is a diagonal
Gaussian mixture on encoder means and $\cdec$ is the Bernoulli--Bregman
nearest-prototype rule on decoded outputs, both in their own indices, read
on $10{,}000$ test images with $S=16$ posterior and $20{,}000$ prior draws.
Agreement survives posterior sampling but degrades under the prior:
$\Aq=0.770\pm0.009$ versus $\Ap=0.525\pm0.007$ ($n=5$).  A task probe drops
in step, from $0.700\pm0.029$ to $0.390\pm0.019$, and a label-free prototype
control preserves the drop ($0.696\pm0.029\to0.375\pm0.027$), so the drift is
not an artifact of the label map.  The same drift appears in probabilistic
PCA, where every quantity is exact and no estimator can be blamed:
$\mathcal A[\qbar]-\Ap=+0.128$ to $+0.338$ across three code maps at $d=10$.

\subsection{Certificates fire where the theory says they can}
\label{sec:exp-protocol}

\textbf{The reconstruction side certifies trained encoders at useful
agreement.}  Since $\min_aB^*(a)=B^*(0.5)=0.1438$, any budget up to $0.1438$ certifies at
every agreement.  A trained mean-field MLP encoder on probabilistic PCA
($d=10$, canonical basis) has $\bar\Delta=0.088$ nats and certifies unconditionally.
Swept over latent dimension, the same certificate fires in $75/75$ cells at
$d\in\{2,5,10,20,32\}$ and agreement $0.76$--$0.97$, against $0/75$ for a trained \emph{linear}
encoder, with widths growing $0.106\to0.487$ and crossing $w^*$ just above
$d=32$, and every band contains the agreement under the exact posterior
(Figure~\ref{fig:hero}b, Appendix~\ref{app:nativerank}).  This side needs a
certified \emph{upper} bound on $\log p(x)$, which tractable models provide.

\textbf{On the rate side, licensing decides.}  Lemma~\ref{lem:budgets}
predicts that unlicensed plug-in budgets fail, and they do: a grid plug-in
certifies $20/20$ checkpoints it has no right to, and a moment-matched plug-in
excludes the truth in $30/30$ conditions.  The licensed mutual-information
budget of Section~\ref{sec:budgets} is sharply tighter than the best previously
available licensed budget \citep{kolchinsky2017estimating}: on six SVHN models,
$B_{\mathrm{MI}}=0.519$--$0.606$ nats against $1.467$--$1.555$.  By
Lemma~\ref{lem:width} the previous budget clears the necessary condition on
$0/6$ models: it admits no band narrower than $0.769$, whatever the agreement
and however the estimator is refined.  Ours lowers that floor to $\le0.455$ and clears the necessary
condition for half width on $6/6$.  On one of the six models, SVHN $d=32$, a nonlinear convolutional VAE, the
rate-side certificate fires (Appendix~\ref{app:record}).  The binary-KL form
matters: at that model's budget and point-estimate agreement, Pinsker gives
width $0.553$ and Bretagnolle--Huber $0.679$, both above $w^*$, where
binary~KL gives $0.487$.

\subsection{The price and the floor, measured}
\label{sec:exp-price}
\label{sec:exp-scale}

\textbf{A change of axes moves the budget and not the quantity.}  On
linear-Gaussian models, where Theorem~\ref{thm:price} is exact, a fixed DCT
rotation of the same model moves the mean-field budget from $0.000$ to $0.474$
nats and voids the reconstruction-side certificate for all three map
constructions, while the estimands move by at most $8\times10^{-4}$.  The
$O(d)$-closed full-covariance family is unaffected, and a trained mean-field
MLP encoder pays a coordinate price too, $0.088\to0.673$ nats
(Table~\ref{tab:price}, Appendix~\ref{app:price-models}).

\textbf{A change of code map moves the quantity and not the budget.}  This is
the dual in Theorem~\ref{thm:price}(c).  Across $240$ trained VAEs, changing
the declared map on the same checkpoint moves $\Aq$ by a median $0.565$ over
four declared maps ($239/240$ above $0.1$), while the rate budget, a
functional of $\qbar$ alone, stays where it was.  Agreement should therefore
always be reported with its map.

\textbf{The dataset floor predicts which models cannot be certified.}  We
trained $240$ fresh VAEs on FashionMNIST, MNIST, SVHN and CIFAR-10, up to the
full training split, with $d\in\{16,32,64\}$, $K\in\{10,32\}$, KL weights
$\beta$ up to $64$ and three seeds, and fixed every reading beforehand
(Appendix~\ref{app:scale}).  Theorem~\ref{thm:price}(e) forbids certification
once $\Rbar>\log N+\log2$.  That inequality holds for $66$ of the $240$
models, and none of them certifies (Figure~\ref{fig:hero}c).  Which side of the line a family falls on is not a matter of latent
dimension: on FashionMNIST $\Rbar-\log N$ reaches $+20.0$ nats and $36$ models
are excluded outright, while on SVHN it is negative throughout, $-9.7$ to
$-2.9$, and none is.  The floor is also attained, not merely respected: on six further FashionMNIST models (Appendix~\ref{app:loosest}), the measured $\hat I$ sits within
$0.01$ nats of $\log N$, so $\KL(\qbar\Vert p)=\Rbar-\log N$ to that accuracy.

\section{Related work}
\label{sec:related}

Mismatched decoding \citep{scarlett2020mismatched} and noisy-channel
quantization \citep{farvardin1990vq} name this failure mode for external
channels; we move it inside one VAE, between its encoder side and its decoder
side (VQ-VAE, \citealp{vandenoord2017neural}, is the constrained endpoint).
Standard diagnostics all sit inside Definition~\ref{def:package}: the ELBO
and IWAE \citep{kingma2014auto,rezende2014stochastic,burda2016importance},
rate--distortion decompositions \citep{alemi2018fixing}, disentanglement
scores \citep{higgins2017beta,chen2018isolating}, information-theoretic
summaries \citep{tishby2000information,cover2006elements,polyanskiy2024information},
and the mismatch between prior and aggregate posterior
\citep{hoffman2016elbo,tomczak2018vampprior,bauer2019resampled}.  Some of
these couple encoder and decoder, but only at the continuous level, so by
Theorem~\ref{thm:invisibility} none determines agreement between the codes.
Identifiability theory
\citep{hyvarinen2019nonlinear,khemakhem2020ivae,locatello2019challenging}
states a related indeterminacy asymptotically, and
Theorem~\ref{thm:invisibility}(II) is its finite-population complement.
Representation alignment
\citep{moschella2022relative,fumero2024latentfunctional,fumero2026latentdynamics,huh2024platonic}
compares different models, whereas agreement is measured inside one model
under its declared code maps.  A prior preprint \citep{hayashi2026codebook} already contains the coupled
channel, Theorem~\ref{thm:certification}(a) with its decomposition, and a
narrower impossibility result over code histograms, their entropies, occupied codes and $I(x;z)$,
and notes that a matched score can hide a codebook read entirely out of place.
Definition~\ref{def:package} closes that case; the generative side, and with
it the two-sided certificate, the price, the admission budget, the protocol
and all experiments are new.  Training-time consistency fixes
\citep{cemgil2020avae,wu2020coupled} and detection of collapse on the
marginal side \citep{dang2024posterior} neither certify nor measure the
coupled channel.  Decoder-side map geometry follows
\citet{banerjee2005clustering,nielsen2010bregman}, and certify-or-abstain,
standard in conformal and selective prediction
\citep{vovk2005algorithmic,geifman2017selective}, appears here with an exactly
characterized abstention region.

\section{Conclusion}
\label{sec:limitations}

A VAE can reconstruct every input perfectly, match its twin on every item of
a standard report, and still read none of its codes back as written.  The
ELBO's own terms, read as budgets, bound how far a measured agreement can
move, at a price and a floor the theory states in advance, and trained models
behave as predicted.  The reconstruction side needs a certified upper bound
on $\log p(x)$, so outside tractable models the protocol runs on the rate
side, where certifying a working codebook calls for a prior close to the aggregate posterior (Appendix~\ref{app:window}).  The practical rule is
short: log the joint code table in the pipeline's own indices, report
agreement with its code map and a mean-field budget with its axes, and use
Certify-or-Measure to decide whether to certify or to measure.

\enlargethispage{\baselineskip}
\paragraph{Reproducibility statement.}
All theorem-level claims are accompanied by numerical verification scripts
with saved logs (grid witness at $K\in\{2,3,5,8\}$; interleaved witness at
all $(K,m)\in\{2,3\}\times\{3,4,5\}$ with a radius-robustness check;
closed-form-versus-numeric optimality, exact supremum, degenerate spectra,
width-lemma bounds, and transport invariance for
Theorem~\ref{thm:price}).  The replication grid and the
protocol run were preregistered (model sets, readings, and verdict criteria
fixed before execution; the readings are reproduced verbatim in
Figure~\ref{fig:precommit}); the three places where execution departed from the
precommit are disclosed in Appendix~\ref{app:experiments}, with the
sensitivity of every affected conclusion measured rather than asserted.
Per-run CSVs, precommit documents, deviation disclosures, and the analysis
ledger are included in the supplementary material (on arXiv, the
ancillary files), and every number in this paper is
transcribed from those primary files.

\paragraph{Ethics statement.}
This work is a theoretical and empirical audit of representation
consistency in generative models trained on standard public benchmarks
(MNIST, FashionMNIST, SVHN, CIFAR-10); it involves no human subjects, no personal
data, and no foreseeable direct dual-use concern.  Its intended effect is
to make silent failures of discrete latent interfaces measurable and
disclosed rather than assumed away, which we view as ethically positive
for downstream deployments that read codes from generative models.

\paragraph{AI use statement.}
Large language models were used as writing, coding and checking assistants
throughout: drafting and revising prose; writing, refactoring and running
analysis and verification scripts; maintaining the results ledgers; and
adversarially checking derivations and stated numbers against the primary
result files (one such check found the sign error in the clip correction
disclosed in Appendix~\ref{app:experiments}).  All
theorems, proofs, experimental designs, and conclusions were specified,
verified, and approved by the author; every number in the paper is transcribed from the primary logs, the
verification script included in the supplement automatically re-checks most
of them against those files, and the author takes full responsibility for
the content.

\bibliography{references}

\begin{thebibliography}{45}
\providecommand{\natexlab}[1]{#1}
\providecommand{\url}[1]{\texttt{#1}}
\expandafter\ifx\csname urlstyle\endcsname\relax
  \providecommand{\doi}[1]{doi: #1}\else
  \providecommand{\doi}{doi: \begingroup \urlstyle{rm}\Url}\fi

\bibitem[Alemi et~al.(2018)]{alemi2018fixing}
Alexander~A. Alemi et~al.
\newblock Fixing a broken {ELBO}.
\newblock In \emph{International Conference on Machine Learning (ICML)}, 2018.

\bibitem[Audibert et~al.(2009)Audibert, Munos, and
  Szepesv{\'a}ri]{audibert2009exploration}
Jean-Yves Audibert, R{\'e}mi Munos, and Csaba Szepesv{\'a}ri.
\newblock Exploration-exploitation tradeoff using variance estimates in
  multi-armed bandits.
\newblock \emph{Theoretical Computer Science}, 410\penalty0 (19):\penalty0
  1876--1902, 2009.

\bibitem[Banerjee et~al.(2005)Banerjee, Merugu, Dhillon, and
  Ghosh]{banerjee2005clustering}
Arindam Banerjee, Srujana Merugu, Inderjit~S. Dhillon, and Joydeep Ghosh.
\newblock Clustering with {B}regman divergences.
\newblock \emph{Journal of Machine Learning Research}, 6:\penalty0 1705--1749,
  2005.

\bibitem[Bauer \& Mnih(2019)Bauer and Mnih]{bauer2019resampled}
Matthias Bauer and Andriy Mnih.
\newblock Resampled priors for variational autoencoders.
\newblock In \emph{International Conference on Artificial Intelligence and
  Statistics (AISTATS)}, 2019.

\bibitem[Bi{\'n}kowski et~al.(2018)Bi{\'n}kowski, Sutherland, Arbel, and
  Gretton]{binkowski2018demystifying}
Miko{\l}aj Bi{\'n}kowski, Danica~J. Sutherland, Michael Arbel, and Arthur
  Gretton.
\newblock Demystifying {MMD} {GANs}.
\newblock In \emph{International Conference on Learning Representations}, 2018.

\bibitem[Bishop(2006)]{bishop2006pattern}
Christopher~M. Bishop.
\newblock \emph{Pattern Recognition and Machine Learning}.
\newblock Springer, 2006.

\bibitem[Burda et~al.(2016)Burda, Grosse, and
  Salakhutdinov]{burda2016importance}
Yuri Burda, Roger Grosse, and Ruslan Salakhutdinov.
\newblock Importance weighted autoencoders.
\newblock In \emph{International Conference on Learning Representations
  (ICLR)}, 2016.

\bibitem[Cemgil et~al.(2020)Cemgil, Ghaisas, Dvijotham, Gowal, and
  Kohli]{cemgil2020avae}
A.~Taylan Cemgil, Sumedh Ghaisas, Krishnamurthy Dvijotham, Sven Gowal, and
  Pushmeet Kohli.
\newblock The autoencoding variational autoencoder.
\newblock In \emph{Advances in Neural Information Processing Systems
  (NeurIPS)}, 2020.

\bibitem[Chen et~al.(2018)Chen, Li, Grosse, and Duvenaud]{chen2018isolating}
Ricky T.~Q. Chen, Xuechen Li, Roger~B. Grosse, and David~K. Duvenaud.
\newblock Isolating sources of disentanglement in variational autoencoders.
\newblock In \emph{Advances in Neural Information Processing Systems
  (NeurIPS)}, 2018.

\bibitem[Cover \& Thomas(2006)Cover and Thomas]{cover2006elements}
Thomas~M. Cover and Joy~A. Thomas.
\newblock \emph{Elements of Information Theory}.
\newblock Wiley, 2nd edition, 2006.

\bibitem[Cremer et~al.(2018)Cremer, Li, and Duvenaud]{cremer2018inference}
Chris Cremer, Xuechen Li, and David Duvenaud.
\newblock Inference suboptimality in variational autoencoders.
\newblock In \emph{International Conference on Machine Learning}, 2018.

\bibitem[Dang et~al.(2024)Dang, Tran, Nguyen, and Ho]{dang2024posterior}
H.~Dang, T.~Tran, T.~Nguyen, and N.~Ho.
\newblock Beyond vanilla variational autoencoders: Detecting posterior collapse
  in conditional and hierarchical variational autoencoders.
\newblock In \emph{International Conference on Learning Representations
  (ICLR)}, 2024.

\bibitem[Dvoretzky et~al.(1956)Dvoretzky, Kiefer, and
  Wolfowitz]{dvoretzky1956asymptotic}
Aryeh Dvoretzky, Jack Kiefer, and Jacob Wolfowitz.
\newblock Asymptotic minimax character of the sample distribution function and
  of the classical multinomial estimator.
\newblock \emph{Annals of Mathematical Statistics}, 27\penalty0 (3):\penalty0
  642--669, 1956.

\bibitem[Farvardin(1990)]{farvardin1990vq}
Nariman Farvardin.
\newblock A study of vector quantization for noisy channels.
\newblock \emph{IEEE Transactions on Information Theory}, 36\penalty0
  (4):\penalty0 799--809, 1990.

\bibitem[Fumero et~al.(2024)Fumero, Pegoraro, Maiorca, Locatello, and
  Rodol{\`a}]{fumero2024latentfunctional}
Marco Fumero, Marco Pegoraro, Valentino Maiorca, Francesco Locatello, and
  Emanuele Rodol{\`a}.
\newblock Latent functional maps: A spectral framework for representation
  alignment.
\newblock In \emph{Advances in Neural Information Processing Systems
  (NeurIPS)}, 2024.

\bibitem[Fumero et~al.(2026)Fumero, Moschella, Rodol{\`a}, and
  Locatello]{fumero2026latentdynamics}
Marco Fumero, Luca Moschella, Emanuele Rodol{\`a}, and Francesco Locatello.
\newblock Navigating the latent space dynamics of neural models.
\newblock \emph{arXiv:2505.22785}, 2026.

\bibitem[Geifman \& El-Yaniv(2017)Geifman and El-Yaniv]{geifman2017selective}
Yonatan Geifman and Ran El-Yaniv.
\newblock Selective classification for deep neural networks.
\newblock In \emph{Advances in Neural Information Processing Systems}, 2017.

\bibitem[Hayashi(2026)]{hayashi2026codebook}
Yusuke Hayashi.
\newblock Lost and found in translation: Variational diagnostics for neural
  codebook channels.
\newblock \emph{arXiv:2605.18846}, 2026.

\bibitem[Heusel et~al.(2017)Heusel, Ramsauer, Unterthiner, Nessler, and
  Hochreiter]{heusel2017gans}
Martin Heusel, Hubert Ramsauer, Thomas Unterthiner, Bernhard Nessler, and Sepp
  Hochreiter.
\newblock {GANs} trained by a two time-scale update rule converge to a local
  {N}ash equilibrium.
\newblock In \emph{Advances in Neural Information Processing Systems}, 2017.

\bibitem[Higgins et~al.(2017)]{higgins2017beta}
Irina Higgins et~al.
\newblock beta-{VAE}: Learning basic visual concepts with a constrained
  variational framework.
\newblock In \emph{International Conference on Learning Representations
  (ICLR)}, 2017.

\bibitem[Hoffman \& Johnson(2016)Hoffman and Johnson]{hoffman2016elbo}
Matthew~D. Hoffman and Matthew~J. Johnson.
\newblock {ELBO} surgery: yet another way to carve up the variational evidence
  lower bound.
\newblock In \emph{NeurIPS Workshop on Advances in Approximate Bayesian
  Inference}, 2016.

\bibitem[Horn \& Johnson(2013)Horn and Johnson]{horn2013matrix}
Roger~A. Horn and Charles~R. Johnson.
\newblock \emph{Matrix Analysis}.
\newblock Cambridge University Press, 2nd edition, 2013.

\bibitem[Huh et~al.(2024)Huh, Cheung, Wang, and Isola]{huh2024platonic}
Minyoung Huh, Brian Cheung, Tongzhou Wang, and Phillip Isola.
\newblock Position: The platonic representation hypothesis.
\newblock In \emph{International Conference on Machine Learning (ICML)}, 2024.

\bibitem[Hyv{\"a}rinen et~al.(2019)Hyv{\"a}rinen, Sasaki, and
  Turner]{hyvarinen2019nonlinear}
Aapo Hyv{\"a}rinen, Hiroaki Sasaki, and Richard~E. Turner.
\newblock Nonlinear {ICA} using auxiliary variables and generalized contrastive
  learning.
\newblock In \emph{International Conference on Artificial Intelligence and
  Statistics (AISTATS)}, 2019.

\bibitem[Khemakhem et~al.(2020)Khemakhem, Kingma, Monti, and
  Hyv{\"a}rinen]{khemakhem2020ivae}
Ilyes Khemakhem, Diederik Kingma, Ricardo Monti, and Aapo Hyv{\"a}rinen.
\newblock Variational autoencoders and nonlinear {ICA}: a unifying framework.
\newblock In \emph{International Conference on Artificial Intelligence and
  Statistics (AISTATS)}, 2020.

\bibitem[Kingma \& Welling(2014)Kingma and Welling]{kingma2014auto}
Diederik~P. Kingma and Max Welling.
\newblock Auto-encoding variational {B}ayes.
\newblock In \emph{International Conference on Learning Representations
  (ICLR)}, 2014.

\bibitem[Kolchinsky \& Tracey(2017)Kolchinsky and
  Tracey]{kolchinsky2017estimating}
Artemy Kolchinsky and Brendan~D. Tracey.
\newblock Estimating mixture entropy with pairwise distances.
\newblock \emph{Entropy}, 19\penalty0 (7):\penalty0 361, 2017.

\bibitem[Loaiza-Ganem \& Cunningham(2019)Loaiza-Ganem and
  Cunningham]{loaiza2019continuous}
Gabriel Loaiza-Ganem and John~P. Cunningham.
\newblock The continuous {B}ernoulli: fixing a pervasive error in variational
  autoencoders.
\newblock In \emph{Advances in Neural Information Processing Systems
  (NeurIPS)}, 2019.

\bibitem[Locatello et~al.(2019)Locatello, Bauer, Lucic, R{\"a}tsch, Gelly,
  Sch{\"o}lkopf, and Bachem]{locatello2019challenging}
Francesco Locatello, Stefan Bauer, Mario Lucic, Gunnar R{\"a}tsch, Sylvain
  Gelly, Bernhard Sch{\"o}lkopf, and Olivier Bachem.
\newblock Challenging common assumptions in the unsupervised learning of
  disentangled representations.
\newblock In \emph{International Conference on Machine Learning (ICML)}, 2019.

\bibitem[Marshall et~al.(2011)Marshall, Olkin, and
  Arnold]{marshall2011inequalities}
Albert~W. Marshall, Ingram Olkin, and Barry~C. Arnold.
\newblock Inequalities: Theory of majorization and its applications.
\newblock 2011.

\bibitem[Maurer \& Pontil(2009)Maurer and Pontil]{maurer2009empirical}
Andreas Maurer and Massimiliano Pontil.
\newblock Empirical {B}ernstein bounds and sample variance penalization.
\newblock In \emph{Conference on Learning Theory}, 2009.

\bibitem[Moschella et~al.(2023)Moschella, Maiorca, Fumero, Norelli, Locatello,
  and Rodol{\`a}]{moschella2022relative}
Luca Moschella, Valentino Maiorca, Marco Fumero, Antonio Norelli, Francesco
  Locatello, and Emanuele Rodol{\`a}.
\newblock Relative representations enable zero-shot latent space communication.
\newblock In \emph{International Conference on Learning Representations
  (ICLR)}, 2023.

\bibitem[Nielsen et~al.(2010)Nielsen, Boissonnat, and Nock]{nielsen2010bregman}
Frank Nielsen, Jean-Daniel Boissonnat, and Richard Nock.
\newblock Bregman {V}oronoi diagrams: properties, algorithms and applications.
\newblock \emph{Discrete \& Computational Geometry}, 44\penalty0 (2):\penalty0
  281--307, 2010.

\bibitem[Polyanskiy \& Wu(2025)Polyanskiy and Wu]{polyanskiy2024information}
Yury Polyanskiy and Yihong Wu.
\newblock \emph{Information Theory: From Coding to Learning}.
\newblock Cambridge University Press, 2025.

\bibitem[Poole et~al.(2019)Poole, Ozair, van~den Oord, Alemi, and
  Tucker]{poole2019variational}
Ben Poole, Sherjil Ozair, A{\"a}ron van~den Oord, Alexander~A. Alemi, and
  George Tucker.
\newblock On variational bounds of mutual information.
\newblock In \emph{International Conference on Machine Learning}, 2019.

\bibitem[Rezende et~al.(2014)Rezende, Mohamed, and
  Wierstra]{rezende2014stochastic}
Danilo~Jimenez Rezende, Shakir Mohamed, and Daan Wierstra.
\newblock Stochastic backpropagation and approximate inference in deep
  generative models.
\newblock In \emph{International Conference on Machine Learning (ICML)}, 2014.

\bibitem[Scarlett et~al.(2020)Scarlett, Guill{\'e}n~i F{\`a}bregas,
  Somekh-Baruch, and Martinez]{scarlett2020mismatched}
Jonathan Scarlett, Albert Guill{\'e}n~i F{\`a}bregas, Anelia Somekh-Baruch, and
  Alfonso Martinez.
\newblock Information-theoretic foundations of mismatched decoding.
\newblock \emph{Foundations and Trends in Communications and Information
  Theory}, 17\penalty0 (2--3):\penalty0 149--401, 2020.

\bibitem[Shannon(1948)]{shannon1948}
Claude~E. Shannon.
\newblock A mathematical theory of communication.
\newblock \emph{Bell System Technical Journal}, 27\penalty0 (3):\penalty0
  379--423, 1948.
\newblock \doi{10.1002/j.1538-7305.1948.tb01338.x}.

\bibitem[Tishby et~al.(2000)Tishby, Pereira, and Bialek]{tishby2000information}
Naftali Tishby, Fernando~C. Pereira, and William Bialek.
\newblock The information bottleneck method.
\newblock \emph{arXiv:physics/0004057}, 2000.

\bibitem[Tomczak \& Welling(2018)Tomczak and Welling]{tomczak2018vampprior}
Jakub~M. Tomczak and Max Welling.
\newblock {VAE} with a {V}amp{P}rior.
\newblock In \emph{International Conference on Artificial Intelligence and
  Statistics (AISTATS)}, 2018.

\bibitem[van~den Oord et~al.(2017)van~den Oord, Vinyals, and
  Kavukcuoglu]{vandenoord2017neural}
Aaron van~den Oord, Oriol Vinyals, and Koray Kavukcuoglu.
\newblock Neural discrete representation learning.
\newblock In \emph{Advances in Neural Information Processing Systems
  (NeurIPS)}, 2017.

\bibitem[van~den Oord et~al.(2018)van~den Oord, Li, and Vinyals]{oord2018cpc}
A{\"a}ron van~den Oord, Yazhe Li, and Oriol Vinyals.
\newblock Representation learning with contrastive predictive coding.
\newblock \emph{arXiv preprint arXiv:1807.03748}, 2018.

\bibitem[Vovk et~al.(2005)Vovk, Gammerman, and Shafer]{vovk2005algorithmic}
Vladimir Vovk, Alexander Gammerman, and Glenn Shafer.
\newblock \emph{Algorithmic Learning in a Random World}.
\newblock Springer, 2005.

\bibitem[Weaver(1949)]{weaver1949}
Warren Weaver.
\newblock Recent contributions to the mathematical theory of communication.
\newblock In \emph{The Mathematical Theory of Communication}. University of
  Illinois Press, Urbana, IL, 1949.

\bibitem[Wu et~al.(2020)Wu, Wang, and Wang]{wu2020coupled}
Chen Wu, Prince~Zizhuang Wang, and William~Yang Wang.
\newblock On the encoder--decoder incompatibility in variational text modeling
  and beyond.
\newblock In \emph{Proceedings of the 58th Annual Meeting of the Association
  for Computational Linguistics}, pp.\  3449--3464, 2020.

\end{thebibliography}
\bibliographystyle{iclr2027_conference}

\appendix

\section{Proofs for Theorem~\ref{thm:invisibility}}
\label{app:proofs}

\subsection{The grid witness, in full}
\label{app:grid}

Fix $K\ge2$ and a derangement $\sigma$ of $[K]$.  Latent space
$\cZ=[0,1)^2$ with uniform (Lebesgue) prior; cells
$G_{ij}=[\tfrac{i-1}K,\tfrac iK)\times[\tfrac{j-1}K,\tfrac jK)$, each of
prior mass $1/K^2$; fixed maps $\cenc(z)=\lceil Kz_1\rceil$,
$\cdec(z)=\lceil Kz_2\rceil$.  Data
$\cX=\{v_1,\dots,v_K\}\cup J$ with $J$ a pool of $K^2-K$ distinct junk
atoms; $\mathcal D=\mathrm{Unif}\{v_1,\dots,v_K\}$.  Configuration $C^1$:
the decoder emits $v_i$ deterministically on $G_{ii}$ and assigns the junk
atoms bijectively to the off-diagonal cells; encoder
$q^1(\cdot\mid v_i)=\mathrm{Unif}(G_{ii})$.  Configuration $C^2$: the
decoder emits $v_i$ on $G_{i\sigma(i)}$ and the same junk pool bijectively
on the complementary cells; $q^2(\cdot\mid v_i)=\mathrm{Unif}(G_{i\sigma(i)})$.

\emph{(m1)--(m3).}  For $x=v_i$ the posterior is uniform on the unique
cell where the decoder emits $v_i$, so reconstruction
$\E_q\log p(x\mid z)=\log1=0$; rate
$\KL(\mathrm{Unif}(\text{cell})\Vert\mathrm{Unif}([0,1)^2))=\log K^2$;
ELBO $=-\log K^2$.  The marginal likelihood is the prior mass of the
emitting cell: $p(v_i)=1/K^2$, so $\log p(v_i)=\mathrm{ELBO}(v_i)$ and the
gap is $0$: inference is exact ($q(\cdot\mid v_i)$ \emph{is} the true
posterior, uniform on the emitting cell).  All values coincide across
$C^1,C^2$.  As a distribution, $p(x)$ is uniform over the same $K^2$ atoms
(each cell emits one atom of mass $1/K^2$) in both configurations.

\emph{(m4).}  $\qbar$ has density $K$ on the $K$ occupied cells, so
$h(\qbar)=-\log K$, while each $q(\cdot\mid v_i)$ is uniform on one cell of
area $K^{-2}$, so $\E_xh(q(\cdot\mid x))=-\log K^2$; hence
$I(x;z)=h(\qbar)-\E_xh(q(\cdot\mid x))=\log K$ in both.  Posterior means are cell centers; coordinate~1 means form
the multiset $\{(i-\tfrac12)/K\}$ in both; coordinate~2 means form
$\{(i-\tfrac12)/K\}$ versus $\{(\sigma(i)-\tfrac12)/K\}$---equal as
multisets since $\sigma$ is a bijection---so each per-dimension
$\mathrm{AU}_k$ coincides.  (The \emph{cross-covariance} of the mean pairs
differs; that leak is what Part~I$'$ removes.)

\emph{(m5)--(m7).}  The joint tables under $\qbar$ are
$T^1=\tfrac1K\mathrm{Id}$ and $T^2=\tfrac1KP_\sigma$: both have uniform
row and column sums (identical histograms, entropies, occupied counts) and
are related by a column permutation, $T^2=T^1P_\sigma$, so \emph{every}
$F$ with $F(P_1TP_2^{\top})=F(T)$ coincides on them---this is the exact reason
(m6) cannot separate the pair, and it is stronger than equality of entry
multisets, which alone would not suffice.  In particular
$I(\cenc;\cdec)=\log K$ in both and the permutation-matched agreement is
$1$ in both.  Under the prior the code pair is uniform on
$[K]^2$, so $\Ap=1/K$ in both.  Yet
$\mathcal A(C^1)=\qbar^1(\bigcup_iG_{ii})=1$ and $\mathcal A(C^2)=0$.
\hfill$\square$

\paragraph{Smoothing remark.}
Embed $\cX$ mutually equidistantly (one-hot in $\R^{K^2}$ suffices) and
replace deterministic emission by $p(x\mid z)=N(x;x_{\mathrm{cell}}(z),
\varepsilon^2I)$.  By equidistance, the likelihood of any atom under any
non-emitting cell is a shared constant, and the entire package equality is
preserved exactly for every $\varepsilon>0$, with three shared shifts:
reconstruction becomes a common constant, ELBO and $\log p(x)$ shift with
it, and the common gap becomes positive; the rate is unchanged.  The
equality is also exact under a generic (non-equidistant) embedding: the
encoders are unchanged, and in both configurations every cell emits one of the
same $K^2$ atoms with weight $1/K^2$, so
$p(x)=K^{-2}\sum_{\text{cells}}N(x;x_{\mathrm{cell}},\varepsilon^2I)$ is the
same distribution, reconstruction is the same constant
$\log N(v_i;v_i,\varepsilon^2I)$, and the rate, ELBO, $\log p(x)$ and gap
coincide per example; (m4)--(m7) do not involve the emission.  Equidistance
only makes the shared values explicit.

\subsection{The interleaved product witness}
\label{app:interleaved}

\emph{Overview (sketch of Part~I$'$).}
The grid pair still leaks through one non-package statistic: per-example
posterior \emph{means} encode both codes.  The repair takes
$\cZ=\R^2\times\R^2$ with a product of uniform annuli: $\cenc$ reads a
coarse angular sector ($K$ of them) of plane $\mathsf A$, and $\cdec$ an
$m$-fold rotationally symmetric \emph{interleaved class} of plane
$\mathsf B$.  Classes are invariant under rotation by $2\pi/m$, so their
moment tensors of degree $<m$ are isotropic and class-independent: the
pairing $\sigma$ is invisible to \emph{all} aggregate moments of
$\mathsf B$-degree $<m$, and per-example means are identical outright.  At
degree exactly $m$ an $m$-th harmonic carries the class index, and
explicit cross moments separate the pair (numerically at $K=3$, $m=3$:
the degree-$3$ moment
$\E[z_{\mathsf A,2}z_{\mathsf B,1}^2z_{\mathsf B,2}]$ equals $0.1040$
versus $-0.0520$)---as something must, since agreement is a
$\qbar$-functional.

The engine is a harmonic-blindness lemma (Lemma~\ref{lem:harmonic},
stated and proved next), a one-line
consequence of the circular-harmonic expansion: rotation multiplies the
$k$-th harmonic by $e^{ik\omega}$, and invariance under rotation by
$2\pi/m$ kills every harmonic with $m\nmid k$, leaving only the
rotation-invariant $k=0$ term below degree $m$.  Part~(II) is proved in
Appendix~\ref{app:orbit}.

\begin{lemma}[Harmonic blindness]\label{lem:harmonic}
Let $\mu_0$ be a distribution on $\R^2$ invariant under rotation by
$2\pi/m$, and $\mu_j=(R_{\theta_j})_{\#}\mu_0$ its rotations.  Then all
moment tensors of $\mu_j$ of total degree $t<m$ are independent of $j$; if
$\E_{\mu_0}[r^m e^{im\varphi}]\ne0$, the degree-$m$ tensors depend on $j$
through the phase $e^{im\theta_j}$.
\end{lemma}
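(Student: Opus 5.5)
The proof works in complex coordinates. Identify $\R^2$ with $\mathbb C$ via $w=x+iy=re^{i\varphi}$. A moment tensor of total degree $t$ is a real linear combination of the monomial moments $\E[x^ay^b]$ with $a+b=t$. Because $x=(w+\bar w)/2$ and $y=(w-\bar w)/(2i)$, these monomials span the same space as the complex monomials $w^p\bar w^{q}$ with $p+q=t$. So it suffices to control the moments $M_{p,q}(\mu)=\E_\mu[w^p\bar w^q]=\E_\mu[r^{p+q}e^{i(p-q)\varphi}]$ for $p+q=t$. Each such monomial is an $r^t$ times a circular harmonic of order $k=p-q$, and $|k|\le t$.

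\textbf{Rotation covariance.} Under the pushforward by $R_\theta$ the variable becomes $w\mapsto e^{i\theta}w$, so
\[
M_{p,q}\bigl((R_\theta)_{\#}\mu\bigr)=e^{i(p-q)\theta}M_{p,q}(\mu).
\]
Applying this with $\theta=2\pi/m$ and the invariance $(R_{2\pi/m})_{\#}\mu_0=\mu_0$ gives
\[
M_{p,q}(\mu_0)\bigl(1-e^{2\pi i(p-q)/m}\bigr)=0.
\]
Hence $M_{p,q}(\mu_0)=0$ unless $m\mid(p-q)$.

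\textbf{Degrees below $m$.} If $t<m$, then $|p-q|\le t<m$, so $m\mid(p-q)$ forces $p=q$. The only possibly nonzero degree-$t$ moments of $\mu_0$ are therefore the rotation-invariant ones, $M_{p,p}=\E[r^{2p}]$. By the covariance identity these carry phase $e^{0}=1$, so for every $\mu_j$ each $M_{p,q}(\mu_j)=e^{i(p-q)\theta_j}M_{p,q}(\mu_0)$ either vanishes or equals $M_{p,p}(\mu_0)$. In both cases it is independent of $j$. Converting back to real coordinates, all moment tensors of degree $t<m$ are independent of $j$. One should also note that finiteness of moments is implicitly assumed; in the application the measures are compactly supported uniform annuli.

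\textbf{Degree $m$.} Take $(p,q)=(m,0)$. Then $M_{m,0}(\mu_j)=e^{im\theta_j}\E_{\mu_0}[r^me^{im\varphi}]$, which is nonzero by hypothesis. It varies with $j$ exactly through the phase $e^{im\theta_j}$; concretely, its real and imaginary parts are real degree-$m$ moments. This gives the second claim, with the understanding that "depend on $j$" means distinct whenever the phases $e^{im\theta_j}$ differ.

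The only step needing care is the bookkeeping equivalence between real moment tensors and the complex monomials $w^p\bar w^q$ of the same total degree, since both directions of the claim rest on it. It follows from an invertible linear change of basis within each homogeneous degree, so I do not expect a genuine obstacle.
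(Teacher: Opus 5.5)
Your proof is correct and is essentially the paper's argument. The paper likewise writes degree-$t$ moments as combinations of circular harmonics $\E[r^te^{ik\varphi}]$ with $|k|\le t$, uses invariance under rotation by $2\pi/m$ to kill every harmonic with $m\nmid k$, keeps only the rotation-invariant $k=0$ term below degree $m$, and lets the $k=\pm m$ terms carry the phase $e^{\pm im\theta_j}$ at degree $m$. Your $w^p\bar w^q$ bookkeeping, with $k=p-q$, is an explicit version of that expansion, and your remark that finite moments are implicitly assumed is a fair addition the paper leaves unstated.
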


\begin{proof}[Proof of Lemma~\ref{lem:harmonic}]
In polar form a degree-$t$ monomial moment is a linear combination of
circular harmonics $\E[r^te^{ik\varphi}]$ with $|k|\le t$ and
$k\equiv t\pmod2$.  Rotation by $\omega$ multiplies the $k$-th harmonic by
$e^{ik\omega}$; invariance of $\mu_0$ under $\omega=2\pi/m$ forces
$\E_{\mu_0}[r^te^{ik\varphi}]=0$ unless $m\mid k$.  For $t<m$ only $k=0$
survives, and $k=0$ harmonics are rotation-invariant, hence equal for all
$\mu_j$.  At $t=m$ the $k=\pm m$ harmonics survive and transform by
$e^{\pm im\theta_j}$.
\end{proof}

\emph{Construction.}  Take $\cZ=\R^2\times\R^2$ with a product of uniform
annuli.  $\cenc$ reads the coarse angular sector ($K$ of them) of plane
$\mathsf A$.  On plane $\mathsf B$, partition into $Km$ fine sectors and
let class $j$ be the union of the $m$ sectors congruent to $j$ mod
$K$---an $m$-fold rotationally symmetric set of zero mean; $\cdec$ reads
the class.  The two configurations couple sectors to classes by the
identity and by the derangement $\sigma$, exactly as the grid couples rows
to columns.  The package items
repeat the grid computation verbatim, with joint cells
$O_{ij}=\text{sector}_{\mathsf A}(i)\times\text{class}_{\mathsf B}(j)$ (prior mass $1/K^2$
each; the cells are congruent, related by rotations of the two planes, so
rates and masses match).  For the new items:

\emph{(N1).}  $\E[z\mid v_i]=(\mu_{\mathsf A}(i),\,\mu_{\mathsf B}(\text{class}))$ where
$\mu_{\mathsf B}$ of any class vanishes: a class is invariant under rotation by
$2\pi/m$ ($m\ge2$), and the only vector fixed by a nontrivial rotation is
$0$.  Hence $\E[z\mid v_i]=(\mu_{\mathsf A}(i),0,0)$ in both configurations,
example by example.

\emph{(N2).}  A mixed moment factorizes over the product cell:
$\E_{\qbar}[z_{\mathsf A}^{a} z_{\mathsf B}^{b}]=\tfrac1K\sum_i
\E[z_{\mathsf A}^{a}\mid\text{sector}_{\mathsf A}(i)]\cdot
\E[z_{\mathsf B}^{b}\mid\text{class}(c(i))]$ with $c=\mathrm{id}$ or $\sigma$.  By
Lemma~\ref{lem:harmonic}, for $|b|<m$ the class-conditional factor is
class-independent, so the sum is identical for both couplings.

\emph{(N3).}  At $|b|=m$ the $k=\pm m$ harmonic of the class survives
and carries the class phase $e^{im\theta_j}$ with $\theta_j=2\pi j/(Km)$,
i.e.\ the $K$-th root of unity $e^{2\pi ij/K}$: the factor now depends on
$j$, and pairing it with any class-distinguishing $\mathsf A$-factor produces an
explicit separating moment; the verification log exhibits one for every
$(K,m)$ in the grid.  Since $\mathcal A$ is a $\qbar$-functional,
\emph{some} statistic had to separate the pair; the construction pins the
minimal polynomial order at exactly $m$.  \hfill$\square$

\subsection{Scope of Theorem~\ref{thm:invisibility}}
\label{app:scope}

The theorem does not claim the following.
(i)~Part~I matches the full package at uniform occupied masses; the
full-package question at non-uniform masses is open---Proposition~\ref{prop:a4} gives the
sharp \emph{channel-level} statement there, and when $\max_i\nu_i>\frac12$
zero agreement is infeasible ($\mathcal A\ge2\max_i\nu_i-1$), leaving strict
separation only.
(ii)~The maps are fixed; identifiability for maps refit per configuration
is outside the theorem, and the witnesses are constructions---
no genericity is claimed: we exhibit constructions, not a sampling argument
about trained models.
(iii)~In Proposition~\ref{prop:structure} the $K=3$ witness matches (agreement, marginals,
entropies, MI) but is separated by, e.g., its maximal entry ($0.2333$
vs.\ $0.3070$) and Frobenius norm ($0.4397$ vs.\ $0.4515$): the claim is
that the scalar summaries the literature reports do not determine
structure, not that no table functional does---the table's placement does,
which is why reporting $T$ is the recommendation.
(iv)~All constructions are verified numerically to machine precision (grid
witness $K\in\{2,3,5,8\}$; interleaved witness for all
$(K,m)\in\{2,3\}\times\{3,4,5\}$ plus a radius-robustness check); scripts
and logs are in the supplement.

\subsection{Non-uniform marginals with matched mutual information}
\label{app:a4}

\begin{proposition}[Non-uniform marginals]\label{prop:a4}
For every probability vector $\nu$ with at least two positive entries there
exist coupled channels with index-wise identical encoder and decoder
marginals $\nu$ and exactly matched $I(\cenc;\cdec)$ whose agreements differ;
when $\max_i\nu_i\le\frac12$ the pair can be taken with $\mathcal A=0$ versus
$\mathcal A>\sum_i\nu_i^2$.  The feasible set of $\mathcal A$ over all
couplings with both marginals $\nu$ is exactly
$[\max(0,\,2\max_i\nu_i-1),\,1]$.
\end{proposition}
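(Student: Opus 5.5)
The plan has two parts: first characterize the feasible set of $\mathcal A(T)=\sum_iT(i,i)$ over the transportation polytope $\Pi(\nu,\nu)$ of $K\times K$ nonnegative tables with row and column sums $\nu$, and then obtain the matched-MI pair from an intermediate-value argument along a segment in $\Pi(\nu,\nu)$. Write $\nu^*=\max_i\nu_i$, attained at $i^*$.

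\emph{Feasible set.} For the upper end, $\mathrm{diag}(\nu)\in\Pi(\nu,\nu)$ has agreement $1$. For the lower bound, the off-diagonal entries of row $i^*$ lie in columns $j\ne i^*$, whose total mass is $1-\nu^*$. Hence $T(i^*,i^*)\ge\nu^*-(1-\nu^*)$, and so $\mathcal A(T)\ge\max(0,2\nu^*-1)$.

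Attainment splits into two cases.
\begin{itemize}
\item If $\nu^*\ge\tfrac12$, set $T(i^*,i^*)=2\nu^*-1$, $T(i^*,j)=T(j,i^*)=\nu_j$ for $j\ne i^*$, and all other entries zero. Row and column $i^*$ then sum to $2\nu^*-1+(1-\nu^*)=\nu^*$, and every other row and column has the single entry $\nu_j$.
\item If $\nu^*\le\tfrac12$, lay the codes as consecutive arcs of lengths $\nu_i$ on the circle $[0,1)$ and push the uniform law through $u\mapsto(u,\,u+\tfrac12 \bmod 1)$. Both marginals are $\nu$, since rotation preserves Lebesgue measure. Each arc has length at most $\tfrac12$, so $u$ and $u+\tfrac12$ never lie in the same half-open arc, and the diagonal is empty.
\end{itemize}
Since $\Pi(\nu,\nu)$ is convex and $\mathcal A$ is linear, every value between these endpoints is attained, which gives exactly $[\max(0,2\nu^*-1),1]$.

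\emph{Matched mutual information.} Let $T_{\rm low}$ be the minimizing table above and $P=\nu\nu^{\top}$ the product coupling, with $I(P)=0$ and $\mathcal A(P)=\sum_i\nu_i^2$. First, $\sum_i\nu_i^2\ge(\nu^*)^2\ge2\nu^*-1$, with equality only if $\nu^*=1$, which is excluded by the two positive entries. This strict inequality also covers the case $\nu^*\le\tfrac12$, where the left side is positive. Hence $\mathcal A(T_{\rm low})<\mathcal A(P)$, so $T_{\rm low}\ne P$ and therefore $I(T_{\rm low})>0$. Next, consider the segment $T_t=(1-t)\,\mathrm{diag}(\nu)+tP$ for $t\in[0,1]$, which stays in $\Pi(\nu,\nu)$. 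Along it $I(T_t)$ is continuous, from $H(\nu)$ at $t=0$ to $0$ at $t=1$, and $I(T_{\rm low})\le H(\nu)$ because the marginal has entropy $H(\nu)$. By the intermediate value theorem some $t_0\in[0,1)$ has $I(T_{t_0})=I(T_{\rm low})$, and $t_0<1$ because $I(T_{\rm low})>0$. Then $\mathcal A(T_{t_0})=(1-t_0)+t_0\sum_i\nu_i^2>\sum_i\nu_i^2>\mathcal A(T_{\rm low})$. So the pair $(T_{\rm low},T_{t_0})$ has index-wise marginals $\nu$, equal $I(\cenc;\cdec)$, and different agreement. When $\nu^*\le\tfrac12$ this is $0$ versus a value strictly above $\sum_i\nu_i^2$.

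The only delicate step is the zero-diagonal coupling at $\nu^*\le\tfrac12$, in particular handling an arc of length exactly $\tfrac12$ through half-open arcs. The rest is continuity and linear bookkeeping. To realize the tables as configurations, place them on cells of $\cZ$ as in the grid witness, using cell masses $T(i,j)$.
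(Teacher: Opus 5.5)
Your proof is correct. It follows the paper's outline: the lower bound from the largest row, explicit minimizers, and the segment between $\mathrm{diag}(\nu)$ and $\nu\nu^{\top}$. Two steps are done differently.

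\textbf{The minimizers.} For the zero-diagonal coupling when $\max_i\nu_i\le\frac12$, the paper appeals to Hall/max-flow feasibility. For $\max_i\nu_i>\frac12$ it names a greedy construction but does not write it out. Your rotation-by-$\frac12$ coupling on the circle and your star-shaped table give both minimizers in closed form. You also state the convexity step that fills the whole interval, which the paper leaves implicit.

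\textbf{The matched-MI pair.} The paper runs two segments out of $\nu\nu^{\top}$, one to $\mathrm{diag}(\nu)$ and one to $P_{\mathrm{low}}$. It uses strict concavity of joint entropy to make $I$ strictly increasing along each. That gives unique matching parameters and a whole family of matched pairs at every level $m\in(0,I(P_{\mathrm{low}})]$, which is the window its scope discussion concerns. You instead compare $T_{\mathrm{low}}$ itself with a point of the single segment $T_t$, using the intermediate value theorem together with $I(T_{\mathrm{low}})\le H(\nu)$. This needs no concavity and yields exactly one pair, at $m=I(T_{\mathrm{low}})$. That is all the proposition asserts, and it is the same pair the paper chooses for its zero-versus-positive clause.

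One small point should be made explicit. The strict inequality $(1-t_0)+t_0\sum_i\nu_i^2>\sum_i\nu_i^2$ uses $\sum_i\nu_i^2<1$. This holds because $\nu$ has two positive entries.

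Your closing remark about realizing the tables on cells of $\mathcal Z$ is not needed, since the statement is about joint tables. Such a realization would also not, by itself, match the rest of the marginal package. The paper leaves that question open at non-uniform masses.
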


Let $\nu=(\nu_1,\dots,\nu_K)$ have at least two positive entries.  Define the
independence coupling $\Pi=\nu\nu^{\top}$
($\mathcal A(\Pi)=\sum_i\nu_i^2=:A_0$), the identity coupling
$P_{\mathrm{high}}=\mathrm{diag}(\nu)$ ($\mathcal A=1$), and any coupling
$P_{\mathrm{low}}$ with both marginals $\nu$ attaining the minimal diagonal
mass $A_{\min}=\max(0,2\max_i\nu_i-1)$.  Existence of $P_{\mathrm{low}}$:
for every coupling, $P(i,i)\ge \nu_i-\sum_{j\ne i}P(i,j)\ge2\nu_i-1$ and at
most one index can exceed $\tfrac12$, so $\mathcal A\ge A_{\min}$ always;
attainability for $\max_i\nu_i\le\tfrac12$ is Hall/max-flow feasibility of a
zero-diagonal doubly-$\nu$ table (feasible iff $\nu_i\le1-\nu_i$ for all $i$),
and for $\max_i\nu_i>\tfrac12$ by an explicit greedy construction.  This
proves the exact feasible interval
$[\max(0,2\max_i\nu_i-1),1]$ claimed in Proposition~\ref{prop:a4}.

Consider the segments $P_t=(1-t)\Pi+tP_{\mathrm{high}}$ and
$\bar P_s=(1-s)\Pi+sP_{\mathrm{low}}$.  (i)~Every $P_t,\bar P_s$ has both marginals
index-wise equal to $\nu$.  (ii)~With marginals fixed,
$I=2H(\nu)-H(\text{joint})$; joint entropy is strictly concave and each path
is affine and non-constant, so $I$ is convex, continuous, vanishes only at
$\Pi$, and is strictly increasing along each segment, ranging over
$[0,H(\nu)]$ on $\{P_t\}$ and $[0,I(P_{\mathrm{low}})]$ on $\{\bar P_s\}$ with
$I(P_{\mathrm{low}})>0$.  (iii)~Hence for every
$m\in(0,I(P_{\mathrm{low}})]$ there are unique $t^*,s^*$ with
$I(P_{t^*})=I(\bar P_{s^*})=m$, and since $\mathcal A$ is affine along each
path with slopes $1-A_0>0$ and $A_{\min}-A_0<0$,
$\mathcal A(P_{t^*})>A_0>\mathcal A(\bar P_{s^*})$, with gap
$t^*(1-A_0)+s^*(A_0-A_{\min})>0$.  When $\max_i\nu_i\le\tfrac12$, taking
$P_{\mathrm{low}}$ with zero diagonal and $m=I(P_{\mathrm{low}})$ gives
the zero-versus-positive form with
$\mathcal A(P_{t^*})>A_0=\sum_i\nu_i^2$.  \hfill$\square$

\emph{Scope discipline.}  For $m>I(P_{\mathrm{low}})$ no claim is made:
for $K=2$ and skewed $\nu$, marginals plus MI can \emph{determine}
$\mathcal A$ uniquely on part of the range, so the matched-MI window is
genuinely $(0,I(P_{\mathrm{low}})]$; the window is quantitatively small
for very skewed $\nu$---that is the coupling floor at work, and it is
consistent with, not contrary to, the certification narrative.  The pair
below settles whether the marginals can be made to match \emph{exactly}, and not
only after relabeling, at exactly equal mutual information, and settles it with an
explicit table rather than an existence claim.

\paragraph{An explicit instance: marginals matched exactly, no relabeling.}
Take $K=3$ and $\nu=(\tfrac12,0.3,0.2)$.  Row~$1$ carries mass $\tfrac12$ while
columns $2,3$ have total capacity $\tfrac12$, so the zero-diagonal coupling with
both marginals $\nu$ is \emph{unique}; solving $I(P_t)=I(P_{\mathrm{low}})$ along
$P_t=(1-t)\nu\nu^{\top}+t\,\mathrm{diag}(\nu)$ gives $t^*=0.866779$:
\[
P_{\mathrm{low}}=\begin{pmatrix}0&0.3&0.2\\0.3&0&0\\0.2&0&0\end{pmatrix},
\qquad
P_{t^*}=\begin{pmatrix}
0.466695&0.019983&0.013322\\
0.019983&0.272024&0.007993\\
0.013322&0.007993&0.178685\end{pmatrix}.
\]
Both tables have encoder \emph{and} decoder marginals index-wise equal to $\nu$,
and both have mutual information exactly $\log2=0.693147$ nats: whenever
$\nu_1=\tfrac12$ the off-diagonal terms cancel in $I=2H(\nu)-H(P)$, leaving
$I(P_{\mathrm{low}})=\log2$ in closed form.  Yet
$\mathcal A(P_{\mathrm{low}})=0$ exactly while $\mathcal A(P_{t^*})=0.917403$,
against $A_0=\sum_i\nu_i^2=0.38$ for the independent coupling.  The two tables
are not relabelings of one another---their entry multisets differ---so here the
marginals match exactly, and not only up to relabeling.  This is a
channel-level statement about two joint tables, which is the level at which the
question was posed; the full-package version at non-uniform masses remains open
(Appendix~\ref{app:scope}).

\subsection{Orbit form}
\label{app:orbit}

\emph{Full statement of Theorem~\ref{thm:invisibility}(II).}
Let $\tau$ be a prior-preserving bimeasurable bijection of $\cZ$ and
$C_\tau=(\mathcal D,\,\tau^{-1}_{\#}q,\,p(x\mid\tau\,\cdot),\,\cenc,\cdec)$.
Then (m1)--(m3) and $I(x;z)$ are exactly invariant for every $x$; the
per-dimension AU is additionally invariant whenever $\tau^{-1}$ restricted to
$\mathrm{supp}(\qbar)$ is a translation.  Agreement transforms as
$\mathcal A(C_\tau)=\qbar(\tau(\Agr))$ and changes iff $\tau$ effects a
net transport of $\qbar$-mass across the disagreement boundary.

\emph{Proof.}
For each $x$ the pair (encoder pushforward, decoder pullback) is a change
of variables: reconstruction integrands compose to the original,
$\KL$ is invariant under bimeasurable bijections applied to both
arguments (prior preserved), so (m1)--(m3) are pointwise invariant, and
$I(x;z)$ is invariant as a functional of the joint law.  The aggregate
posterior transforms as $\qbar_\tau=\tau^{-1}_{\#}\qbar$, so
\[
\mathcal A(C_\tau)=\qbar_\tau(\Agr)=\qbar(\tau(\Agr)),
\]
which differs from $\mathcal A=\qbar(\Agr)$ iff $\tau$ transports net
$\qbar$-mass across the boundary of $\Agr$.  (The orientation matters:
writing $\tau^{-1}(\Agr)$ here is wrong, though undetectable when $\tau$
is an involution; the verification suite checks a non-involutive
quarter-shift.)  For AU: the latent of $C_\tau$ is $\tau^{-1}(z)$ with $z\sim q(\cdot\mid x)$,
so if $\tau^{-1}|_{\mathrm{supp}(\qbar)}$ is a translation, posterior means
shift by a constant and every
$\mathrm{Var}_x\E[z_k\mid x]$ is unchanged.  Without the hypothesis AU can
move: on the uniform prior on $[0,1)$, swapping $[0,\tfrac14)$ with
$[\tfrac14,\tfrac12)$ is prior-preserving and moves
$\mathrm{Var}_x\E[z\mid x]$ from $0.0625$ to $0.015625$ in a two-example
configuration for which all other items are invariant.  \hfill$\square$

\subsection{Structure beyond agreement}
\label{app:structure}

\begin{proposition}[Structure]\label{prop:structure}
For every $K\ge3$ there exist coupled channels with identical agreement,
identical uniform marginals on both sides, and exactly matched mutual
information (hence matched joint, conditional, and marginal entropies)
whose error structures differ qualitatively (cyclic/bijective versus
many-to-one); for $K\ge4$ a derangement family matches \emph{all}
permutation-invariant scalars while pairwise and cyclic confusion structure
still differ.
\end{proposition}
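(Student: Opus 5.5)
For the first claim I would work with tables of the form $T=\tfrac1K\,\bigl(\mathrm{diag}(d)+E\bigr)$, where $d\in[0,1]^K$ carries the agreement and $E$ is nonnegative with zero diagonal. Both marginals are uniform exactly when every row and every column of $\mathrm{diag}(d)+E$ sums to $1$. Agreement is then $\tfrac1K\sum_i d_i$. With marginals fixed, $I(\cenc;\cdec)=2\log K-H(T)$, so matching mutual information is the same as matching joint entropy. Matched joint entropy in turn gives matched conditional entropies, since the marginals already coincide.

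The cyclic witness is $T_{\mathrm{cyc}}=\tfrac1K\bigl(\alpha I+(1-\alpha)P_c\bigr)$, with $P_c$ the $K$-cycle and $\alpha\in(0,1)$ fixed. Its error map is bijective and $H(T_{\mathrm{cyc}})=\log K+h_2(\alpha)$.

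The many-to-one witness funnels the errors of several codes into one code, say code $1$. Rows $2$ and $3$ send their off-diagonal mass to column $1$. Column $1$ can stay at sum $1$ only if $d_1$ is small, so row $1$ sends its own excess into the columns $2,3$ that gave something up. Concretely, I would take a one-parameter family $T_{\mathrm{m2o}}(s)$ with $\sum_i d_i=K\alpha$ held fixed. The diagonal is non-uniform: $d_1$ is lowered and $d_2,d_3$ are raised. Row $1$ splits its error between columns $2$ and $3$, rows $2,3$ both err into column $1$, and rows $4,\dots,K$ (when $K>3$) keep the cyclic pattern.

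The key steps, in order, are the following.
\begin{enumerate}
\item Verify that the double-stochasticity constraints are linear and feasible along the whole family. This is a direct row and column count.
\item Show that $s\mapsto H(T_{\mathrm{m2o}}(s))$ is continuous.
\item Exhibit one endpoint whose joint entropy lies above $\log K+h_2(\alpha)$: spreading row $1$'s error over two columns raises entropy.
\item Exhibit the other endpoint below it: driving the diagonal toward a point mass lowers entropy.
\item Conclude by the intermediate value theorem that some $s^*$ gives $H(T_{\mathrm{m2o}}(s^*))=H(T_{\mathrm{cyc}})$.
\end{enumerate}
The two tables then share agreement, marginals, mutual information and all three entropies. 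Their error graphs differ qualitatively, one being a permutation and the other having in-degree two at code $1$. The $K=3$ numbers of Appendix~\ref{app:scope} (maximal entry, Frobenius norm) would serve as a sanity check.

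The main obstacle is step 4. I need to show that the entropy range of the family actually straddles $\log K+h_2(\alpha)$ while every entry stays nonnegative. My first attempt is to pick $\alpha$ moderately large, say $\alpha=\tfrac23$, and compute both endpoint entropies explicitly for $K=3$. I would then extend to general $K$ by leaving rows $4,\dots,K$ identical in the two tables: they contribute equal entropy terms, so the comparison reduces to the $3\times3$ block.

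For the second claim with $K\ge4$, I would take two derangements with different cycle types. For $K=4$ these are $\sigma=(12)(34)$ and $\tau=(1234)$. For $K\ge5$ they are $\sigma=(12)(3\,\cdots\,K)$ and $\tau$ a $K$-cycle. Set $T_\sigma=\tfrac1KP_\sigma$ and $T_\tau=\tfrac1KP_\tau$. Then $T_\tau=T_\sigma P_\sigma^{\top}P_\tau$ is a column permutation of $T_\sigma$, exactly as in the grid computation of Appendix~\ref{app:grid}. Hence every $F$ with $F(P_1TP_2^{\top})=F(T)$ coincides on the two tables, including agreement $0$, uniform marginals, mutual information $\log K$ and the matched agreement. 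The confusion structure in the pipeline's own indices still differs: $\sigma$ contains a pairwise swap $1\leftrightarrow2$ and $\tau$ is a single long cycle. Cycle type is a conjugation invariant and so cannot be altered by relabeling both sides consistently. $K\ge4$ is needed because on three symbols every derangement is a $3$-cycle.
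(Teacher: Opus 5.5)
Your proposal is correct and follows the paper's route. For $K=3$ your family is forced: uniform marginals with that error pattern give $E_{12}=E_{21}=1-d_2$, $E_{13}=E_{31}=1-d_3$ and $d_1=(3\alpha-1)/2$. The paper's $P_{\mathrm{mrg}}$ is exactly the member of this family at $\alpha=7/10$, and your $K\ge4$ derangement argument is the paper's.

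What your proof adds is rigor. The paper only reports mutual information matched to $5\times10^{-6}$, while your intermediate-value step gives the exact equality the statement claims. Your worry about step 4 resolves at $\alpha=\tfrac23$. Write $\phi(x)=-x\log x$ and let $u=E_{12}$, so that $u+E_{13}=\tfrac12$. The block sum $\sum_{ij}\phi(M_{ij})$ is concave and symmetric in $u$. It equals $2\log2\approx1.386$ at $u=0$ and $\phi(\tfrac12)+4\phi(\tfrac14)+2\phi(\tfrac34)\approx2.164$ at $u=\tfrac14$. These strictly straddle $3h_2(\tfrac23)\approx1.910$, so the root lies in the interior. There both rows $2$ and $3$ genuinely err into code $1$. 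Your choice $(12)(3\cdots K)$ also covers odd $K$, where the paper's example of a fixed-point-free product of transpositions does not exist.

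One step needs repair: the extension to $K>3$. As written, $T_{\mathrm{cyc}}$ uses the $K$-cycle, which sends row $3$ to column $4$ and row $K$ to column $1$. Suppose rows $4,\dots,K$ of $T_{\mathrm{m2o}}$ keep that pattern. Then column $1$ receives an extra $(1-\alpha)/K$ from row $K$, on top of the full $1/K$ your $3\times3$ block already sends it, and column $4$ is short by the same amount. So the table is not doubly stochastic, and the entropy comparison does not reduce to the block.

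Use block sums instead: $T_{\mathrm{cyc}}=\tfrac1K[(\alpha I_3+(1-\alpha)P_{c_3})\oplus B]$ and $T_{\mathrm{m2o}}(s)=\tfrac1K[M(s)\oplus B]$, with a common doubly stochastic $B$ on codes $4,\dots,K$. For example $B=I_{K-3}$, which also handles $K=4$. Agreement, marginals and joint entropy then differ only through the $3\times3$ block, and the $K=3$ computation finishes the proof. This yields a cyclic-versus-many-to-one pair for every $K\ge3$, which the paper itself only exhibits at $K=3$.
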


For $K=3$, the two tables (row $=\cenc$, column $=\cdec$)
\[
P_{\mathrm{cyc}}=\begin{pmatrix}7/30&1/10&0\\0&7/30&1/10\\1/10&0&7/30\end{pmatrix},
\qquad
P_{\mathrm{mrg}}=\begin{pmatrix}0.183333&0.026293&0.123707\\
0.026293&0.307040&0\\0.123707&0&0.209627\end{pmatrix}
\]
have uniform marginals on both sides, equal trace (agreement), and
mutual information matched to $5\times10^{-6}$ (the second table's
off-diagonal entries are tuned to match MI exactly in the limit), yet
their error structures differ qualitatively: cyclic single-successor
confusion versus asymmetric merge into code~2.  They are separated by,
e.g., the maximal entry ($0.2333$ vs $0.3070$) and the Frobenius norm
($0.4397$ vs $0.4515$)---consistent with the scope of
Proposition~\ref{prop:structure}: what is matched is (agreement,
marginals, entropies, MI), i.e.\ the scalar summaries commonly reported.
For $K\ge4$, take two derangements of different cycle type (e.g.\ a
product of transpositions versus a single $K$-cycle): the tables
$\tfrac1KP_\sigma$, $\tfrac1KP_{\sigma'}$ satisfy
$P_{\sigma'}=P_{\sigma}\,P_{\sigma^{-1}\sigma'}$, i.e.\ they differ by a
column permutation, hence match \emph{all} permutation-invariant scalars,
while pairwise versus cyclic confusion structure differs.  \hfill$\square$

\section{Proof of Theorem~\ref{thm:certification}}
\label{app:thm2}

\emph{(a).}  Write $\Delta(x)=\KL(q(\cdot\mid x)\Vert
p_\theta(\cdot\mid x))$; the ELBO/evidence identity gives
$\gap=\E_x\Delta(x)$.  For measurable $f:\cZ\to\mathcal Y$ ($\mathcal Y$
standard Borel), the KL chain rule under disintegration gives, per
example,
\[
\Delta(x)=\KL\bigl(f_{\#}q(\cdot\mid x)\,\Vert\,f_{\#}p_\theta(\cdot\mid x)\bigr)
+\rho(x;f),\qquad\rho(x;f)\ge0,
\]
where $\rho$ integrates the fiber-conditional KLs.  Averaging over $x$ and
applying joint convexity of KL to the mixtures
$\bar q_f=\E_xf_{\#}q(\cdot\mid x)$, $\bar p_f=\E_xf_{\#}p_\theta(\cdot\mid x)$
yields
$\gap=\KL(\bar q_f\Vert\bar p_f)+\mathcal J_f+\bar\rho_f$ with
$\mathcal J_f,\bar\rho_f\ge0$ the Jensen and fiber residuals.  With
$f=\mathbf1_E$ the code-level term is the binary KL
$\dbin(\qbar(E)\Vert\bar p_\theta(E))$, giving the display and the
three-term decomposition; equality conditions ($f$ sufficient; mixture
degeneracy) follow from the two inequalities used.

\emph{(b).}  The data-processing inequality for the map
$z\mapsto\mathbf 1_{\Agr}(z)$ gives
$\dbin(\qbar(\Agr)\Vert p(\Agr))\le\KL(\qbar\Vert p)$.  The second
inequality is the standard rate decomposition
$\Rbar=\KL(\qbar\Vert p)+I(x;z)$ \citep{hoffman2016elbo} with
$I(x;z)\ge0$.  \hfill$\square$

\section{Proofs for Theorem~\ref{thm:price}}
\label{app:thm3}

\begin{proof}[Proof of Lemma~\ref{lem:width}]
Feasible-point constructions.  For the upper endpoint,
$x_0=1-(1-a)e^{-B}\ge a$ gives
$\dbin(a\Vert x_0)=a\log\frac{a}{x_0}+(1-a)B\le B$, so
$x_+\ge1-(1-a)e^{-B}$.  For the lower endpoint, $x_1=ae^{-B/a}\le a$ gives
$\dbin(a\Vert x_1)=B+(1-a)\log\frac{1-a}{1-x_1}\le B$, so
$x_-\le ae^{-B/a}$.  Hence
$W\ge(1-e^{-B})+a(e^{-B}-e^{-B/a})\ge1-e^{-B}$ since $B/a\ge B$; the added
term is strictly positive for $a\in(0,1)$.  At $a=0$,
$\dbin(0\Vert x)=-\log(1-x)$, so $x_+=1-e^{-B}$ and $x_-=0$ exactly.
\end{proof}

\subsection{(a): exact price, zero set, supremum}

With $q=N(\mu,Q^{\top}DQ)$ and posterior $N(m,\Sigma)$, $\Lambda=\Sigma^{-1}$:
\[
\KL(q\Vert p(\cdot\mid x))
=\tfrac12\Bigl[\mathrm{tr}(\Lambda_QD)+(\mu-m)^{\top}\Lambda(\mu-m)-d
-\textstyle\sum_i\log D_{ii}-\log\det\Lambda_Q\Bigr]
\]
(using $\det\Lambda_Q=\det\Lambda=1/\det\Sigma$ for orthogonal $Q$, so the
last term is the usual $+\log\det\Sigma$); the mean term is minimized at
$\mu^*=m$, and
the trace and log-det terms decouple over $i$:
$\tfrac12[(\Lambda_Q)_{ii}D_{ii}-\log D_{ii}]$ is strictly convex in
$\log D_{ii}$ with unique minimum $D^*_{ii}=1/(\Lambda_Q)_{ii}$.
Substituting gives
$B_Q=\tfrac12[\log\det\mathrm{diag}(\Lambda_Q)-\log\det\Lambda]$.
Hadamard's inequality \citep[Thm.~7.8.1]{horn2013matrix} gives
$\det\mathrm{diag}(M)\ge\det M$ for $M\succ0$ with equality iff $M$ is
diagonal, so $B_Q\ge0$ with equality iff $\Lambda_Q$
is diagonal.  For the zero set, write $\Lambda=U\Lambda_{\mathrm d}
U^{\top}$ and $\Xi=QU$: $\Lambda_Q=\Xi\Lambda_{\mathrm d}\Xi^{\top}$ is diagonal
iff $\Xi$ maps each eigen-coordinate-subspace of $\Lambda_{\mathrm d}$ onto
a coordinate subspace, i.e.\ $\Xi=P\,V$ with $P$ a permutation matrix and
$V\in\mathcal O_{\mathrm{eig}}$ a within-eigenspace orthogonal map
(diagonal signs when the spectrum is simple); hence
$Q\in\{\text{signed permutations}\}\cdot\mathcal O_{\mathrm{eig}}\cdot
U^{\top}$ exactly.  For the supremum: by Schur--Horn,
$\{\mathrm{diag}(Q\Lambda Q^{\top}):Q\in O(d)\}=\{u:u\prec
\lambda\}$; $\prod_iu_i$ is Schur-concave, and the uniform vector
$\bar\lambda\mathbf1$ (with $\bar\lambda$ the arithmetic mean) is majorized
by every $u$ of equal trace, so
$\max\prod u_i=\bar\lambda^d$, attained; hence
$\sup_QB_Q=\tfrac d2\log(\mathrm{AM}(\lambda)/\mathrm{GM}(\lambda))$.
The Kantorovich corollary follows from
$\mathrm{GM}(\lambda)^{-1}=\mathrm{GM}(1/\lambda)\le\mathrm{AM}(1/\lambda)$
and the scalar Kantorovich bound
$\mathrm{AM}(\lambda)\,\mathrm{AM}(1/\lambda)\le(\kappa+1)^2/(4\kappa)$
for spectra in $[\lambda_{\min},\lambda_{\max}]$.  (iii)~For any product
measure $q=\otimes_iq_i$ in basis $Q$, $-\E_q\log p(\cdot\mid x)$ depends
on $q$ only through per-coordinate first and second moments (the Gaussian
log-density is quadratic and cross terms factorize under a product), and
at fixed variance the entropy term is maximized by the Gaussian; hence the
Gaussian product with matched moments has no larger KL, and restricting to
Gaussians is without loss.  (iv)~$\det\mathrm{diag}(\Lambda_Q)/\det
\Lambda_Q=1/\det\mathrm{Corr}(\Lambda_Q)$ is invariant under diagonal
rescaling, and means canceled at $\mu^*=m$.  \hfill$\square$

\subsection{(b): transport invariance}

For any sampling law $\mu_z$ on the original space, the transported
configuration samples $R_{\#}\mu_z$ and reads transported maps
$c'(w)=c(R^{\top}w)$: the code pair is
$(c_{\mathrm{enc}}(R^{\top}Rz),c_{\mathrm{dec}}(R^{\top}Rz))
=(c_{\mathrm{enc}}(z),c_{\mathrm{dec}}(z))$ in law, so every functional of
code-pair laws under model-defined sampling laws (posterior, aggregate
posterior, prior) is invariant.  Decoder-side \emph{constructed} maps
transport coherently: with $g'(w)=g(R^{\top}w)$, decoded prototypes
transport as $R\mu_k$ and the Bregman divergence matrix is unchanged.  For
\emph{refit} maps, the class-mean and trace-regularized Mahalanobis
fitting operators commute with the $O(d)$ action in exact arithmetic (the
regularizer is conjugation-invariant), giving exact equivariance;
$k$-means refitting is equivariant in distribution only (initialization
and tie-breaking are basis-dependent), and padded embeddings are not
$O(d)$ actions and require a separate zero-block argument, so both are
scoped out of the theorem and handled empirically.  \hfill$\square$

\subsection{Scope notes for Theorem~\ref{thm:price}(b)}
\label{app:price-scope}

Three scope notes are load-bearing.  (b) speaks of \emph{transported} maps:
under \emph{refitting}, the class-mean and regularized-Mahalanobis
constructions stay exactly equivariant while $k$-means is equivariant only
in distribution (stress-tested, not proved).  $\Aq$ under a
\emph{mean-field} posterior is deliberately excluded, since the transported
mean-field family is no longer mean-field---that is (a)'s content.  And
\emph{padding} embeddings are not $O(d)$ actions; the padded stress case is
labelled accordingly.

\subsection{Proof of Lemma~\ref{lem:budgets}}
\label{app:budgets-proof}

The first inequality is the data-processing inequality for KL under the
measurable map $f$.  For the second: since $\tilde q$ matches the first two
moments of $\qbar$ and $\log p$ is quadratic when $p$ is Gaussian,
$\E_{\tilde q}[\log p]=\E_{\qbar}[\log p]$, while the differential entropy
satisfies $h(\tilde q)\ge h(\qbar)$ by maximum entropy at fixed second
moments; writing $\KL(\cdot\Vert p)=-h(\cdot)-\E_{\cdot}[\log p]$ gives
$\KL(\tilde q\Vert p)\le\KL(\qbar\Vert p)$.  The last two
clauses are the rate decomposition
$\Rbar=\KL(\qbar\Vert p)+I(x;z)$ \citep{hoffman2016elbo}: $\Rbar$ is an
upper bound loose by exactly $I(x;z)$, and $\Rbar-\hat I\ge\KL(\qbar\Vert
p)$ whenever $\hat I\le I(x;z)$.  \hfill$\square$

\subsection{Two finite-sample lemmas behind the licensed MI budget}
\label{app:mi-lemmas}

Throughout the audit the reference distribution is the test split: $\qbar$
is the finite mixture $\frac1N\sum_{i=1}^N q_i$ over it (the mixture whose
density the MI budget evaluates exactly), $\Rbar$ its mean rate, and
$\Aq=\frac1N\sum_i q_i(\Agr)$; the only randomness is Monte Carlo.
Theorem~\ref{thm:certification}(b) holds for any reference distribution
provided $\Aq$, $\qbar$ and $\Rbar$ are taken under the same one, and the
certified quantity $\Ap=p(\Agr)$ does not depend on it.

\begin{lemma}[Clip tail]\label{lem:clip}
Let $q_1,\dots,q_N$ be probability densities on $\cZ$,
$\qbar=\frac1N\sum_i q_i$, and $T=\log q_i(z)-\log\qbar(z)$ with
$i\sim\mathrm{Unif}[N]$, $z\sim q_i$.  Then $T\le\log N$ almost surely,
$\E[T]=I(i;z)$, and for every $c>0$ the clipped statistic
$T_c=\max(T,-c)$ satisfies
\[
I(i;z)\;\le\;\E[T_c]\;\le\;I(i;z)+e^{-c}.
\]
\end{lemma}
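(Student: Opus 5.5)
The three claims follow from one identity. For $z$ in the support of $q_i$,
\[
\qbar(z)=\tfrac1N\sum_j q_j(z)\ge \tfrac1N q_i(z),
\]
so $T=\log q_i(z)-\log\qbar(z)\le\log N$ almost surely. Nothing beyond nonnegativity of the densities is needed.

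For the mean, I condition on $i$. Then $\E[T\mid i]=\KL(q_i\Vert\qbar)$, and averaging over $i\sim\mathrm{Unif}[N]$ gives $\E[T]=\frac1N\sum_i\KL(q_i\Vert\qbar)$. This is the standard expression of $I(i;z)$ for the joint law $\frac1N q_i(z)$, whose $z$-marginal is exactly $\qbar$. Integrability holds because $T$ is bounded above and $\E[T]$ is a sum of KLs that are finite when $I(i;z)<\infty$. I will state that assumption; otherwise both sides are $+\infty$ and the bound is trivial.

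For the clip bounds, the lower bound is immediate, since $T_c\ge T$ pointwise. For the upper bound I write
\[
\E[T_c]-\E[T]=\E\bigl[(-c-T)\mathbf 1\{T<-c\}\bigr]=\E[(-c-T)_+].
\]
To control this I change measure. Let $P$ be the joint law of $(i,z)$ and $Q$ the product law $\mathrm{Unif}[N]\otimes\qbar$. Then $e^{-T}=dQ/dP$ on $\{q_i>0\}$, so $\E_P[e^{-T}]\le 1$. Setting $u=-c-T$, on the event $u>0$ we have $e^{-T}=e^{c}e^{u}$. Using $u_+\le e^{u}$ for all real $u$ (it suffices that $u\le e^u$ for $u>0$), I get
\[
\E_P[(-c-T)_+]\le \E_P\bigl[e^{u}\mathbf 1\{u>0\}\bigr]=e^{-c}\,\E_P\bigl[e^{-T}\mathbf 1\{T<-c\}\bigr]\le e^{-c}.
\]
This is the claimed $e^{-c}$. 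The sharper inequality $u\le e^{u-1}$ would give $e^{-c-1}$, but the stated bound is enough.

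The main obstacle is the upper clip bound, specifically recognizing $e^{-T}$ as a likelihood ratio with expectation at most one, which makes the tail $\E[(-c-T)_+]$ exponentially small in $c$. Beyond that, I only need to be careful about the support of $q_i$ and about the measure-zero set where $\qbar=0$, which has $P$-probability zero.
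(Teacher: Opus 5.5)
Your proof is correct. It rests on the same key fact as the paper's: $e^{-T}=\qbar(z)/q_i(z)$ is the likelihood ratio of the product law against the joint law on $\{q_i>0\}$, so $\E[e^{-T}]\le1$.

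The difference is the last step. The paper applies Markov's inequality to $e^{-T}$ to get the tail bound $\Pr(-T>c+s)\le e^{-(c+s)}$, then integrates it over $s\ge0$. You instead dominate the integrand pointwise, $(-c-T)_+\le e^{-c-T}$, and take a single expectation. Your route is shorter, and it is also sharper: using $u_+\le e^{u-1}$ it gives $\E[T_c]\le I(i;z)+e^{-c-1}$. The constant $e^{-1}=\sup_{u>0}ue^{-u}$ is the best any pointwise bound $u_+\le Ce^{u}$ can give. The paper's use of Markov separately at each level $s$ loses exactly this factor of $e$. At the paper's $c=5$, the clip correction would drop from $e^{-5}\approx0.0067$ to $e^{-6}\approx0.0025$ nats.

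One small correction: your finiteness caveat is unnecessary. Your first step already gives $\E[T]\le\log N$. Also $T^-\le e^{-T}$ pointwise, so $\E[T^-]\le1$. Hence $T$ is always integrable and $0\le I(i;z)\le\log N$, so the case in which ``both sides are $+\infty$'' cannot arise.
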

\emph{Proof.}  $\qbar\ge q_i/N$ pointwise gives $T\le\log N$, and
$\E[T]=I(i;z)$ is the mixture form of mutual information.  Since
$T_c=T+(-c-T)_+$, the lower bound is immediate.  For the upper bound,
$\E[e^{-T}]=\frac1N\sum_i\int_{\{q_i>0\}}\qbar\le1$, so Markov's
inequality gives $\Pr(-T>u)=\Pr(e^{-T}>e^{u})\le e^{-u}$, and
\[
\E[(-c-T)_+]=\int_0^\infty\Pr(-T>c+s)\,ds\;\le\;\int_0^\infty
e^{-(c+s)}\,ds\;=\;e^{-c}.\qquad\square
\]

\noindent Consequently, if $L_c$ is a $(1-\delta_B)$ lower confidence
bound on $\E[T_c]$ from $M$ i.i.d.\ draws of $T_c\in[-c,\log N]$, then
$\hat I=\max\{0,\,L_c-e^{-c}\}$ satisfies $\hat I\le I(i;z)$ with
probability $\ge1-\delta_B$, and Lemma~\ref{lem:budgets} licenses
$B_{\mathrm{MI}}=\Rbar-\hat I$ at that level.  We use the
empirical-Bernstein inequality in the form of
\citet{audibert2009exploration},
$L_c=\bar T_c-\sqrt{2V_M\ln(3/\delta_B)/M}-3(c+\log N)\ln(3/\delta_B)/M$
with $V_M$ the unbiased sample variance (which only loosens the bound);
the inequality is two-sided at level $\delta_B$ and therefore conservative
one-sided.  At $c=5$ the correction is $e^{-5}=0.0067$ nats.

\begin{lemma}[Certifying with an estimated agreement]\label{lem:aq}
Let $\hat{\mathcal A}_q$ average $S$ posterior draws at each of the $N$
test points, with independent noise for every (point, draw) pair, so that
it is the mean of $NS$ independent $\{0,1\}$-valued variables whose means
average to $\Aq$; then $C_q=[a_-,a_+]=\hat{\mathcal
A}_q\pm\sqrt{\ln(2/\delta_q)/(2NS)}$ contains $\Aq$ with probability
$\ge1-\delta_q$ (Hoeffding).  If $B\ge\KL(\qbar\Vert p)$ with probability
$\ge1-\delta_B$, then
\[
U=\bigcup_{a\in C_q}\{b:\dbin(a\Vert b)\le B\}=\bigl[\,b_-(a_-),\,b_+(a_+)\bigr],
\]
where $b_-(a)\le a\le b_+(a)$ are the two roots of $\dbin(a\Vert b)=B$,
contains $\Ap$ with probability $\ge1-\delta_B-\delta_q$.
\end{lemma}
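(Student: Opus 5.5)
The proof has two independent parts, one for each failure event, followed by a union bound. First, the coverage of $C_q$. Under the stated sampling scheme, $\hat{\mathcal A}_q=\frac1{NS}\sum_{i,s}Y_{is}$, where the $Y_{is}=\mathbf 1_{\Agr}(z_{is})$ with $z_{is}\sim q_i$ are independent $\{0,1\}$ variables. Since $\E Y_{is}=q_i(\Agr)$, we get $\E\hat{\mathcal A}_q=\frac1N\sum_iq_i(\Agr)=\Aq$. Hoeffding's inequality for independent bounded, not necessarily identically distributed, variables gives $\Pr(|\hat{\mathcal A}_q-\Aq|>t)\le2e^{-2NSt^2}$. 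Setting $t=\sqrt{\ln(2/\delta_q)/(2NS)}$ shows that $\Aq\in C_q$ except on an event $F_q$ of probability at most $\delta_q$.

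Second, the deterministic containment. On the complement of $F_q$ and of the budget failure event $F_B$, I invoke Theorem~\ref{thm:certification}(b) under the test-split reference distribution; Appendix~\ref{app:mi-lemmas} notes that it holds for any reference distribution used consistently. This gives $\dbin(\Aq\Vert\Ap)\le\KL(\qbar\Vert p)\le B$, so $\Ap\in\{b:\dbin(\Aq\Vert b)\le B\}\subseteq U$ because $\Aq\in C_q$. A union bound over $F_q\cup F_B$ then gives coverage at least $1-\delta_B-\delta_q$. The fact that $\Ap=p(\Agr)$ is a fixed, non-random quantity is what lets the two Monte Carlo events combine this simply.

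Third, the interval form of $U$. For fixed $a\in(0,1)$, the map $b\mapsto\dbin(a\Vert b)$ is convex on $(0,1)$, vanishes at $b=a$, and tends to $+\infty$ at both ends. Its sublevel set at level $B$ is therefore the interval $[b_-(a),b_+(a)]$ bounded by the two roots, with $b_-(a)\le a\le b_+(a)$. The edge cases $a\in\{0,1\}$ are handled by the same convexity, with the root on the boundary side at $0$ or $1$. It remains to show that $b_\pm$ are nondecreasing in $a$, so that the union over $a\in[a_-,a_+]$ equals $[b_-(a_-),b_+(a_+)]$. Monotonicity follows from $\partial_a\dbin(a\Vert b)=\log\frac{a(1-b)}{b(1-a)}$. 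This derivative is negative for $a<b$, so on the branch $b>a$ increasing $a$ lowers the divergence, and the upper root must move up. On the branch $b<a$ the derivative is positive, so increasing $a$ raises the divergence, and the lower root must also move up. The implicit function theorem makes this rigorous wherever the $b$-derivative $\frac{b-a}{b(1-b)}$ is nonzero, which holds at both roots when $B>0$. Connectedness also follows, since each $[b_-(a),b_+(a)]$ contains $a$ and the $a$'s fill $[a_-,a_+]$, so consecutive intervals overlap.

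I expect the main obstacle to be the interval-form step. This is less because of the monotonicity itself and more because of the bookkeeping: clipping $C_q$ to $[0,1]$, treating the degenerate endpoints $a\in\{0,1\}$ and $B=0$, and confirming that the overlap argument really yields a single interval. The probabilistic content is routine once the independence across (point, draw) pairs is granted.
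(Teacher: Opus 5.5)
Your proposal is correct and follows the paper's proof. It uses a union bound over the Hoeffding failure event and the budget failure event, then applies Theorem~\ref{thm:certification}(b) at $a=\Aq\in C_q$, and finally uses the sign of $\partial_a\dbin(a\Vert b)=\log\frac{a(1-b)}{b(1-a)}$ to show that both roots are non-decreasing in $a$, so that the union equals $[b_-(a_-),b_+(a_+)]$. Your extra detail on convexity in $b$, connectedness of the union, and the boundary cases $a\in\{0,1\}$ and $B=0$ fills in steps the paper leaves implicit without changing the argument.
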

\emph{Proof.}  On the event that $\Aq\in C_q$ and $\KL(\qbar\Vert p)\le
B$, of probability $\ge1-\delta_B-\delta_q$ by a union bound,
Theorem~\ref{thm:certification}(b) gives $\dbin(\Aq\Vert\Ap)\le B$, so
$\Ap$ lies in the band at $a=\Aq\in C_q$ and hence in $U$.  Each band is
an interval containing $a$, and both of its endpoints are non-decreasing
in $a$: $\partial_a\dbin(a\Vert b)=\log\frac{a(1-b)}{b(1-a)}$ has the
sign of $a-b$, so raising $a$ raises $\dbin$ at every $b<a$ (the lower
root moves up) and lowers it at every $b>a$ (the upper root moves up);
hence the union of the bands over an interval of $a$ is the interval
stated.  \hfill$\square$

\section{Experimental details and disclosures}
\label{app:experiments}

\subsection{The full audit record}
\label{app:record}
This subsection reproduces the complete account of the audit, including the
preregistered replication, the failure modes the audit detected in our own
pipeline, and the rate-side certificates at scale, of which the main text
reports the highlights.

\paragraph{What the audit detects.}
Four failure modes occur in the models below, each named by the reading that
found it.  (i)~\emph{Generative drift}: $\Aq=0.770\to\Ap=0.525$ on MNIST with
a task probe and a label-free prototype control falling in step (both scored by
the same external classifier, so the second is a sensitivity analysis rather than
independent validation), and
$\mathcal A[\qbar]-\Ap=+0.128$ to $+0.338$ on PPCA, where every quantity is
exact and no estimator can be blamed.  (ii)~\emph{Sampled-channel collapse}: one
encoder map read at posterior \emph{means} occupies all ten codes in $12/12$
replication models ($2.20$--$2.28$ of a possible $2.30$ nats), while read on
\emph{draws} it occupies $\le5$ codes in $11/12$ and $\le3$ in $10/12$---the code
is used, and the coupled channel is still gone.  (iii)~\emph{A map-fitting
artifact in our own audit}, $\Amu-\Aq$ up to $+0.52$, removed in $6/6$ by the
pre-declared control.  (iv)~\emph{An unlicensed budget}, certifying $20/20$
checkpoints it has no right to and excluding the truth in $30/30$, as
Lemma~\ref{lem:budgets} predicts.  The first two are
a property of the trained models, (ii) of the (model, declared map) pair---the
pre-declared control restores $10/10$ codes in $12/12$---and (iii) and (iv) the
auditor's; telling them apart
is what the audit buys.

\paragraph{Generative codebook consistency: a case study and a preregistered replication.}

\textbf{Case study (MNIST, $d=10$).}
Convolutional VAEs (5 seeds, $K=10$); $\cenc$ a diagonal Gaussian mixture
on encoder means, $\cdec$ the Bernoulli--Bregman nearest-prototype rule on
decoded outputs (natural indices, no relabeling);
$10{,}000$ test images, $S=16$ posterior and $20{,}000$ prior draws.  Agreement
survives posterior sampling but degrades under the prior:
$\Aq=0.770\pm0.009$ versus $\Ap=0.525\pm0.007$ ($n=5$), with a task probe
(external classifier, accuracy $0.926$, scored on decoded samples against the
sampled code's majority label) dropping in step ($0.700\pm0.029$ versus
$0.390\pm0.019$) and a label-free prototype control preserving the drop
($0.696\pm0.029\to0.375\pm0.027$)---not a label-map artifact---while
mean and sampled readings agree here ($\Amu=0.771\pm0.009\approx\Aq$).

\textbf{Preregistered replication: a null, reported in full.}
We fixed, before any run, a $2\times2$ grid (FashionMNIST and SVHN,
$d\in\{16,32\}$, 3 seeds each) and the criterion: a cell replicates if
$\Aq-\Ap>0.05$ with the same sign in $3/3$ seeds \emph{and} the task probe drops
in $3/3$; the phenomenon generalizes if $\ge3/4$ cells replicate.  \textbf{Zero of
four cells replicated} (Table~\ref{tab:null}); at FashionMNIST $d=32$ the
direction \emph{reverses} in $3/3$, co-occurring with sampled-channel collapse
(1--2 occupied sampled codes, label accuracy $\approx0.10$).  Under the declared
map the sampled channel is degenerate in $11/12$ of these models, so most cells
never reproduced the case study's precondition---and
the scaled grid below shows that degeneracy, and the null itself, belong
to the declared map rather than to the trained models.

Mean and sampled readings of the same map also diverge on FashionMNIST in $6/6$
models, which the pre-declared refit control removes ($0/6$ survive, $10/10$
codes restored in all $12$; Appendix~\ref{app:experiments}).

\paragraph{The coordinate price, on models.}

On linear-Gaussian models, where Theorem~\ref{thm:price} is exact, a fixed DCT
rotation of the same model moves the mean-field budget from $0.000$ to $0.474$
nats and voids the reconstruction-side certificate for all three map
constructions, while the estimands move by at most $8\times10^{-4}$ and the
$O(d)$-closed full-covariance family is unaffected---Theorem~\ref{thm:price}(a)
and~(b), measured separately on the two sampling laws.  \emph{Trained}
mean-field encoders pay the same price, unequally.  The same models reproduce the
case study's generative drift where every quantity is exact
($\mathcal A[\qbar]-\Ap=+0.128$ to $+0.338$), and the
Theorem~\ref{thm:certification}(b) band contains $\Ap$ in $30/30$ conditions
while being full width at the licensed budget.  Table~\ref{tab:price}, the
trained-encoder arms and the rest of the numbers are in
Appendix~\ref{app:experiments}.

\paragraph{Certify-or-Measure, end to end.}

We run the protocol ($w^*=0.5$, $\delta=0.05$, $n=20{,}000$) on 52 trained
models---20 MNIST screening checkpoints, the 12 models of the preregistered replication above, and 20 two-dimensional checkpoints added under a
disclosed deviation (Appendix~\ref{app:experiments})---always with a
licensed budget, admitting on the exact criterion $B\le B^*(\Aq)$.

\textbf{Under the loosest licensed budget, $\Rbar$, nothing certifies.}  All 52
models abstain and measure.  Where Theorem~\ref{thm:price}(e) binds, the
impossibility is \emph{attained} rather than merely bounded: on the six
FashionMNIST models the measured $\hat I$ sits within $0.01$ nats of $\log N$, so
$\KL(\qbar\Vert p)=\Rbar-\log N$ to that accuracy.  The rest abstain on
$B^*(\Aq)$ (Figure~\ref{fig:bands}, Appendix~\ref{app:loosest}).

\textbf{With the licensed MI budget of Section~\ref{sec:budgets}, it fires.}
On the six SVHN models (the only $d\ge10$ family the floor leaves open):
$B_{\mathrm{MI}}=0.519$--$0.606$ nats against $1.467$--$1.555$ for the
best previously available licensed budget
\citep{kolchinsky2017estimating}, with residual over an unbiased point
estimate of the truth of $0.058$--$0.061$ nats---the concentration term
plus the $0.007$-nat clip correction of Lemma~\ref{lem:clip}.  Calling this a
$2.5$--$3.0\times$ tightening understates it.  By Lemma~\ref{lem:width} no band
is narrower than $1-e^{-B}$, so admission at $w^*{=}0.5$ requires
$B\le\log2=0.693$ \emph{whatever} the agreement: at $1.467$--$1.555$ nats the
previous budget admits nothing narrower than $0.769$, on $0/6$ models, for every
estimator refinement, whereas ours lowers that floor to $\le0.455$ and clears the necessary
condition on $6/6$.  The improvement is categorical, not quantitative---though
necessary is not sufficient, and five of the six still abstain.  The binary-KL
form is load-bearing too: at the same budget and agreement, Pinsker gives width
$0.553$ and Bretagnolle--Huber $0.679$, both abstaining where binary~KL gives
$0.487$ and certifies.  One model
certifies:\ SVHN $d=32$, seed~2,
$B_{\mathrm{MI}}=0.565\le B^*(\hat{\mathcal A}_q{=}0.0216)=0.589$; over the
Hoeffding interval $\hat{\mathcal A}_q\pm0.0021$ ($NS=416{,}512$
independent draws) the certified band is $[0.000,\,0.491]$ (width
$0.491\le w^*$, coverage $1-\delta_B-\delta_q=0.90$), which contains the
realized $\Ap=0.0349$---the first non-vacuous \emph{rate-side} certificate above $d=2$ in
this line of work, on a nonlinear convolutional VAE.  It survives
Bonferroni over the six models ($0.499$) and the per-point
empirical-Bernstein interval ($0.491$), abstaining only under per-point
Hoeffding ($0.503$; Appendix~\ref{app:experiments}).  The flip side, stated
plainly: it fires where the channel is most degenerate---two-percent agreement,
where $B^*$ is loosest---certifying a bounded \emph{failure} of consistency;
three of the other five abstain at
agreement $0.28$--$0.33$, where $B^*=0.16$--$0.18$ is three times
stricter, and two, at agreement $0.03$--$0.06$, abstain on the budget
itself ($B_{\mathrm{MI}}=0.551$, $0.606$ against $B^*=0.460$, $0.551$).

\textbf{Counting both branches.}  The scoreboard above is the rate side alone.
Since $\min_aB^*(a)=B^*(0.5)=0.144$, any budget below $0.144$ certifies at
\emph{every} agreement, and the gap side clears it: the \emph{trained} MLP
encoder of Section~\ref{sec:exp-price} at $\bar\Delta=0.088$ nats certifies
unconditionally, and swept over native rank the same certificate fires in
$75/75$ cells at $d\in\{2,\dots,32\}$ against $0/75$ for a trained \emph{linear}
encoder, with widths growing $0.106\to0.487$ and crossing $w^*$ just above
$d=32$---the high-dimensional limit of this branch, located rather than conceded
(Appendix~\ref{app:nativerank}).  Each branch fires where the theory says it can,
and the protocol is run on the rate side for a licensing reason:
Theorem~\ref{thm:certification}(a) needs a certified \emph{upper} bound on
$\bar\Delta$, hence on $\log p(x)$, and IWAE bounds it from \emph{below}---the
gap-side instance of the direction Lemma~\ref{lem:budgets} rules out.

\textbf{The analyst's half of the verdict.}  The budget is map-free and the
estimand is not, so admission can flip with the declared map---and on our own
positive it does.  Under the pre-declared refit control (Appendix~\ref{app:control})
(same checkpoint, same decoder rule, code map refitted on sampled rather than mean
latents) the certifying model's $\hat{\mathcal A}_q$ moves $0.0216\to0.2996$, so
$B^*$ moves $0.589\to0.174$ and the same $B_{\mathrm{MI}}=0.565$ gives a band of
width $0.773$: abstain.  No reported band is wrong---soundness never used the
map---but the admission did.  Across the $240$ models of
Section~\ref{sec:exp-scale} the same choice moves $\Aq$ by a median $0.565$ over
four declared maps ($239/240$ above $0.1$), and the case study's
posterior-to-prior drop appears in $2$ of $80$ cells when the map is fitted on
posterior \emph{means}, $28$ on posterior \emph{draws} and $0$ on \emph{prior}
draws---largest where the prior sits just outside the fitting law and vanishing
where the prior \emph{is} that law, so the locked $0/4$ used the law under which
the effect is rarest---Theorem~\ref{thm:price}(c)'s asymmetry, on its other axis.

Lemma~\ref{lem:budgets} is not hypothetical: two unlicensed plug-ins
certify $20/20$ and exclude the truth $30/30$ respectively
(Appendix~\ref{app:experiments}), and the data caught only one.  The
protocol thus does what it claims---certify where a licensed budget is
small enough, measure elsewhere, and (Theorem~\ref{thm:price}d,e) say in
advance which will happen---and $w^*$ was not tuned.

\paragraph{The frontier, at scale.}

We re-ran the audit on $240$ fresh models---FashionMNIST, MNIST, SVHN and
CIFAR-10, up to the full training split, $d\in\{16,\dots,64\}$,
$K\in\{10,32\}$, KL weights $\beta$ up to $64$, three seeds---with every reading
fixed beforehand (Appendix~\ref{app:scale}).

\textbf{The dataset floor is predictive, and it is the quantity that matters.}
Theorem~\ref{thm:price}(e) forbids certification once $\Rbar>\log N+\log2$.  That
inequality holds for $66$ of the $240$ models and \emph{none} of them certifies;
the precommit had registered any such certificate in advance as an instrument
defect, not a refutation.
Which side of the line a family falls on is not a
matter of latent dimension: on FashionMNIST $\Rbar-\log N$ reaches $+20.0$ nats and
$36$ models are excluded outright, while on SVHN it is negative throughout, $-9.7$
to $-2.9$, and none is.

\textbf{The protocol certifies broken codebooks, not working ones.}  Overall
$131$ of the $240$ configurations certify, against $1$ of the $6$ SVHN models the
licensed budget was re-audited on above---so that rate
was a property of the families audited.  But $129$ of the $131$ certificates fire
at $\Aq\le0.24$ (median $0.064$), and no certificate clears its own split's base
rate by more than $0.007$: $\gamma(\alpha)\le2.2\times10^{-4}$ nats on
every certificate, so Theorem~\ref{thm:price}(d)'s floor is inactive wherever the
branch fires.  That is the honest shape of the positive result, and
Lemma~\ref{lem:width} says it could not be otherwise: $B^*$ is largest exactly
where agreement is extreme.  What a \emph{working} codebook would require of
$\KL(\qbar\Vert p)$ is quantified in Appendix~\ref{app:window}.

\subsection{Two floors, and which one binds}
\label{app:twofloors}
Which of the two impossibilities binds, on the five audited MNIST checkpoints;
Section~\ref{sec:exp-scale} tests the same partition on $240$ models.

\textbf{Two floors, and which one binds.}  On the five audited MNIST
checkpoints (the case-study models, not the collapsed replication models of
Table~\ref{tab:perseed}) we measure $\alpha$ per Theorem~\ref{thm:price}(d)---%
held-out accuracy of the majority-label predictor on \emph{sampled} codes
against \emph{true} labels---and find $\alpha\in[0.62,0.78]$, so
$\gamma(\alpha)\in[0.81,1.28]$ nats, the chain $\Rbar\ge\gamma(\alpha)$ holding
$5/5$ ($\Rbar\approx27$--$28$), and a floor on any rate-budget band of
$1-e^{-\gamma}\in[0.55,0.72]$.  The two impossibilities are not duplicates but a
partition, and here (e) is the operative one: $\Rbar-\log N=17.99$--$18.72$
nats, $14.1$--$23.2$ times $\gamma$, so the band cannot beat $1-10^{-8}$.
Theorem~\ref{thm:price}(d) binds in the complementary window
$\gamma\le\Rbar\le\log N+\gamma$---low-rate but still informative codes---which
no model audited here occupies.  Either way abstention is the \emph{correct}
output.

\subsection{The 52-model audit under the rate budget, in full}
\label{app:loosest}
The measurement side of the 52-model audit in Appendix~\ref{app:record}, with the
intervals and coverage counts the body compresses.

All 52 models abstain and measure (Figure~\ref{fig:bands},
Appendix~\ref{app:experiments}): DKW intervals at
$\pm0.0096$, containing an independently recomputed $\Ap$ $20/20$
(screening) and agreeing with freshly seeded
draws within the combined width $12/12$ (replication).
Theorem~\ref{thm:price}(e) explains most of it: on the six FashionMNIST
models $\Rbar-\log N=18.2$--$20.2$ nats, so certification there is
impossible for \emph{any} estimator; nor is the floor loose---the measured
$\hat I=9.201$--$9.208$ against $\log N=9.2103$ sits within $0.01$ nats of
maximum, so $\KL(\qbar\Vert p)=\Rbar-\log N$ to within $0.01$ nats.

\subsection{The scaled frontier grid}
\label{app:scale}

\paragraph{What was fixed before the grid ran.}
The precommit (shipped as \texttt{PRECOMMIT\_SCALE.md}) fixes the instrument at
the values of Section~\ref{sec:exp-e1}---$S=16$, $20{,}000$ prior draws,
the diagonal-GMM encoder map with \texttt{reg\_covar}$=10^{-4}$, the
decoder-side Bregman prototype map, Adam at $10^{-3}$, batch $256$, 40 epochs,
the licensed budget with $M=20{,}000$, $c=5$, $\delta=0.05$, and the seed offsets
$5000/41{,}000/53{,}000/51{,}000/91{,}000+100d+s$---and varies the dataset, the
training size, $d$, $K$ and $\beta$, with $\beta$ the only training setting moved
away from the paper's value.  Stage~1 (FashionMNIST and SVHN, $d\in\{16,32\}$,
$K=10$, six values of $\beta$, three seeds; $144$ configurations) ran in full.
Stage~2 (CIFAR-10 and MNIST at full size) was planned at $288$ configurations
($d\in\{8,16,32,64\}$, $K\in\{10,32,128\}$, $\beta\in\{1,4,16,64\}$, three
seeds) and is incomplete: $96$ ran before the deadline (CIFAR-10 with
$d\in\{16,32,64\}$ and $K\in\{10,32\}$; MNIST with $d\in\{16,32\}$ and
$K=10$), and the refined $\beta$ sweep planned as Stage~3 was not run.  As the
precommit requires, every completed row is reported.  Six readings were written down first.  One of them, (P2), asked
whether the certify/abstain boundary coincides with $B_{\mathrm{MI}}=B^*(\Aq)$; it
does by construction, since the decision is computed from the band that curve
defines, so it carries no information and we record only that it was registered.
The other five follow.  \textbf{(P1)} No
configuration with $\Rbar>\log N+\log2$ may certify; a violation is an
instrument bug, not a result.  \textbf{(P3)} If every certifying model has
$\alpha$ within $0.02$ of chance, certification is bought only with
uninformative codes and we report that as the frontier; if some certifying model
has $\alpha\ge2/L$, the certify branch reaches useful codes and we report the
region.  \textbf{(P4)} We report the spread of $\Aq$ over the four declared maps
as an interval and make no claim that one map is correct.  \textbf{(P5)} We
report the posterior-to-prior gap under each of the three fitting laws,
stratified by $\alpha$, as a rate across a new population.  Both directions of
every reading were publishable when they were written.

\paragraph{Governance.}
These are $240$ \emph{new} models across four datasets, reported as a separate
population.  They do not re-open the locked $2\times2$ replication: that verdict
was fixed on the declared map and stands at $0/4$.  The grid is also not a
bit-level reproduction of the original runs---the training loop is GPU-resident
and \texttt{sklearn}'s EM reaches a different optimum---so on the paper's own
configuration (FashionMNIST, $d{=}16$, $\beta{=}1$, seed $0$, $15{,}000$ images)
the grid gives $\Rbar=26.8$ against $27.8$, $\alpha=0.184$ against $0.229$,
$\gamma=0.033$ against $0.071$ and $\Aq=0.154$ against $0.251$.  Within the grid
every configuration is deterministic: a same-seed rerun reproduces all $65$
substantive columns bit for bit after fixing cuDNN, the shuffling generator and
\texttt{sklearn}'s thread count, and eighteen configurations that a
double-writer accident ran twice agreed on every substantive column.  Two
instrument defects found after the first pass---a probe classifier fitted on the
wrong set, and that double-writer accident---are recorded as a dated addendum to
the precommit; the affected rows were discarded and recomputed.  The probe is the
declared one, fitted once per dataset on the whole training split, with test
accuracy $0.9264$ (MNIST), $0.8428$ (FashionMNIST), $0.6145$ (SVHN) and $0.4455$
(CIFAR-10); on CIFAR-10 it is weak enough that we draw no probe-dependent
conclusion there.

\paragraph{Results against those readings.}
\textbf{(P1)} Per dataset, the configurations above $\log N+\log2$ and the
certificates among them are $36/72$ and $0$ (FashionMNIST, threshold $9.9035$
nats), $0/72$ and $0$ (SVHN, threshold $10.8602$), $12/24$ and $0$ (MNIST,
$9.9035$) and $18/72$ and $0$ (CIFAR-10, $9.9035$): $66$ excluded in advance,
$0$ violations.  \textbf{(P3)} Certificates number $13$, $58$, $6$ and $54$, and
$131$ in all; the highest $\alpha$ any certificate reaches is $0.1051$ ($\beta=32$,
$\Aq=0.056$), $0.1959$ ($26$ tied certificates at $\beta\in\{2,\dots,32\}$,
$\Aq=0.026$--$0.172$), $0.1003$ ($\beta=64$, $\Aq=0.067$) and $0.1063$
($\beta=4$, $\Aq=0.073$), against majority-class rates of $0.1000$, $0.1959$,
$0.1135$ and $0.1000$ for the same splits---so no certified code beats its own
split's base rate by more than $0.007$, $\gamma\le2.2\times10^{-4}$ nats
throughout, and the $2/L$ bar registered as ``useful'' is unreached everywhere.
The first registered reading therefore holds on FashionMNIST, MNIST and CIFAR-10
in the strict form (within $0.02$ of base rate) and on SVHN in the form that
matters (at the base rate exactly).  \textbf{(P4)} Map-sensitivity spread has
median $0.565$ over the $240$ models, exceeds $0.1$ in $239$ of them, and reaches
$0.921$.  \textbf{(P5)} Over the $80$ cells, the posterior-to-prior gap clears
$0.05$ with the same sign in $3/3$ seeds in $2$ cells under the mean-fitted map,
$28$ under the posterior-draw-fitted map and $0$ under the prior-draw-fitted map;
the probe half of the criterion is available for the declared map only, which is
an asymmetry in this comparison.  \textbf{(P6)} Positive rate $131/240=0.546$, and $129$ of the $131$ at $\Aq\le0.24$.
Per $\beta$, no configuration certifies at $\beta=1$; FashionMNIST certifies
$1/12$ at $\beta=16$ and $12/12$ at $\beta=32$, SVHN $10/12$ at $\beta=2$ and
$12/12$ at every larger $\beta$, MNIST $6/6$ at $\beta=64$ only, and CIFAR-10
$18/18$ at every $\beta\ge4$.  Per-configuration rows are in
\texttt{s1\_*.csv} and \texttt{s2\_*.csv}; the readings are recomputed by
\texttt{analyze\_scale.py}.

\paragraph{How the families reach the certifying region.}
On FashionMNIST, MNIST and CIFAR-10 the rate is far above $\log N$ at $\beta=1$
and admission has to be bought by raising $\beta$ until the code collapses; the
four FashionMNIST models with $\alpha\ge0.12$ all sit at $\beta=1$ with
$\Rbar\approx28$ nats and certify $0/4$.  On SVHN the rate is below $\log N$ at
every $\beta$, so admission is available throughout---but the codes are collapsed
there too, occupying one to five of $K=10$ sampled codes, and the highest $\alpha$
a certifying model reaches is exactly the split's majority-class frequency.  The
two routes differ; the destination does not.

\paragraph{Claim D: code-addressed generation.}
The interface is: name a class $c$, look up the codes whose declared encoder-side
semantics is $c$, draw from that component of the fitted map, decode, and let an
external classifier---trained on data, never on codes---judge.  The reading we
registered was the Spearman correlation of that accuracy with $\Aq$, with $\Ap$
and with five items of Definition~\ref{def:package}, over the whole grid, with
both outcomes publishable.  \emph{The registered analysis came out negative and we
report it as such}: on FashionMNIST no predictor reaches
$|\rho|=0.27$---$\Ap$ $-0.08$, $\Aq$ $-0.20$, the ELBO $+0.10$, reconstruction
$+0.09$, rate $+0.05$, $I(x;z)$ $+0.06$, occupied codes $+0.22$, usage entropy
$+0.26$---so on this population the coupled quantity is not needed to predict this
interface, and neither is anything else.  That weakens the motivation for this interface.

Two facts about the interface do survive, and they are not correlations.  First,
only $2$ of the $72$ FashionMNIST configurations can address all ten classes at
all (mean coverage $0.811$): the declared map's majority labels cover a strict
subset, so the interface silently refuses classes while the code histogram it is
built from looks healthy.  Second, routing on the encoder's declared semantics is
worse than routing on the decoder's own read of the \emph{same} draw in $72/72$
configurations (mean $-0.096$, worst $-0.240$).  That second gap is the coupling
by construction, and no item of Definition~\ref{def:package} reports it.  Its
measurement degenerates when the map can address only one class---then every code
carries the same majority label and the decoder-routed accuracy is $1$ by
identity---so we compute it only where coverage exceeds one class, which is all
$72$ FashionMNIST cells and excludes $23$ of the SVHN cells.

\emph{Per dataset, as the precommit requires.}  FashionMNIST: no predictor reaches
$|\rho|=0.27$.  SVHN: none reaches $0.13$.  MNIST ($n=24$): every predictor does,
and $I(x;z)$ $(+0.53)$ and the ELBO $(+0.51)$ lead $\Ap$ $(+0.49)$ and $\Aq$
$(+0.39)$.  CIFAR-10: the rate term $(+0.37)$, the ELBO $(+0.36)$ and $I(x;z)$
$(+0.36)$ lead while $\Ap$ and $\Aq$ sit at $-0.02$; the probe there is weak
($0.4455$ test accuracy), which is a reason to discount that family but not to drop
it from the report.  Agreement never leads the marginal diagnostics by more than
$0.02$ on any of the four, and on the two families where anything predicts the
interface a marginal diagnostic leads.  The registered second branch therefore
fires: for this interface the coupled quantity is not needed, and we report it as
such.

\emph{Post-hoc, and not evidence}: restricting to the twelve $\beta=1$ cells, the
only ones whose codes carry label accuracy above chance, $\Ap$ predicts the
interface at $\rho=+0.78$ against the ELBO's $+0.06$.  The subset was not
registered, $n=12$, and we report the contrast only because it names the
experiment a properly powered test would run---sweep models at fixed, non-trivial
$\alpha$---not because it establishes anything.  We have tested one interface;
retrieval, latent editing and rate--distortion compression are untested.

\subsection{The gap-side certificate swept over native rank}
\label{app:nativerank}

The stress suite that produced Table~\ref{tab:price} also fits PPCA at native
ranks $d\in\{2,5,10,20,32\}$ (5 folds $\times$ 3 map constructions per rank) with
two amortized encoders, and audits the reconstruction-side certificate of
Theorem~\ref{thm:certification}(a) in every cell.  Table~\ref{tab:nativerank}
reports it.  The trained mean-field MLP encoder certifies in $75/75$ cells, every
band containing the estimand; the trained \emph{linear} encoder certifies in
$0/75$.  The separation is not a threshold artifact: the two budgets differ by
one to two orders of magnitude at every rank ($\bar\Delta=0.040$--$0.230$ against
$2.62$--$18.08$ nats).  Band width grows monotonically with $d$ and reaches
$0.487$ at $d=32$, so on this family the branch runs out just above the rank we
audited---the high-dimensional limit, located rather than asserted.

\begin{table}[t]
\centering
\footnotesize
\setlength{\tabcolsep}{5pt}
\begin{tabular}{lccccc}
\toprule
& \multicolumn{3}{c}{trained MLP encoder} & \multicolumn{2}{c}{trained linear encoder} \\
\cmidrule(lr){2-4}\cmidrule(lr){5-6}
$d$ & $\gap$ (nats) & band width & certify / cover & $\gap$ (nats) & certify \\
\midrule
$2$  & $0.0395$--$0.0473$ & $0.106$--$0.253$ & $15/15$ \,/\, $15/15$ & $2.620$--$2.653$   & $0/15$ \\
$5$  & $0.0635$--$0.0877$ & $0.177$--$0.344$ & $15/15$ \,/\, $15/15$ & $4.586$--$4.630$   & $0/15$ \\
$10$ & $0.0773$--$0.0959$ & $0.225$--$0.344$ & $15/15$ \,/\, $15/15$ & $6.851$--$6.928$   & $0/15$ \\
$20$ & $0.1172$--$0.1654$ & $0.296$--$0.428$ & $15/15$ \,/\, $15/15$ & $10.838$--$10.994$ & $0/15$ \\
$32$ & $0.1483$--$0.2296$ & $0.359$--$0.487$ & $15/15$ \,/\, $15/15$ & $17.491$--$18.083$ & $0/15$ \\
\bottomrule
\end{tabular}
\caption{\textbf{The gap-side certificate separates a trained mean-field MLP
encoder from a trained linear one at every native rank from $2$ to $32$---$75/75$
against $0/75$---and its width locates where the branch expires.}  PPCA, 5 folds
$\times$ 3 map constructions per rank; ranges are over the $15$ cells.  ``certify''
counts cells with band width $\le w^*=0.5$; ``cover'' counts cells whose band
contains the estimand $\mathcal A[\bar p_\theta]$.}
\label{tab:nativerank}
\end{table}

\paragraph{Two disclosures that belong with Table~\ref{tab:nativerank}.}
First, the criterion.  The precommit of that suite locked a \emph{stricter}
admission conjunction for its own utility claim---exact test
$\gap\le0.05$ nats, Monte-Carlo-expanded certified lower endpoint $\ge0.80$,
certified degradation $\le0.10$, and an independent-RNG overlap---and recorded
those gates as failing $0/5$ at canonical $d=10$ and at padded $D=32$.  We
therefore report both readings.  Under this paper's declared $w^*=0.5$ the
certificate fires $75/75$.  Under the stricter locked conjunction the picture at
$d=2$ splits by declared map, and the split is the paper's own thesis.  On the
unsupervised $k$-means and natural-Euclidean maps every certificate clause passes
$5/5$ ($\gap=0.0395$--$0.0473$, lower endpoint $0.8531$--$0.8875$, degradation
$0.0797$--$0.0995$, independent overlap $5/5$) and the model is rejected only by
that suite's nondegeneracy \emph{screen}, which demands $\ge5\%$ disagreement while
these channels disagree on $3.33$--$4.91\%$ of draws---the certificate passed and
the screen rejected the model for being too healthy to be interesting.  On the
Mahalanobis map, the one that \emph{does} clear the screen $5/5$, the same model
fails the lower-endpoint and degradation clauses $0/5$ ($0.6228$--$0.6473$ and
$0.1416$--$0.1550$).  The analyst's map decides both whether the channel looks
degenerate and whether the certificate passes, on one checkpoint, at one rank.  At $d\ge5$ the clauses themselves fail as $\gap$
grows past $0.05$ nats.  The native ranks were pre-declared in that precommit as
secondary controls and carry no prospective pass gate; we report them as a
sweep, not as a gated claim.  Second, the licensing.  The budget here is $\gap$
itself, and IWAE bounds $\log p(x)$ from \emph{below}, so outside models with
tractable or upper-boundable evidence there is no licensed route to this
branch---the gap-side instance of Lemma~\ref{lem:budgets}.  That is why the
protocol of Section~\ref{sec:protocol} is run on the rate side.

\subsection{The coordinate price, on models, in full}
\label{app:price-models}
The full version of Section~\ref{sec:exp-price}, whose body text states the
conclusions and defers the table and the trained-encoder arms to here.

Linear-Gaussian models (probabilistic PCA, PPCA), $d=10$, 5 folds, $10{,}000$ test points,
$S=64$---the regime where Theorem~\ref{thm:price} is exact.  We compare the
canonical basis against a fixed discrete-cosine-transform (DCT) rotation
of the same model (likelihood, decoder, and all three map constructions
transported).
Theorem~\ref{thm:price}(b) predicts estimands do not move, (a) that the
mean-field certificate pays; both hold, measured on the two sampling laws
separately.

\textbf{Reconstruction side (Table~\ref{tab:price}a).}  The estimand is the
exact-posterior agreement $\mathcal A[\bar p_\theta]=1-\bar\eta_p$, the budget
the mean-field gap $B_Q$---Theorem~\ref{thm:certification}(a) and
Theorem~\ref{thm:price}(a) currency.  The budget rises from $0.000$ to
$0.474$ nats under the rotation and the certificate collapses, for all
three maps, while the estimand moves $\le5\times10^{-4}$.  Two controls: the
\emph{full-covariance} family---closed under $O(d)$, hence inside
Theorem~\ref{thm:price}(b)---certifies identically in both bases; and
\emph{trained} mean-field encoders pay too, unequally (an MLP encoder goes
$0.088\to0.673$ nats, while a linear one is dominated by amortization error
\citep{cremer2018inference} in \emph{both} bases, $6.897\to7.476$ nats, so
there the rotation is not what destroys the certificate).

\textbf{Generative side (Table~\ref{tab:price}b).}
Theorem~\ref{thm:certification}(b)'s estimand is agreement under \emph{prior}
draws, so we measure it directly ($200{,}000$ draws from $N(0,I_{10})$ per
condition).  It is basis-invariant to $\le8\times10^{-4}$, matching
Theorem~\ref{thm:price}(b); PPCA shows the \emph{same} generative drift as
the MNIST case study in a model where every quantity is exact
($\mathcal A[\qbar]-\Ap = +0.128$, $+0.156$, $+0.338$ across maps); and the
certificate is sound but vacuous---at the licensed budget
$\Rbar\approx20.6$--$21.1$ nats the band contains $\Ap$ in $30/30$
conditions with width $1.000$ throughout.

\begin{table}[t]
\centering
\footnotesize
\setlength{\tabcolsep}{4.5pt}
\begin{tabular}{lcccc}
\toprule
\textit{(a)} map (canonical $\to$ DCT) & $\Aq$ & $\mathcal A[\bar p_\theta]$ & budget $B_Q$ & band width \\
\midrule
natural-Euclidean & $0.9016\!\to\!0.9013$ & $0.9019\!\to\!0.9017$ & $0.000\!\to\!0.474$ & $0.000\!\to\!0.559$ \\
Mahalanobis       & $0.7869\!\to\!0.7864$ & $0.7870\!\to\!0.7869$ & $0.000\!\to\!0.474$ & $0.000\!\to\!0.677$ \\
$k$-means         & $0.9076\!\to\!0.9074$ & $0.9076\!\to\!0.9072$ & $0.000\!\to\!0.474$ & $0.000\!\to\!0.551$ \\
$k$-means, full-cov.\ $q$ & $0.9073\!\to\!0.9074$ & $0.9074\!\to\!0.9074$ & $0.000\!\to\!0.000$ & $0.000\!\to\!0.000$ \\
\midrule
\textit{(b)} map (canonical $\to$ DCT) & $\mathcal A[\qbar]$ & $\Ap$ & drift & budget $\Rbar$ \\
\midrule
natural-Euclidean & $0.9014\!\to\!0.9006$ & $0.7738\!\to\!0.7731$ & $+0.128$ & $21.1\!\to\!20.6$ \\
Mahalanobis       & $0.7863\!\to\!0.7855$ & $0.4484\!\to\!0.4491$ & $+0.338$ & $21.1\!\to\!20.6$ \\
$k$-means         & $0.9076\!\to\!0.9072$ & $0.7512\!\to\!0.7518$ & $+0.156$ & $21.1\!\to\!20.6$ \\
\bottomrule
\end{tabular}
\caption{\textbf{A change of basis alone moves the mean-field budget to
$0.474$ nats and voids the reconstruction-side certificate for every map,
while the estimands move by at most $8\times10^{-4}$ and the $O(d)$-closed
full-covariance family is unaffected---the asymmetry of
Theorem~\ref{thm:price}(c).}  PPCA models, means over 5 folds.
(a)~\emph{Reconstruction side} (optimal diagonal $q$ unless noted): the
estimand moves $\le5\times10^{-4}$.  (b)~\emph{Generative side}: $\Ap$ from
$200{,}000$ true prior draws (s.e.\ $\le0.002$); the estimand is invariant
under the rotation to $\le8\times10^{-4}$ (Theorem~\ref{thm:price}b),
generative drift $\mathcal A[\qbar]-\Ap$ is large in an exactly-solvable
model, and the Theorem~\ref{thm:certification}(b) band contains $\Ap$ in
$30/30$ conditions but is full-width at the licensed budget.}
\label{tab:price}
\end{table}

\begin{figure}[t]
\centering
\includegraphics[width=\linewidth]{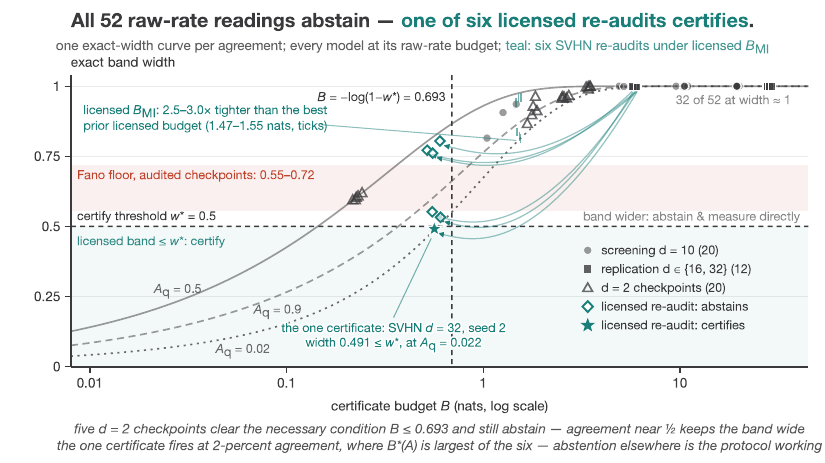}
\caption{When the certificate can fire.  Curves: exact band width, a
function of budget \emph{and} observed agreement; markers: all 52 protocol models, every one abstaining (the 20 MNIST screening
models are placed at the direct estimate of $\KL(\qbar\Vert p)$ from their
audit; at their $\Rbar$ budgets their widths are $\ge0.999$);
certifying needs a point below $w^*$.  The vertical line is
Lemma~\ref{lem:width}'s \emph{necessary} condition $B\le-\log(1-w^*)$---not
sufficient: five $d=2$ checkpoints clear it and still abstain, agreement
near $\tfrac12$ where the band is widest.  Shaded strip: Fano floor
$1-e^{-\gamma}$ on the audited checkpoints (Theorem~\ref{thm:price}d).
Teal: the six SVHN models re-audited under the licensed MI budget of
Section~\ref{sec:budgets}---$2.5$--$3.0\times$ tighter than the best
previously available licensed budget (ticks; $\Rbar$ minus the
Kolchinsky--Tracey bound on $I(x;z)$,
\citealp{kolchinsky2017estimating})---of which exactly one
(star) certifies; the six widths are those of the certified bands over the
$\hat{\mathcal A}_q$ interval of Lemma~\ref{lem:aq}.}
\label{fig:bands}
\end{figure}

\paragraph{Pipelines.}
Case study and replication: convolutional VAEs ($1{\times}28{\times}28$
for MNIST/FashionMNIST; a $3{\times}32{\times}32$ variant for SVHN);
$\beta=1$; declared likelihoods: Bernoulli-family for near-binary images
(continuous-Bernoulli for FashionMNIST, \citealp{loaiza2019continuous})
and Gaussian with fixed $\sigma=1$ for SVHN (declared before the runs, not
tuned); $K=10$; encoder map = diagonal GMM on encoder means of a
stratified 15{,}000-image train subset (per seed); decoder map =
Bregman nearest prototype in decoder-output space (Type-1 Bernoulli KL,
or squared Euclidean for the Gaussian case); natural indices; full test
splits; $S=16$; $20{,}000$ prior draws.  PPCA suite: closed-form
posteriors, 5 disjoint folds, three map constructions
(natural-Euclidean class means, regularized Mahalanobis, unsupervised
$k$-means), mean-field families in the canonical and DCT bases plus the
full-covariance control, $S=64$.

\paragraph{Per-seed replication data.}
Table~\ref{tab:perseed} gives every model of the replication grid.  Two
regime facts belong with it.  First, the declared SVHN likelihood
(Gaussian, $\sigma$ fixed to $1$) is a weak reconstruction term on these
data: the resulting rates are $\Rbar\approx5.7$--$6.1$ nats against
$27$--$29$ nats on FashionMNIST, i.e.\ the SVHN cells sit in an effectively
high-$\beta$ regime with near-collapsed sampled channels ($1$--$3$ occupied
sampled codes).  We report this as a consequence of a preregistered choice,
not as a tuned outcome; a continuous-Bernoulli or learned-$\sigma$ variant
would be a \emph{new} protocol under a new precommit, not a repair of this
one.  Second, sampled-code label accuracy on the $d{=}32$ cells is at or
near chance ($\alpha\approx0.10$ for ten classes), so the Fano floor
$\gamma(\alpha)$ is vacuous there---the codebooks in those cells carry
almost no label information, which is itself the reason the task-probe
clause cannot be met.

\begin{table}[t]
\centering
\small
\begin{tabular}{llll}
\toprule
cell & $\Aq-\Ap$ per seed & task probe post$\to$prior & verdict \\
\midrule
Fashion $d{=}16$ & $+0.109$ / $+0.034$ / $+0.030$ & falls $2/3$, rises $1/3$ & null \\
SVHN $d{=}16$    & $+0.033$ / $-0.001$ / $+0.068$ & rises $3/3$ & null \\
Fashion $d{=}32$ & $-0.035$ / $-0.033$ / $-0.065$ & rises $3/3$ & null (reversed) \\
SVHN $d{=}32$    & $-0.010$ / $+0.027$ / $-0.013$ & rises $3/3$ & null \\
\bottomrule
\end{tabular}
\caption{Preregistered replication of the case-study phenomenon: all four
cells fail the pre-declared criterion (gap $>0.05$ of one sign \emph{and}
probe drop, each in $3/3$ seeds).  SVHN probe columns use the precommitted
10-epoch evaluator (disclosed deviation, this appendix);
verdicts are identical under either evaluator.  Per-seed values:
Table~\ref{tab:perseed}.}
\label{tab:null}
\end{table}

\begin{table}[t]
\centering
\small
\begin{tabular}{llcccccccc}
\toprule
data & $d$ & seed & $\Rbar$ & $\Aq$ & $\Ap$ & $\Amu$ & occ. & $\alpha$ & $\gamma(\alpha)$ \\
\midrule
Fashion & 16 & 0 & 27.84 & 0.251 & 0.142 & 0.631 & 10 & 0.229 & 0.071 \\
Fashion & 16 & 1 & 28.00 & 0.158 & 0.124 & 0.615 & 2 & 0.136 & 0.006 \\
Fashion & 16 & 2 & 27.46 & 0.158 & 0.128 & 0.680 & 5 & 0.171 & 0.024 \\
Fashion & 32 & 0 & 27.70 & 0.098 & 0.132 & 0.467 & 1 & 0.100 & $\le0$ \\
Fashion & 32 & 1 & 28.56 & 0.126 & 0.158 & 0.524 & 2 & 0.105 & 0.000 \\
Fashion & 32 & 2 & 29.41 & 0.170 & 0.235 & 0.551 & 1 & 0.100 & $\le0$ \\
SVHN & 16 & 0 & 6.07 & 0.334 & 0.301 & 0.142 & 2 & 0.179 & $\le0$ \\
SVHN & 16 & 1 & 5.91 & 0.062 & 0.063 & 0.168 & 3 & 0.069 & $\le0$ \\
SVHN & 16 & 2 & 5.98 & 0.333 & 0.265 & 0.098 & 1 & 0.067 & $\le0$ \\
SVHN & 32 & 0 & 5.77 & 0.032 & 0.043 & 0.160 & 2 & 0.194 & $\le0$ \\
SVHN & 32 & 1 & 5.72 & 0.282 & 0.255 & 0.211 & 1 & 0.067 & $\le0$ \\
SVHN & 32 & 2 & 6.05 & 0.022 & 0.035 & 0.163 & 1 & 0.078 & $\le0$ \\
\bottomrule
\end{tabular}
\caption{\textbf{In $9/12$ models the sampled codes carry no usable label
information at all, and the case study's precondition---an informative,
non-degenerate coupled channel---was never reproduced here}; the $\gamma$
column is the sharpest summary of the regime.  Every model of the
preregistered replication grid (no selection).  $\Amu$ is the mean reading;
``occ.''\ counts occupied sampled codes above $1\%$ mass; $\alpha$ is
held-out accuracy of the majority-label predictor on \emph{sampled} codes
and $\gamma(\alpha)$ the Fano floor it implies.  Task-probe values, which
depend on the external evaluator, are in the ledger for both evaluators.}
\label{tab:perseed}
\end{table}

\paragraph{The padded arm (outside the $O(d)$ statement).}
Padding the latent space from $d=10$ to $32$ before rotating is not an
$O(d)$ action, and Theorem~\ref{thm:price}(b) does not cover it.
Empirically the pad-only arm behaves like the canonical one (budget
$0.000$, band width $0.000$, $\mathcal A[\bar p_\theta]=0.9075$), while
pad-then-rotate drives the mean-field budget to $29.4$ nats and the band to full
width, with $\mathcal A[\bar p_\theta]=0.9074$ still unmoved; the
full-covariance control certifies in every one of these arms (budget $0.000$).
Prior-draw agreement was not measured on the padded arms, so this arm speaks to
Theorem~\ref{thm:certification}(a) only.  We report the padded arm as a labelled
stress case, not as evidence for the theorem.

\paragraph{Preregistration and its three disclosed deviations.}
The replication grid and the protocol run were specified in a precommit
document---model sets, readings, verdict criteria---before execution
(reproduced verbatim in Figure~\ref{fig:precommit}).  Three departures
are disclosed here: two found afterwards by an adversarial audit of this
manuscript, and one introduced by that audit's own correction.

\begin{figure}[t]
\begin{tcolorbox}[precommitbox, title={\texttt{PRECOMMIT\_W2.md}:
precommitted readings for the replication grid (verbatim)}]
\begin{verbatim}
- R-repro: the posterior-prior contrast reproduces in a cell if A_q - A_prior > 0.05
  with the same sign in 3/3 seeds AND the task probe drops under the prior arm with
  the same sign in 3/3 seeds. Report per-cell verdicts; the flagship claim survives if
  >= 3 of 4 (dataset x d) cells reproduce.
- R-null: cells that do not reproduce are reported as nulls, verbatim, and scoped into
  the paper (the finding is then dataset/dimension-conditional).
- Mean-vs-posterior arm is expected null (as at MNIST d=10); reported either way.
- ALL cells and seeds reported; numbers only from per_model CSVs.
\end{verbatim}
\end{tcolorbox}
\begin{tcolorbox}[precommitbox, title={\texttt{PRECOMMIT\_LICENSED\_MI.md}:
fixed inputs and readings for the licensed MI budget (verbatim)}]
\begin{verbatim}
## Fixed before running (no tuning afterwards)
- M = 20000 samples, c = 5 nats, delta = 0.05, w* = 0.5.
- Models: all 12 E1-ext checkpoints; SVHN reported first (the open family).
- Sampling seed 91_000 + 100*d + seed.
- A_q is read from e1ext_per_model.csv; the admission threshold is the exact
  B*(A_q), **not** the 0.693 limit.
## Readings, fixed before running
- R-certify: a model certifies iff B_MI <= B*(A_q). Report every model.
- R-progress: if no model certifies but B_MI <= 0.8 on some model, the
  estimator route is working and the next step is a tighter concentration
  (more samples) or a better clip, not a new method.
- R-stop: if B_MI > 1.4 everywhere -- i.e. no better than the
  Kolchinsky-Tracey bound already measured -- the MI-lower-bound route is
  abandoned and pillar A stops. The paper then reports the impossibility
  framing (plan risk 1).
- Whatever happens, the split-half plug-in estimate of the truth is reported
  alongside, labelled as an estimate and never as a budget.
- w* is not re-tuned under any outcome (lessons #16).
\end{verbatim}
\end{tcolorbox}
\caption{The precommitted readings, reproduced verbatim from the two
precommit documents in the supplement (non-ASCII glyphs transliterated: $\ge$, $\delta$, $\times$, $\cdot$, dashes;
Markdown backticks, blank lines and the bold markers around the reading
names removed).  The replication readings
(top) fixed the verdict criteria of the replication in Appendix~\ref{app:record}
before the
runs; the licensed-budget readings (bottom) fixed $M$, $c$, $\delta$,
$w^*$, the seeds, and the certify/progress/stop readings before the run of
Section~\ref{sec:exp-protocol}.  The clip-tail correction and the
$\hat{\mathcal A}_q$ interval of Appendix~\ref{app:mi-lemmas} changed none
of these inputs; their dated addendum sits in the same document
(deviation D3).}
\label{fig:precommit}
\end{figure}

\emph{(D1) Evaluator epochs.}  The SVHN external evaluator was trained for
12 epochs where the precommit specified 10.  We retrained it at exactly 10
epochs and recomputed every classifier-dependent reading on the saved
checkpoints.  All four cell verdicts are unchanged under either evaluator:
the agreement-gap clause is classifier-independent and fails on its own in
all four cells, and the probe clause also fails at 10 epochs, with the
probe \emph{rising} post$\to$prior in $6/6$ SVHN models.
Table~\ref{tab:null} quotes the 10-epoch (precommitted) readings.
Absolute probe values are evaluator-sensitive and are never quoted here as
evaluator-free quantities.

\emph{(D2) Protocol population.}  The precommit fixed the
Certify-or-Measure inputs as the two screening families (jointly, the 20
MNIST screening checkpoints of Appendix~\ref{app:record}) plus the
replication models.  The first run omitted the replication models; we have
since run the protocol on all twelve of them (reported in Appendix~\ref{app:loosest}: abstain $12/12$, fresh-draw consistency
$12/12$).  Separately, the 20 two-dimensional checkpoints were \emph{not}
in the precommit and are labelled in the text as a post-hoc extension;
under licensed budgets they all abstain, and they serve to probe the
certify boundary and to demonstrate Lemma~\ref{lem:budgets}'s unlicensed
failure modes.  Under the budgets fixed in the original precommit, the
preregistered population abstains everywhere; the one certificate that
fires (Section~\ref{sec:exp-protocol}) does so on a preregistered model
but under the licensed MI budget, whose estimator, clip, seeds, and
readings were fixed in a separate, later precommit before that run.  We
consider this the honest reading and have not restated either as an
original preregistered result.

\emph{(D3) The agreement interval.}  The precommit's certify reading
compared $B_{\mathrm{MI}}$ with $B^*$ at the point estimate
$\hat{\mathcal A}_q$ read from the evaluation run.  The interval treatment
of Lemma~\ref{lem:aq}---$\delta_q=0.05$, Hoeffding over the $NS$ posterior
draws as the primary interval, the reading under which the test split is
the reference distribution---was added after the run, on 2026-09-08, in
the same dated addendum as the clip correction; the per-point variants
were computed at the same time and are reported above, including the one
under which the model abstains.

A further minor issue (one verification script overwrote its output file
instead of appending) was fixed without data loss.  All are recorded in
the results ledger shipped with the supplement.

\paragraph{The mean-vs-sampled divergence and its control, in full.}\label{app:control}
The replication runs show the \emph{mean} and \emph{sampled} readings of
the same encoder map diverging on FashionMNIST in $6/6$ models
($\Amu-\Aq\in[+0.37,+0.52]$), where at MNIST $d=10$ they agreed.  Under
the pre-declared control (refit the same family on \emph{sampled} latents,
change nothing else) the divergence collapses to $[+0.002,+0.012]$ on
FashionMNIST---$0/6$ survive---and to $[+0.007,+0.119]$ on SVHN: a
map-fitting artifact.  The control also removes the sampled-channel
collapse (all 12 models then use $10/10$ codes), so the degeneracy
confounding the null belongs to the \emph{declared map construction}, not
the trained models---that does not revive the null, but it names what a
revised protocol must change.
Under the mean-fitted map, mean-side code usage never collapses ($10/10$
occupied codes in all 12 models, usage entropy $2.20$--$2.28$ nats) while
sampled-side usage collapses in $11/12$ models to at most five codes ($10/12$
to at most three).  The control refits the same mixture family, with the same
$K$, initialization seed and regularizer, on one sampled latent per training
example (seed series $71{,}000+100d+s$), leaves the checkpoint, the decoder
\emph{rule}, the test split and the test draws untouched---the decoder
prototypes are re-derived from the refitted mixture's means, exactly as the main
run derives them, so one analyst choice moves both readers---and
recomputes both readings.  Per-model results are in the supplement; the
summary is that the divergence collapses to $\le0.012$ on all six
FashionMNIST models and to $\le0.119$ on SVHN, and that every one of the 12
models then uses all $10$ codes on sampled latents.  We had pre-declared that
survival in $\ge5/6$ FashionMNIST models would keep the finding; $0/6$
survive, so the finding is downgraded to a statement about map transfer.

\paragraph{Two live demonstrations that budget validity is load-bearing.}
A grid plug-in of $\KL(\qbar\Vert p)$ certifies all 20 $d=2$ checkpoints with
band widths $0.061$--$0.493$, and every band happens to contain the realized
$\Ap$; under the licensed $\Rbar$ the same checkpoints certify $0/20$.  A
binned KL is a data-processing lower bound, so that budget was never licensed
and the agreement of its bands with the truth was luck.  A moment-matched
Gaussian plug-in on the PPCA panel gives bands of width $0.111$--$0.181$ that
\emph{exclude} the realized $\Ap$ in $30/30$ conditions; there the error was
visible in the data.  Both are instances of Lemma~\ref{lem:budgets}.

\paragraph{The admissible agreement window, and where the shortfall lives.}\label{app:window}
Because $B^*$ is strictly decreasing on $[0,1/2]$, a budget caps the agreement it
can ever admit, and the cap is computable.  At the certifying model's
$B_{\mathrm{MI}}=0.565$ the protocol can admit only $\Aq\le0.0281$ or,
symmetrically, $\Aq\ge0.9719$; at the best
of the six, $0.519$, only $\Aq\le0.0420$.  Certifying the case study's own
$\Aq=0.770$ would need $B\le0.2073$, and $\Aq=0.5$ needs $B\le0.1438$.  The
residual between $B_{\mathrm{MI}}$ and an unbiased point estimate is
$0.058$--$0.061$ nats, so even a perfect estimator of $I(x;z)$---budget
$0.565-0.058=0.507$---would admit only $\Aq\le0.0459$.  The remaining gap is
therefore not estimator-side: rate-budget certification of generative codebook
consistency at useful agreement needs aggregate-posterior mismatch below roughly
$0.14$--$0.21$ nats, which is a target for the prior-flexibility line
\citep{tomczak2018vampprior,bauer2019resampled} rather than for a sharper bound on
$I(x;z)$.  The gap side is not subject to this cap: since
$\min_aB^*(a)=B^*(0.5)=0.1438$, the trained MLP encoder of
Section~\ref{sec:exp-price} at $\bar\Delta=0.088$ nats certifies at every
agreement, with band $[0.7335,0.9815]$ (width $0.2480$) at its observed
$\Aq=0.9016$ and width at most $0.4017$ over all agreements.

\paragraph{The licensed MI budget.}
$M=20{,}000$ samples, statistic clipped below at $c=5$ nats,
$\delta_B=0.05$, sampling seeds fixed before the run together with the
readings.  The lower clip raises the mean of the statistic by at most
$e^{-5}=0.0067$ nats (Lemma~\ref{lem:clip}), which is subtracted from the
empirical-Bernstein lower bound; in the samples the clip binds on at most
$3$ of the $20{,}000$ draws per model and the empirical tail mass is below
$10^{-4}$ nats, so the analytic correction is conservative.  The clip and
the empirical-Bernstein constant are the only free choices; the
concentration term plus the correction is $0.058$--$0.061$ nats on the
SVHN models, i.e.\ the whole residual between $B_{\mathrm{MI}}$ and an
unbiased point estimate of the truth.  The precommit stated that the clip \emph{lowers} the
mean; the sign error was found by an adversarial audit after the run, is
recorded as a dated addendum to the precommit, and its correction raised
every budget by $0.0067$ nats without changing any verdict.  The precommit
also promised the split-half plug-in estimate of $\KL(\qbar\Vert p)$
alongside: it is $0.244$--$0.428$ nats on the six SVHN models
(\texttt{kl\_plugin\_split.csv}), below the point estimates
$\Rbar-\bar T_c=0.459$--$0.548$ nats to which the residual above refers;
it evaluates the log-density of a random half of the mixture, which by
Jensen underestimates $\log\qbar$ on average, so it is biased downward
and is quoted as a lower reference, never as a budget.

\emph{The agreement estimate.}  $\hat{\mathcal A}_q$ averages $S=16$
posterior draws at each of the $N=26{,}032$ test points, $NS=416{,}512$
independent indicators; the Hoeffding interval of Lemma~\ref{lem:aq} is
$\pm0.0021$ at $\delta_q=0.05$, and the certified band is the union over
it, width $0.491$ against $0.487$ at the point estimate.  Sensitivity: if
the test points are instead regarded as an i.i.d.\ sample, so that the
draws sharing a point are dependent through it, the empirical-Bernstein
interval on the $N$ per-point means \citep{audibert2009exploration} is
$\pm0.0022$ and the band still certifies (width $0.491$); Hoeffding on
the per-point means ($\pm0.0084$) gives width $0.503$ and would abstain.
These variants change only the agreement interval; the budget remains the
test-split quantity, so under the i.i.d.\ reading they gauge the
verdict's sensitivity to the interval rather than constitute certificates.
At $\delta_B=\delta_q=0.025$ (coverage $0.95$) the width is $0.494$;
under a Bonferroni correction over the six SVHN models (family-wise
$0.90$) it is $0.499$---a certificate with no margin to spare, which is
the honest reading of a two-percent-agreement channel.  All variants are
tabulated in the supplement (\texttt{aq\_interval.csv}).

The comparison baseline---the best previously available licensed budget,
$1.467$--$1.555$ nats---is $\Rbar$ minus the Kolchinsky--Tracey
Bhattacharyya lower bound on $I(x;z)$, computed in closed form for the
Gaussian-mixture $\qbar$ \citep{kolchinsky2017estimating} and licensed by
the same rate decomposition (Lemma~\ref{lem:budgets}); the closed-form
$\log(1+\chi^2(\qbar\Vert p))$ budget was also computed and is looser on
every SVHN model ($8.1$--$15.6$ nats).

\paragraph{Attribution of measured quantities.}
Sampled-code label accuracies use held-out true labels with train-side
majority votes only (the Fano chain of Theorem~\ref{thm:price}(d) requires
the label, not a code-derived proxy); plug-in KL budgets are used only in
the conservative direction established by Lemma~\ref{lem:budgets};
and per-map, per-arm ranges are reported without selection---including the
reversed-direction cells and the mixed-sign SVHN divergence that
accompany the FashionMNIST findings.

\paragraph{Relation to the prior preprint, and attribution of
Theorem~\ref{thm:certification}.}
The coupled channel, the post-processing decomposition and the gap-side
identity and bound of Theorem~\ref{thm:certification}(a), and a
marginal-impossibility result, appear in a prior preprint
\citep{hayashi2026codebook} as its Lemma~7 and Corollary~8 and its
Proposition~4.  That Proposition matches the two code histograms, their
entropies, the occupied-code counts and $I(x;z)$; its Remark~5 records that a
Hungarian-matched score can hide the deranged case.  Relative to it, the
following are new here.  (i)~Definition~\ref{def:package} closes the hole
Remark~5 names and widens the package well past it: the matched pair here also
agrees on \emph{every} permutation-symmetric functional of the joint code
table---including the permutation-matched agreement itself, so no
relabeling-invariant summary separates the pair---and on the \emph{per-example}
reconstruction, rate, ELBO, $\log p(x)$ and variational gap, on the generative
law $p(x)$ as a distribution, on the per-dimension active-unit statistics, on
$\Ap$, and (item~(I$'$)) on all latent moments below a designable order.
(ii)~Theorem~\ref{thm:certification}(b), the rate-side bound, and with it the
two-sidedness, is new.
(iii)~Theorem~\ref{thm:price} in all of its parts, Lemma~\ref{lem:width},
Lemma~\ref{lem:budgets} and the licensed mutual-information budget, the admission budget $B^*(a)$ and the Certify-or-Measure protocol have no
counterpart in the preprint, and neither does any audit reported here.  Two
model populations are shared with it: the five MNIST models of the case study
(Section~\ref{sec:exp-e1}), which are also the $\beta=1$ arm of the 20
screening models, and the 20 two-dimensional checkpoints of
Section~\ref{sec:exp-protocol} are trained models audited in the preprint.
For the two-dimensional checkpoints, the encoder-side agreement and the rate
are taken from the preprint's audit tables, the agreement re-verified here by
recomputation to a relative error below $10^{-7}$; every other quantity
reported for these models is computed here.  The derivations
here are self-contained.

\end{document}